\documentclass{article}

\usepackage{microtype}
\usepackage{graphicx}
\usepackage{subcaption}
\usepackage{booktabs} % for professional tables

\usepackage[pagebackref=true]{hyperref}

\usepackage[accepted]{icml2026}

\usepackage{amsmath}
\usepackage{amssymb}
\usepackage{mathtools}
\usepackage{amsthm}

\usepackage{caption}
\usepackage{subcaption}

\usepackage{cancel}
\usepackage{tabularx}
\usepackage{xurl}
\usepackage{makecell}
\usepackage{siunitx}
\usepackage[section]{placeins}

\usepackage[capitalize,noabbrev]{cleveref}

\theoremstyle{plain}
\newtheorem{theorem}{Theorem}[section]
\newtheorem{proposition}[theorem]{Proposition}

\theoremstyle{definition}
\newtheorem{definition}[theorem]{Definition}

\theoremstyle{remark}
\newtheorem{remark}[theorem]{Remark}

\usepackage[textsize=tiny]{todonotes}

\newcommand{\mbf}[1]{\mathbf{#1}}
\newcommand{\mcal}[1]{\mathcal{#1}}
\newcommand{\optn}[1]{\operatorname{#1}}

\newcommand{\indep}{\mathrel{\perp\!\!\!\perp}}
\DeclareMathOperator{\tr}{tr}
\DeclareMathOperator*{\logsumexp}{logsumexp}

\renewcommand*{\backref}[1]{}
\renewcommand*{\backrefalt}[4]{%
    \ifcase #1 %
        (Not cited.)%
    \or
        (Cited on page~#2.)%
    \else
        (Cited on pages~#2.)%
    \fi}
\icmltitlerunning{Transformed Latent Variable Multi-Output Gaussian Processes}

\begin{document}

\twocolumn[
  \icmltitle{Transformed Latent Variable Multi-Output Gaussian Processes}

  % It is OKAY to include author information, even for blind submissions: the
  % style file will automatically remove it for you unless you've provided
  % the [accepted] option to the icml2026 package.

  % List of affiliations: The first argument should be a (short) identifier you
  % will use later to specify author affiliations Academic affiliations
  % should list Department, University, City, Region, Country Industry
  % affiliations should list Company, City, Region, Country

  % You can specify symbols, otherwise they are numbered in order. Ideally, you
  % should not use this facility. Affiliations will be numbered in order of
  % appearance and this is the preferred way.
  \icmlsetsymbol{equal}{*}

  \begin{icmlauthorlist}
    % \icmlauthor{Firstname1 Lastname1}{equal,yyy}
    % \icmlauthor{Firstname2 Lastname2}{equal,yyy,comp}
    % \icmlauthor{Firstname3 Lastname3}{comp}
    % \icmlauthor{Firstname4 Lastname4}{sch}
    % \icmlauthor{Firstname5 Lastname5}{yyy}
    % \icmlauthor{Firstname6 Lastname6}{sch,yyy,comp}
    % \icmlauthor{Firstname7 Lastname7}{comp}
    %\icmlauthor{}{sch}
    \icmlauthor{Xiaoyu Jiang}{cs_uom}
    \icmlauthor{Xinxing Shi}{cs_uom}
    \icmlauthor{Sokratia Georgaka}{fbmh_uom}
    \icmlauthor{Magnus Rattray}{fbmh_uom}
    \icmlauthor{Mauricio A. Álvarez}{cs_uom}
    %\icmlauthor{}{sch}
    %\icmlauthor{}{sch}
  \end{icmlauthorlist}

  \icmlaffiliation{cs_uom}{Department of Computer Science, University of Manchester, Manchester, UK}
  % \icmlaffiliation{comp}{Company Name, Location, Country}
  \icmlaffiliation{fbmh_uom}{Faculty of Biology, Medicine and Health, University of Manchester, Manchester, UK}

  \icmlcorrespondingauthor{Xiaoyu Jiang}{xiaoyu.jiang@postgrad.manchester.ac.uk}
  \icmlcorrespondingauthor{Mauricio A. Álvarez}{mauricio.alvarezlopez@manchester.ac.uk}

  % You may provide any keywords that you find helpful for describing your
  % paper; these are used to populate the "keywords" metadata in the PDF but
  % will not be shown in the document
  \icmlkeywords{Machine Learning, ICML}

  \vskip 0.3in
]

% this must go after the closing bracket ] following \twocolumn[ ...

% This command actually creates the footnote in the first column listing the
% affiliations and the copyright notice. The command takes one argument, which
% is text to display at the start of the footnote. The \icmlEqualContribution
% command is standard text for equal contribution. Remove it (just {}) if you
% do not need this facility.

% Use ONE of the following lines. DO NOT remove the command.
% If you have no special notice, KEEP empty braces:
\printAffiliationsAndNotice{}  % no special notice (required even if empty)
% Or, if applicable, use the standard equal contribution text:
% \printAffiliationsAndNotice{\icmlEqualContribution}

\begin{abstract}
    Multi-Output Gaussian Processes (MOGPs) provide a principled probabilistic framework for modelling correlated outputs but face scalability bottlenecks when applied to datasets with high-dimensional output spaces. To maintain tractability, existing methods typically resort to restrictive assumptions, such as employing low-rank or sum-of-separable kernels, which can limit expressiveness. We propose the Transformed Latent Variable MOGP (T-LVMOGP), a novel framework that scales MOGPs to a massive number of outputs while preserving the capacity to capture meaningful inter-output dependencies. T-LVMOGP constructs a flexible multi-output deep kernel by mapping inputs and output-specific latent variables into an embedding space using a Lipschitz-regularised neural network. Combined with stochastic variational inference, our model effectively scales to high-dimensional output settings. Across diverse benchmarks, including climate modelling with over $10,000$ outputs and zero-inflated spatial transcriptomics data, T-LVMOGP outperforms baselines in both predictive accuracy and computational efficiency.
\end{abstract}

\section{Introduction}
Gaussian Processes (GPs) offer a principled and flexible Bayesian nonparametric framework for learning unknown functions \citep{williams2006gaussian}. Multi-Output Gaussian Processes (MOGPs) extend this framework to model correlations between output functions, proving effective across domains ranging from medical time-series prediction \citep{cheng2020sparse} to dynamical system modelling \citep{tang2025physics}. However, standard MOGP models incur a cubic computational cost with respect to the number of outputs $P$. This constitutes a bottleneck for high-dimensional applications such as climate modelling and spatial transcriptomics, where $P$ can easily reach the order of thousands \citep{bonilla2007multi, van2020framework}.

\begin{figure}[!t]
    \centering
    \includegraphics[width=1.00\linewidth]{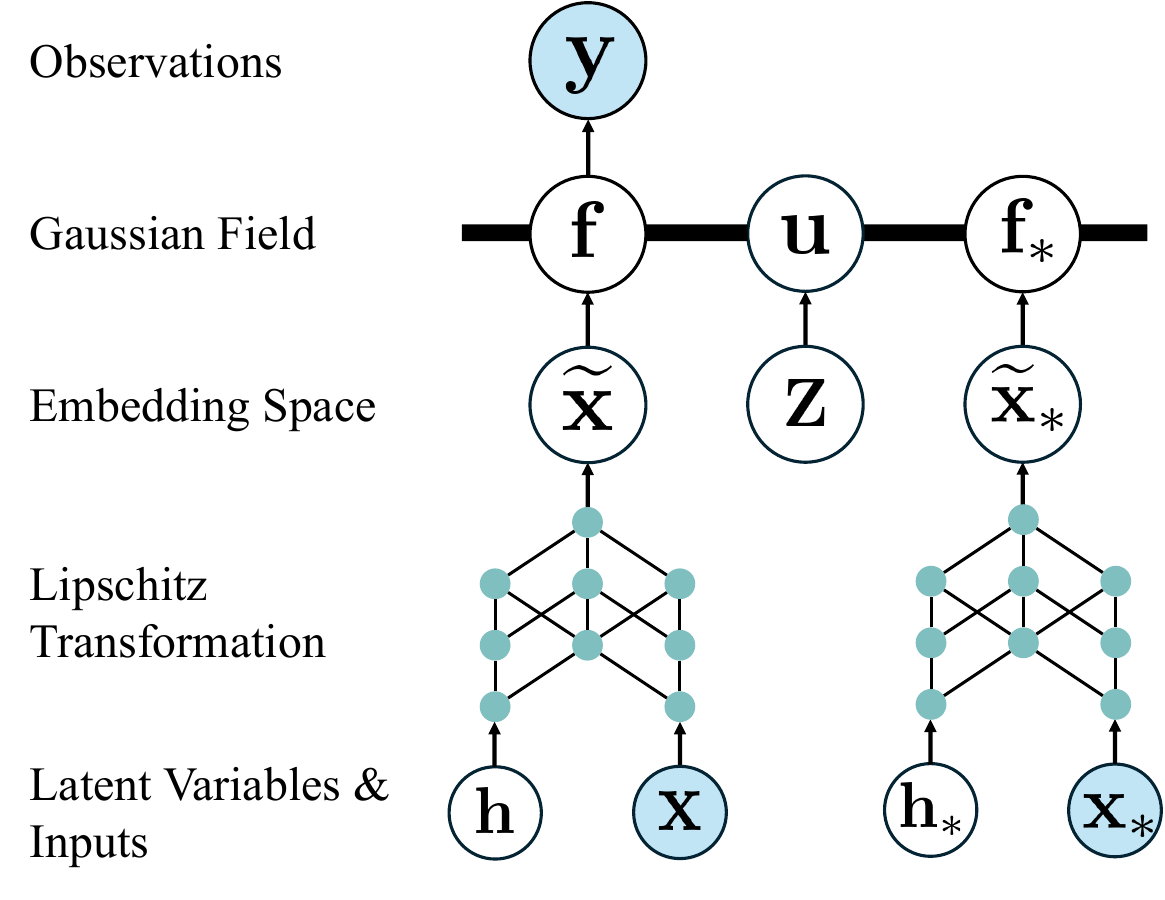}
    \caption{
    Schematic overview of the proposed T-LVMOGP framework. Inducing points $\mbf{Z}$ are placed in the embedding space.
    % The model employs a Lipschitz transformation to map inputs and output-specific latent variables into an embedding space. This transforms the complex multi-output problem into a tractable scalar GP, enabling scalable inference via the SVGP framework.
    }
    \label{fig:tlvmogp_overview}
\end{figure}

To mitigate this complexity, existing research largely focuses on imposing structural constraints, potentially sacrificing expressiveness for scalability. For instance, low-rank approaches assume output functions are linear combinations of shared latent functions \citep{wackernagel2003multivariate, higdon2008computer, xing2016manifold, bruinsma2020scalable}. % \citet{bruinsma2020scalable} propose an approach to construct orthogonal bases to decouple latent functions, accelerating the learning process. 
While computationally efficient, these linear assumptions often lack the expressiveness required to model intricate
inter-output dependencies. For tensor-based observations, the outputs can be shaped into a multidimensional array so that the Kronecker and tensor algebra are exploited for efficient computation \citep{pmlr-v89-zhe19a, ijcai2020p340}. However, this approach is predicated on rigid structural assumptions, which are frequently violated in real-world scenarios. \citet{dai2017efficient} 
introduce a latent variable for each output, and construct separable kernels for MOGP. \citet{jiang2025scalable} generalise this idea to sum-of-separable kernels, and they apply stochastic variational inference techniques \citep{hoffman2013stochastic, hensman2013gaussian} to the output space, facilitating better scalability with respect to an increasing number of outputs. However, their expressivity remains bound by the fixed structural properties of the kernel class.

In this work, we propose the Transformed Latent Variable MOGP (T-LVMOGP), illustrated in \cref{fig:tlvmogp_overview}, a framework that scales MOGPs to large output dimensions while preserving high expressiveness. To this end, we introduce a multi-output deep kernel that captures intricate correlations among outputs without imposing rigid structural constraints. We associate each output with a latent variable, serving as a compact representation \citep{pmlr-v2-bonilla07a}. To ensure high expressiveness, we employ a neural network to map the concatenated input-latent representations into an embedding space, where correlations are defined \citep{NIPS2007_4b6538a4, calandra2016manifold, wilson2016deep}. This architecture enables seamless integration with existing scalable GP frameworks to handle a large output space. We employ the Sparse Variational GP (SVGP) framework \citep{titsias2009variational}, placing inducing points in the embedding space and performing stochastic variational inference following \citet{hensman2013gaussian}. Furthermore, recently proposed tighter variational bounds \citep{titsias2025new, bui2025tighter} can be readily incorporated into this framework, which potentially enhances predictive performance.

While deep kernels provide the requisite flexibility, naive implementations suffer from overfitting \citep{ober2021promises}. In this work, we regularise the neural network via residual connections \citep{He_2016_CVPR} and spectral normalisation \citep{miyato2018spectral}. This encourages the transformation to be Lipschitz continuous, and its distance awareness promotes locally smooth and globally stable mappings. This is critical for the reliable performance of the multi-output deep kernel proposed in this work.

\textbf{Contributions} This work makes the following contributions: (1) We propose the Transformed Latent Variable MOGP, a scalable framework for building flexible multi-output deep kernels that scale to high-dimensional outputs while preserving rich cross-output dependencies. (2) To improve generalisation, we adopt a Lipschitz continuous neural network using residual connections and spectral normalisation. (3) We develop a stochastic variational inference procedure within the SVGP framework, supporting mini-batch training over both inputs and outputs. Our formulation naturally accommodates non-Gaussian likelihoods and tighter variational bounds. (4) Empirical experiments demonstrate that our model outperforms MOGP baselines in both predictive accuracy and computational efficiency.

%The GPRN \citep{wilson2011gaussian} generalises the LMC by modelling mixing weights as GPs. However, the computational burden associated with the requisite large number of latent GPs hinders scalability to high-dimensional datasets. To address this, \citet{ijcai2020p340} propose the Scalable GPRN (SGPRN), which imposes a tensor structure on the output space and utilises matrix- or tensor-normal variational posteriors. This formulation is designed to capture strong posterior dependencies among latent functions while significantly reducing the number of learnable parameters. 

\section{Background}

\subsection{Gaussian Processes}
Consider a regression problem with a dataset of $N$ input-output pairs $(\mbf{X}, \mbf{y}) = \{(\mbf{x}_n, y_n)\}_{n=1}^N$, where $\mbf{X} \in \mathbb{R}^{N \times D_X}$ and $\mbf{y} \in \mathbb{R}^N$. We assume the outputs are generated by a latent function $f$ with observation noise, $y_n = f(\mbf{x}_n) + \epsilon_n$. A zero-mean GP places a prior over the function $f \sim \mcal{GP}(0, k(\cdot, \cdot'))$. The function values $\mbf{f} = f(\mbf{X})$ follow a multivariate Gaussian distribution $p(\mbf{f}) = \mcal{N}(\mbf{f} \mid \mbf{0}, \mbf{K}_{\mbf{XX}})$, where $[\mbf{K}_{\mbf{XX}}]_{ij} = k(\mbf{x}_i, \mbf{x}_j)$. Exact GPs incur an $\mcal{O}(N^3)$ computational cost due to the determinant and inversion computation of an $N \times N$ matrix. To scale to large datasets, SVGP \citep{titsias2009variational} introduces a set of $M \ll N$ inducing inputs $\mbf{Z} \in \mathbb{R}^{M \times D_X}$ and corresponding inducing variables $\mbf{u} = f(\mbf{Z})$. The joint posterior is approximated using a variational distribution $q(f, \mbf{u}) = p(f \mid \mbf{u}) q(\mbf{u})$, where $q(\mbf{u}) = \mcal{N}(\mbf{u} \mid \mbf{m}, \mbf{S})$ and $\mbf{m}, \mbf{S}$ are variational parameters. We denote $\mbf{K}_{\mbf{ZZ}} = k(\mbf{Z}, \mbf{Z})$ and $\mbf{K}_{\mbf{XZ}} = k(\mbf{X}, \mbf{Z})$. The standard SVGP is trained by maximising the Evidence Lower Bound (ELBO):
\begin{equation*}
    \mcal{L}_{1} = \sum_{n=1}^N \mathbb{E}_{q(f(\mbf{x}_n))} \left[\log p(y_n \mid f(\mbf{x}_n))\right] - \optn{KL}[q(\mbf{u}) \mid \mid p(\mbf{u})].
\end{equation*}

Recently, \citet{titsias2025new} and \citet{bui2025tighter} propose a refined variational posterior that relaxes the prior conditional assumption. They define $q(\mbf{f}, \mbf{u}) = q(\mbf{f} \mid \mbf{u}) q(\mbf{u})$ with
\begin{equation*}
    q(\mbf{f} \mid \mbf{u}) = \mcal{N}(\mbf{f} \mid \mbf{K}_{\mbf{X}\mbf{Z}} \mbf{K}_{\mbf{Z}\mbf{Z}}^{-1} \mbf{u}, \mbf{D}_{\mbf{X}\mbf{X}}^{1/2} \mbf{V} \mbf{D}_{\mbf{X} \mbf{X}}^{\top/2}),
\end{equation*}
where $\mbf{V}$ is a positive diagonal matrix, $\mbf{Q}_{\mbf{X} \mbf{X}} = \mbf{K}_{\mbf{X} \mbf{Z}} \mbf{K}_{\mbf{Z} \mbf{Z}}^{-1} \mbf{K}_{\mbf{Z} \mbf{X}}$ represents the Nystr\"om approximation, and $\mbf{D}_{\mbf{X} \mbf{X}} = \mbf{K}_{\mbf{X} \mbf{X}} - \mbf{Q}_{\mbf{X} \mbf{X}}$. This parameterisation leads to a tighter lower bound $\mcal{L}_{2} = \mcal{L}_{1} + \Delta$ after optimising out $\mbf{V}$, where $\Delta \geq 0$ serves as a correction term. For the Gaussian likelihood with noise variance $\sigma_y^2$, this term is given analytically by:
\begin{equation}
\label{equ: correction_term_gaussian}
    \Delta = \frac{1}{2}\sum_{n=1}^{N} \left[ \frac{d_n}{\sigma_y^2} - \log \left(1 + \frac{d_n}{\sigma_y^2} \right) \right],
\end{equation} 
where $d_n = [\mbf{D}_{\mbf{X}\mbf{X}}]_{nn}$ is the $n$-th diagonal element of $\mbf{D}_{\mbf{X}\mbf{X}}$. Further details are provided in Appendix~\ref{app:review_of_svgp_with_tighter_bound}.

\subsection{Multi-Output Gaussian Processes}

MOGPs are a natural extension of GPs to vector-valued observations $\mbf{Y} \in \mathbb{R}^{N \times P}$ where $\mbf{y}_n \in \mathbb{R}^{P}$ denotes the observation at input $\mbf{x}_n$ \citep{alvarez2012kernels}. A classic approach is the \textit{Linear Model of Coregionalisation} (LMC), which expresses outputs as linear combinations of independent latent GPs \citep{journel1976mining}. Specifically, the $p$-th output function $f_p$ is modelled as:
\begin{equation*}
    f_p(\mbf{x}_{n}) = \sum_{q=1}^{Q} \sum_{r=1}^{R_q} \alpha^{(q)}_{p, r} g^{(q)}_{r}(\mbf{x}_{n}), \; y_{n,p} = f_p(\mbf{x}_{n}) + \epsilon_{n,p},
\end{equation*}
where $\epsilon_{n,p} \sim \mcal{N}(0, \sigma_y^2)$, $\{\alpha^{(q)}_{p,r}\}$ are mixing weights and the functions $\{g^{(q)}_{r}(\mbf{x})\}_{r=1}^{R_q}$ are latent GPs sharing covariance function $k_{X}^{(q)}(\mbf{x}, \mbf{x}^{\prime})$, but sampled independently. The cross-covariance between $f_p(\mbf{x})$ and $f_{p^{\prime}}(\mbf{x}^{\prime})$ is given by
\begin{align*}
\text{cov}[f_{p}(\mbf{x}), f_{p^{\prime}}(\mbf{x}^{\prime})]
&= \sum_{q=1}^{Q} c^{(q)}_{p, p^{\prime}} k_{X}^{(q)}(\mbf{x}, \mbf{x}^{\prime}), 
\end{align*}
with $c_{p, p^{\prime}}^{(q)} = \sum_{r=1}^{R_q} \alpha^{(q)}_{p,r} \alpha^{(q)}_{p^{\prime},r}$. $\mbf{K}^{(q)}_{\mbf{X}\mbf{X}} = k_{X}^{(q)}(\mbf{X}, \mbf{X})$ is from the $q$-th input kernel. Let $\mbf{f} = [\mbf{f}_1^\top, \dots, \mbf{f}_P^\top]^\top \in \mathbb{R}^{PN}$ be the vectorised latent function values with $\mbf{f}_p = f_p(\mbf{X})$. The covariance matrix for $\mbf{f}$ is
\begin{equation*}
     \text{cov}[\mbf{f}, \mbf{f}] = \sum_{q=1}^{Q} \mbf{A}^{(q)} {\mbf{A}^{(q)}}^{\top} \otimes \mbf{K}^{(q)}_{\mbf{X} \mbf{X}} = \sum_{q=1}^{Q} \mbf{C}^{(q)} \otimes \mbf{K}^{(q)}_{\mbf{X} \mbf{X}}, 
\end{equation*}
where $\otimes$ denotes the Kronecker product, $\mbf{A}^{(q)} \in \mathbb{R}^{P \times R_q}$ collects $\alpha^{(q)}_{p,r}$, and the \textit{coregionalisation matrix} $\mbf{C}^{(q)} \in \mathbb{R}^{P \times P}$ contains $c^{(q)}_{p, p^{\prime}}$. Notably, LMC implies that $\mbf{C}^{(q)}$ is a low-rank matrix derived from a \textit{linear kernel} applied to the latent output embeddings (the rows of $\mbf{A}^{(q)}$).

The Latent Variable MOGP (LV-MOGP) \citep{dai2017efficient, jiang2025scalable} generalises this framework by assigning latent variables to each output and constructing $\mbf{C}^{(q)}$ by applying any valid kernel function to them. This formulation enhances the model's capacity to capture more complex output correlations. Let $\mbf{H} = \{\mbf{H}^{(q)}\}_{q=1}^Q$ denote the collection of latent variables, where $\mbf{H}^{(q)} \in \mathbb{R}^{P \times D_H}$ contains the latent vector $\mbf{h}_{p,q} \in \mathbb{R}^{D_H}$ for the $p$-th output in the $q$-th group. We have $\mbf{C}^{(q)} = \mbf{K}^{(q)}_{\mbf{H}^{(q)} \mbf{H}^{(q)}} = k_{H}^{(q)}(\mbf{H}^{(q)}, \mbf{H}^{(q)})$, where $k^{(q)}_{H}$ denotes the $q$-th kernel function on the latent space. Under a Bayesian framework, generative distributions are
\begin{align*}
    p\left(\mbf{h}_{p, q}\right) &= \mcal{N}\left(\mbf{h}_{p, q} \mid \mbf{0}, \mbf{I}\right), \\
    p\left(\mbf{f} \mid \mbf{X}, \mbf{H}\right) &= \mcal{N} \left(\mbf{f} \mid \mbf{0}, \sum_{q=1}^{Q} \mbf{K}^{(q)}_{\mbf{H}^{(q)} \mbf{H}^{(q)}} \otimes \mbf{K}^{(q)}_{\mbf{X} \mbf{X}} \right), \\
    p\left(\optn{vec}(\mbf{Y}) \mid \mbf{f}\right) &= \mcal{N}\left(\optn{vec}(\mbf{Y}) \mid \mbf{f}, \sigma_y^2 \mbf{I}\right),
\end{align*}
where $\optn{vec}(\cdot)$ denotes the vectorisation operation. Scalable inference is achieved via a sparse variational method that places inducing points in both the input and latent spaces. For latent variables, LV-MOGP introduces variational distributions $q(\mbf{H})$; unlike point estimation,  this Bayesian treatment mitigates the risk of overfitting \citep{titsias2010bayesian, lalchand22a}. 

\begin{figure}[t]
    \centering
    \includegraphics[width=1.00\linewidth]{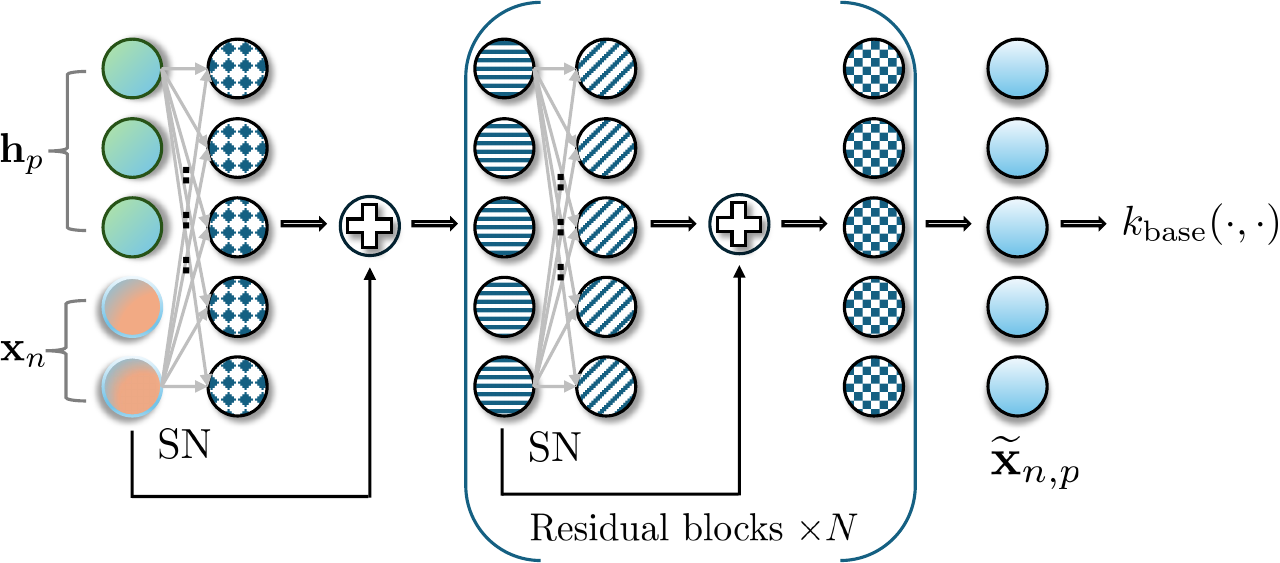}
    \caption{Illustration of the proposed multi-output deep kernel with Lipschitz-regularised residual connected neural network (RCNN). SN: spectral normalisation.}
    \label{fig:rcnn_multi_output_deep_kernel}
\end{figure}

\subsection{Deep Kernels and Lipschitz Regularisation}
% deep kernel for MOGP
%Deep Kernel Learning \citep{wilson2016deep, NIPS2016_bcc0d400} combines the non-parametric flexibility of GPs with the expressiveness of deep neural networks by defining the kernel over learned embeddings. 
Deep Kernel Learning \citep{wilson2016deep, NIPS2016_bcc0d400} unifies the representation power of neural networks with GPs by defining a kernel over learned embeddings. However, unconstrained neural networks can lead to pathological behaviour, such as feature collapse and the loss of distance awareness \citep{NEURIPS2020_543e8374, ober2021promises}. This degradation undermines the inherent benefits of GPs for quantifying uncertainty, leading to overconfident predictions on new data points \citep{ober2021promises, van2021feature}. 
To mitigate this, \citet{van2021feature} propose enforcing a Lipschitz constraint on the neural network, recovering the principled uncertainty of deep kernel GPs. In this work, we leverage these Lipschitz-regularised deep kernels with appropriate expressiveness and extend them to LV-MOGPs.

\section{Transformed Latent Variable MOGP}
Building on LV-MOGP, our approach introduces a novel, expressive multi-output deep kernel that explicitly transforms complex MOGPs into more tractable scalar GPs in a learned embedding space. Formulated within the SVGP framework \citep{hensman2013gaussian}, our model scales to datasets with high-dimensional outputs while effectively capturing essential inter-output correlations.

\subsection{Model Definition}
Let $\mbf{H} = \{\mbf{h}_p\}_{p=1}^{P} \in \mathbb{R}^{P \times D_H}$ represent latent variables assigned to every output. In MOGP, the central modelling choice is the specification of the cross-output covariances, i.e. the construction of $k_{p, p^{\prime}}(\mbf{x}, \mbf{x}^{\prime})$ for $p, p^{\prime} \in \{1, ..., P\}, \mbf{x}, \mbf{x}^{\prime} \in \mbf{X}$. We build it in two stages: (1) map input and latent variable pairs to an embedding space, and (2) define covariances via a base kernel on that space. 

Concretely, we introduce a neural network $\Phi_{\theta}: \mathbb{R}^{D_X} \times \mathbb{R}^{D_H} \rightarrow \mathbb{R}^{D_{T}}$ that takes the concatenated pair $(\mbf{x}_n, \mbf{h}_p)$ and returns an embedding $\widetilde{\mbf{x}}_{n,p} \in \mathbb{R}^{D_T}$, where $D_T$ is the dimensionality of the embedding space. Given a stationary kernel $k_{\text{base}}(\cdot, \cdot)$ on the embedding space (e.g., the RBF kernel), we define the cross-covariances as
\begin{equation*}
    \text{cov}[f_{p}(\mbf{x}_n), f_{p^{\prime}}(\mbf{x}_{n^{\prime}})] = k_{\text{base}}(\widetilde{\mbf{x}}_{n,p}, \widetilde{\mbf{x}}_{n^{\prime},p^{\prime}}).
\end{equation*} 

\cref{fig:rcnn_multi_output_deep_kernel} depicts the construction of our kernel. Notably, we do not impose restrictive assumptions, such as low-rank or the sum-of-separable structures, on our kernel. % Instead, inputs and latent variables are fully fused and processed by an expressive neural network, which yields a rich feature representation for the GP covariance function. 
Instead, correlations are induced implicitly through a learned embedding and a base kernel. \footnote{Our kernel generalises the separable and sum-of-separable structures employed in existing LV-MOGP models, as detailed in Appendix~\ref{app:connections_to_prior_lv_mogp_models}.}
As demonstrated in our experiments, this approach enables greater capacity in capturing intricate correlations among high-dimensional outputs. To mitigate the overfitting issues common in deep kernels \citep{ober2021promises, matias2024amortized}, we parameterise $\Phi_{\theta}$ as a Residual Connected Neural Network (RCNN) \citep{He_2016_CVPR} and apply spectral normalisation \citep{miyato2018spectral} to the weight matrices of each layer. Specifically, we adopt the power iteration algorithm \citep{golub2013matrix}, as outlined in \cref{alg:SN_PI} in Appendix~\ref{app:lipschitz_continuity_of_resnet}. By setting a proper Spectral Norm Upper Bound (SN-UB) of the weight matrices, $\Phi_{\theta}$ is Lipschitz continuous with the desired Lipschitz constant $\left(\text{that is }(1 + \text{SN-UB})^{L} \text{ for } L \text{ layer network} \right)$ \citep{NEURIPS2020_543e8374}, which is demonstrated to be significant for achieving improved generalisation ability in our experiments. Moreover, this parameterisation offers the expressive capacity to model a wide range of smooth Lipschitz mappings, suggesting the versatility of this approach \citep{bartlett2018representing}. Further details regarding Lipschitz continuity and spectral normalisation can be found in Appendix~\ref{app:lipschitz_continuity_of_resnet}.

\subsection{Variational Inference}
% TODO: summary and transition
We introduce prior and variational distributions for the latent variables $\mbf{H}$ across outputs:
\begin{align*}
    p(\mbf{H}) &= \prod_{p=1}^{P} p(\mbf{h}_p) = \prod_{p=1}^{P} \mcal{N}(\mbf{h}_p \mid 0, \mbf{I}), \\
    q(\mbf{H}) &= \prod_{p=1}^{P} q(\mbf{h}_{p}) = \prod_{p=1}^{P} \mcal{N}(\mbf{h}_p \mid \mbf{m}_{p}, \mbf{\Sigma}_{p}). 
\end{align*}
Here, $\mbf{m}_p \in \mathbb{R}^{D_H}$ and $\mbf{\Sigma}_{p} \in \mathbb{R}^{D_H \times D_H}$ are variational parameters. Following \citet{hensman2013gaussian}, we introduce inducing points $\mbf{Z} \in \mathbb{R}^{M \times D_{T}}$ in the embedding space. The variational distribution of inducing variables $\mbf{u}$ is $q(\mbf{u}) = \mcal{N}(\mbf{u} \mid \mbf{m}, \mbf{S})$. Given input $\mbf{x}_n$ and a sample of the latent variable $\mbf{h}_{p}$, the embedding is computed as $\widetilde{\mbf{x}}_{n, p} = \Phi_{\theta}(\mbf{x}_{n}, \mbf{h}_{p})$. The variational distribution for $f_{p}(\mbf{x}_n)$ can be computed as
\begin{align}
\label{equ: variational_f_p_n}
    q(f_{p}(\mbf{x}_{n}) \mid \mbf{x}_n, \mbf{h}_{p}) & = \int q(\mbf{u}) p(f_{p}(\mbf{x}_{n}) \mid \mbf{u}) \, d \mbf{u} \nonumber \\
    & = \mcal{N}(f_{p}(\mbf{x}_{n}) \mid \mu_{n, p}, \sigma_{n, p}^2),
\end{align}
where $\mu_{n, p} = \mbf{k}^{\top}_{\widetilde{\mbf{x}}_{n, p}} \mbf{K}_{\mbf{Z}\mbf{Z}}^{-1} \mbf{m}$ and $
\sigma^2_{n, p} = k_{\widetilde{\mbf{x}}_{n, p}} - \mbf{k}^{\top}_{\widetilde{\mbf{x}}_{n, p}} \mbf{K}_{\mbf{Z} \mbf{Z}}^{-1} \mbf{k}_{\widetilde{\mbf{x}}_{n, p}} + \mbf{k}_{\widetilde{\mbf{x}}_{n, p}}^{\top} \mbf{K}_{\mbf{Z} \mbf{Z}}^{-1} \mbf{S}
\mbf{K}_{\mbf{Z} \mbf{Z}}^{-1}
\mbf{k}_{\widetilde{\mbf{x}}_{n, p}}$. Here, 
$\mbf{k}_{\widetilde{\mbf{x}}_{n, p}} = k_{\text{base}}(\mbf{Z}, \widetilde{\mbf{x}}_{n, p}) \in \mathbb{R}^{M}$, $\mbf{K}_{\mbf{Z} \mbf{Z}} = k_{\text{base}}(\mbf{Z}, \mbf{Z}) \in \mathbb{R}^{M \times M}$ and $k_{\widetilde{\mbf{x}}_{n, p}} = k_{\text{base}}(\widetilde{\mbf{x}}_{n, p}, \widetilde{\mbf{x}}_{n, p}) \in \mathbb{R}$.
Consider $y_{n,p} \sim p(y_{n,p} \mid f_{p}(\mbf{x}_{n}))$ as the likelihood of the data; the ELBO is
\begin{align*}
    \mcal{L}_{3} &= \sum_{n=1}^{N} \sum_{p=1}^{P} \mathbb{E}_{q(\mbf{h}_{p})q(f_p(\mbf{x}_n) \mid \mbf{x}_{n}, \mbf{h}_{p})}\left[\log p(y_{n, p} \mid f_{p}(\mbf{x}_{n})) \right] \\ 
    &\quad -\optn{KL}[q(\mbf{u}) \mid \mid p(\mbf{u})] - \sum_{p=1}^{P} \optn{KL}[q(\mbf{h}_{p}) \mid \mid p(\mbf{h}_{p})].
\end{align*}
The derivation of the ELBO is provided in Appendix~\ref{app:elbo_derivation}. We compute the inner expectation $\mcal{V}_{n,p}(\mbf{h}_p) = \mathbb{E}_{q(f_{p}(\mbf{x}_{n}) \mid \mbf{x}_{n}, \mbf{h}_{p})} [\log p(y_{n, p} \mid f_{p}(\mbf{x}_{n}))]$ based on the likelihood type. For the Gaussian likelihood, this term is available analytically. For non-Gaussian likelihoods, we approximate it via Gauss-Hermite quadrature or Monte Carlo sampling \citep{pmlr-v130-ramchandran21a}. To handle the outer expectation over $q(\mbf{h}_p)$, we employ the reparametrisation trick \citep{kingma2014autoencoding, lalchand22a}.
Specifically, we sample $\mbf{h}_{p}^{(j)} = \mbf{m}_{p} + \mbf{\Sigma}_{p}^{1/2} \boldsymbol{\epsilon}^{(j)}$ with $\boldsymbol{\epsilon}^{(j)} \sim \mcal{N}(0, \mbf{I})$. The expected log-likelihood is then approximated as
\begin{equation*}
\mathbb{E}_{q(\mbf{h}_{p})}[\mcal{V}_{n,p}(\mbf{h}_p)] \approx \frac{1}{J} \sum_{j=1}^{J} \mcal{V}_{n,p}(\mbf{h}_{p}^{(j)}).
\end{equation*}
For the Gaussian likelihood, we obtain a tighter variational bound by adding $NP$ correction terms defined as in \cref{equ: correction_term_gaussian} to $\mcal{L}_{3}$. 

\textbf{Scalable Optimisation} To scale to datasets with a large number of inputs and outputs, we employ stochastic mini-batch training. At each iteration, we sample input and output mini-batches uniformly, denoted as $\mcal{B}_{N}$ and $\mcal{B}_{P}$ with $|\mcal{B}_{N}| = N_b$ and $|\mcal{B}_{P}| = P_b$. The ELBO is estimated as
\begin{align*}
% \label{equ:mini_batch_elbo_l3}
    \widetilde{\mcal{L}}_{3} &= \frac{NP}{N_b P_b} \sum_{n \in \mcal{B}_{N}} \sum_{p \in \mcal{B}_{P}} \mathbb{E}_{q(\mbf{h}_p)}[\mcal{V}_{n,p}(\mbf{h}_{p})] \\ &\quad - \optn{KL}[q(\mbf{u}) \mid \mid p(\mbf{u})] - \frac{P}{P_b} \sum_{p \in \mcal{B}_{P}} \optn{KL}[q(\mbf{h}_p) \mid \mid p(\mbf{h}_p)].
\end{align*}

\subsection{Predictive Posterior}
For a new input $\mbf{x}_{*}$ and a target output dimension $p_{*} \in \{1, 2, ..., P\}$, the predictive distribution of $f_{p_{*}}(\mbf{x}_{*})$ is
\begin{equation}
\label{equ: t_lvmogp_predictive_posterior}
    q(f_{p_{*}}(\mbf{x}_{*})) = \int q(\mbf{h}_{p_{*}}) q(f_{p_{*}}(\mbf{x}_{*}) \mid \mbf{x}_{*}, \mbf{h}_{p_{*}}) \, d \mbf{h}_{p_{*}}.
\end{equation}
Since this integration is analytically intractable, we employ a Monte Carlo approximation. We draw $S$ samples from $q(\mbf{h}_{p_{*}})$, approximating \cref{equ: t_lvmogp_predictive_posterior} as a mixture of Gaussian distributions:
\begin{equation*}
    q(f_{p_{*}}(\mbf{x}_{*})) \approx \frac{1}{S} \sum_{s=1}^{S} \mcal{N}(f_{p_{*}}(\mbf{x}_{*}) \mid \mu_{*}^{(s)}, {\sigma^{2}_{*}}^{(s)}), 
\end{equation*}
$\mu_{*}^{(s)}$ and ${\sigma^{2}_{*}}^{(s)}$ are computed according to \cref{equ: variational_f_p_n}. If we are interested in the predictive distribution of observation $\widetilde{y}_{*}$, we further convolve $q(f_{p_{*}}(\mbf{x}_{*}))$ with the likelihood.

\subsection{Computational Complexity}
The computational complexity of our model stems from two components: the SVGP computations, which scale as $\mcal{O}(N_bP_b M^2 + M^3)$, and the spectral normalisation operations. Regarding the latter, enforcing Lipschitz constraints via power iteration incurs $\mcal{O}(Tmn)$ for an $m \times n$ matrix over $T$ iterations. However, given the relatively small width ($\sim 10$) and depth ($\sim 5$) of the RCNNs used in our experiments, the computational overhead of spectral normalisation is often negligible compared to the SVGP computations.

\section{Related Work}
A wide class of MOGP models have been proposed in the literature. Convolved GP framework \citep{boyle2004dependent, alvarez2008sparse, alvarez2012kernels} employs convolution integrals between smoothing kernels and latent GPs to generate dependent outputs. For isotopic data, the Kronecker product structure between the output and input covariance matrices can be exploited for efficient computation \citep{bonilla2007multi, stegle2011efficient, rakitsch2013all}. In a different vein, \citet{leroy2023cluster} shift the focus from covariance to mean functions, modelling multiple tasks through clustering and latent mean processes. \citet{dai2024graphical} propose the Graphical MOGP (G-MOGP), which incorporates attention-based graphical models to construct more expressive multi-output priors. \citet{moreno2018heterogeneous} extend LMC to heterogeneous prediction tasks, where each output is associated with a (possibly) different likelihood function. \citet{liu2022scalable} enhance the LMC by incorporating neural embeddings. By projecting latent GPs into a higher-dimensional feature space, they claim this approach augments the model's representational capacity. \citet{jeong2023factorial} represent the latent GPs in the LMC as factorial stochastic differential equations for efficiently modelling temporal data. More recently, \citet{rooijakkers2025multi} investigate the sensitivity of MOGPs to misspecification and outliers, proposing robust and conjugate MOGPs. 

Despite these developments, current methods do not adequately address scalability for high-dimensional output spaces. Bridging this gap is the primary focus of this paper. Further related work is provided in Appendix~\ref{app:further_related_work}.

\section{Experiments}
% We begin with a classic small-scale MOGP dataset: EEG prediction. Next, we address an inverse dynamics prediction problem with $21$ input dimensions and $N > 40,000$. Subsequently, we apply our model to large-scale climate modelling tasks, one of which has more than $10,000$ outputs. Finally, we examine a spatial transcriptomics dataset in which the observations are count data with significant sparsity. We equip our model with a zero-inflated negative binomial likelihood to effectively handle zero inflation. Furthermore, we conduct ablation studies to verify the necessity of two key components of our model: the neural network architecture and the spectral normalisation. In the end, we show that optimisation of our model with tighter bounds can potentially improve predictive performance.

In this section, we evaluate the proposed T-LVMOGP on several real-world datasets. We benchmark our method against various MOGP baselines, focusing on scalable models such as SV-LMC \citep{van2020framework}, SGPRN \citep{ijcai2020p340}, OILMM \citep{bruinsma2020scalable}, G-MOGP \citep{dai2024graphical}, and GS-LVMOGP \citep{jiang2025scalable}. When applicable, Ind-GP \citep{hensman2013gaussian} is also included for comparison. We provide more details about baseline models in Appendix~\ref{app:complexity_analysis_of_baseline_models}, Appendix~\ref{app:review_of_sgprn_oilmm} and Appendix~\ref{app:method_comparison_with_baselines}. We adopt the ARD-RBF kernel for all models. Performance is assessed using Mean Squared Error (MSE) and Negative Log-Likelihood (NLL) on held-out test data. The wall-clock training time is used to estimate computational efficiency. 
All reported results represent the mean and standard deviation across 10 random trials, and the best mean values are highlighted in the tables below. A list of acronyms can be found in Appendix~\ref{app:notations_and_acronyms}, and further experimental setups are described in Appendix~\ref{app:exp_details_and_additional_results}.

\subsection{Electroencephalogram (EEG) Prediction}

\begin{figure*}[!t]
    \centering
    \includegraphics[width=1.0\linewidth]{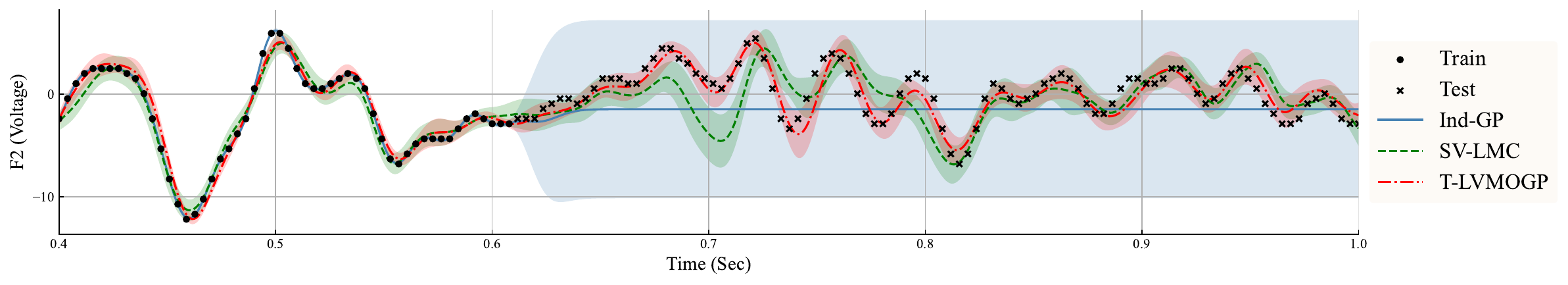}
    \caption{Predictions of Ind-GP, SV-LMC and T-LVMOGP for F2 electrode with $95\%$ confidence region in the EEG experiment.}
    \label{fig:eeg_lmc_tlvmogp_f2}
\end{figure*}

This dataset consists of $256$ voltage measurements recorded from $7$ scalp electrodes during a visual stimulus experiment \citep{zhang1995event}. Following the experimental protocol of \citet{requeima2019gaussian} and \citet{bruinsma2020scalable}, the task is to predict the final $100$ time steps for three target electrodes, conditioned on the remaining observed data.

\begin{table}[!htpb]
\centering
\caption{Performance of different models on the EEG dataset. The $\downarrow$ symbol indicates that lower values are better.}
\label{tab:eeg_baselines_and_ours}
\setlength{\tabcolsep}{4pt}
\begin{tabular}{lcc}
\toprule
\textbf{Model} & \textbf{MSE} $\downarrow$ & \textbf{NLL} $\downarrow$ \\ 
\midrule
\multicolumn{3}{l}{\textit{Baselines}} \\ 
Ind-GP          & $0.466 \pm 0.000$ & $1.227 \pm 0.000$ \\
SGPRN           & $0.473 \pm 0.003$ & $7.105 \pm 2.110$ \\
G-MOGP          & $0.604 \pm 0.099$ & $1.337 \pm 0.038$ \\
OILMM           & $0.372 \pm 0.098$ & $0.979 \pm 0.159$ \\
GS-LVMOGP       & $0.366 \pm 0.064$ & $0.924 \pm 0.081$ \\
SV-LMC             & $0.282 \pm 0.193$ & $0.857 \pm 0.613$ \\
\midrule
% \multicolumn{3}{l}{\textit{Ours}} \\ 
T-LVMOGP (ours)    & $\mbf{0.115 \pm 0.025}$ & $\mbf{0.814 \pm 0.310}$ \\
\bottomrule
\end{tabular}
\end{table}

As presented in \cref{tab:eeg_baselines_and_ours}, our model consistently surpasses the baseline methods in terms of both MSE and NLL. A visualisation of the predictions for a selected electrode is shown in \cref{fig:eeg_lmc_tlvmogp_f2}. We investigate the impact of spectral normalisation on T-LVMOGP by varying the SN-UB of each layer. As summarised in \cref{tab:eeg_varing_sn_ub}, predictive performance peaks at an intermediate SN-UB value, with degradation observed when deviating from this optimum. This pattern highlights the trade-off in regulating network expressiveness: neural networks with relaxed spectral constraints (high SN-UB) do not necessarily yield better generalisation, whereas overly restrictive ones (low SN-UB) lack the capacity to capture essential correlations for making accurate predictions.

% This pattern underscores the importance of calibrating network expressiveness: excessively flexible networks (high SN-UB) are less effective at generalising to test data, whereas restrictive ones (low SN-UB) have limited capacity to capture the essential correlations for accurate prediction. 
% Further experimental details and plots are available in Appendix~\ref{app:eeg_prediction}.

\begin{table}[!htpb]
\centering
\caption{Performance of T-LVMOGP with different SN-UB values on the EEG experiment.}
\label{tab:eeg_varing_sn_ub}
\begin{tabular}{lcc}
\toprule
\textbf{SN-UB} & \textbf{MSE} $\downarrow$ & \textbf{NLL} $\downarrow$ \\ 
\midrule
0.1     & $0.128 \pm 0.087$ & $1.339 \pm 2.065$ \\ 
% 0.01    & $0.117 \pm 0.025$ & $0.908 \pm 0.497$ \\
0.005   & $\mbf{0.115 \pm 0.025}$ & $\mbf{0.814 \pm 0.310}$ \\
0.001   & $0.145 \pm 0.053$ & $1.371 \pm 1.120$ \\
\bottomrule
\end{tabular}
\end{table}

\begin{table*}[!htpb]
\centering
\caption{Performance comparison of different models on the SARCOS dataset.}
\label{tab:sarcos_baselines_and_ours}
\begin{tabular}{lccc}
\toprule
\textbf{Model} & \textbf{MSE} $\downarrow$ & \textbf{NLL} $\downarrow$ & \textbf{Training Time (s/epoch) $\downarrow$} \\
\midrule
\multicolumn{3}{l}{\textit{Baselines}} \\
OILMM \citep{bruinsma2020scalable}  & $0.140 \pm 0.010$ & $0.864 \pm 0.090$ & $8.951 \pm 0.070$ \\
GS-LVMOGP \citep{jiang2025scalable} & $0.037 \pm 0.001$ & $-0.220 \pm 0.011$ & $10.241 \pm 0.096$ \\
SV-LMC \citep{van2020framework} & $0.033 \pm 0.000$ & $-0.297 \pm 0.003$ & $6.185 \pm 0.086$ \\
G-MOGP \citep{dai2024graphical} & $0.023 \pm 0.001$ & $-0.483 \pm 0.017$ & $5.892 \pm 0.084$ \\
\midrule
T-LVMOGP (ours) & $\mbf{0.022 \pm 0.000}$ & $\mbf{-0.485 \pm 0.009}$ & $\mbf{5.263 \pm 0.194}$ \\
\bottomrule
\end{tabular}
\end{table*}

\subsection{Inverse Dynamics Prediction}
\label{exp:sarcos}
In this experiment, we investigate the prediction of inverse dynamics for a SARCOS anthropomorphic robot arm. The dataset comprises $N=48,933$ samples with $P=7$ and $D_X=21$. The $NP$ observations are randomly partitioned into training and testing sets using a $50\%$ split, yielding $171,266$ training observations and $171,265$ test points.

\cref{tab:sarcos_baselines_and_ours} reports the predictive performance of the baseline methods and our approach. Note that Ind-GP and SGPRN are excluded as they are computationally prohibitive given the large sample size $N$. Across MSE, NLL, and wall-clock training time, our method consistently outperforms OILMM, SV-LMC, and GS-LVMOGP. G-MOGP performs slightly worse than our model but incurs a longer training time. Therefore, our approach maintains high expressiveness without compromising computational scalability. 

\begin{table}[!ht]
\centering
\caption{Performance of T-LVMOGP with different SN-UB values on the SARCOS experiment.}
\label{tab:sarcos_varing_sn_ub}
\begin{tabular}{lcc}
\toprule
\textbf{SN-UB} & \textbf{MSE} $\downarrow$ & \textbf{NLL} $\downarrow$ \\ 
\midrule
1.5     & $0.023 \pm 0.001$ & $-0.470 \pm 0.016$ \\ 
1.0     & $\mbf{0.022 \pm 0.000}$ & $\mbf{-0.485 \pm 0.009}$ \\ 
% 0.5     & $0.023 \pm 0.000$ & $-0.472 \pm 0.010$ \\
0.1     & $0.028 \pm 0.000$ & $-0.363 \pm 0.005$ \\
\bottomrule
\end{tabular}
\end{table}

We evaluate several per-layer SN-UB choices in this experiment. As shown in \cref{tab:sarcos_varing_sn_ub}, predictive performance is maximised at a moderate SN-UB value, diminishing when the spectral norm upper bound is either tightened or relaxed. This confirms that optimal generalisation requires carefully calibrating the trade-off between model capacity and regularisation strength.
% This pattern confirms that optimal generalisation of T-LVMOGP relies on a neural network with an appropriate degree of expressivity. 
% More experimental details and results can be found in Appendix~\ref{app:inverse_dynamics_prediction}.

\subsection{Climate Modelling}
In this section, we consider modelling monthly temperatures at different spatial locations (treated as outputs) on two datasets: ERA5 and Copernicus Marine.
% with $3,395$ and $10,873$ outputs, respectively. 

\textbf{ERA5} The ERA5 dataset \citep{copernicus_era5_2024} combines model data with global observations via data assimilation. In this study, we focus on the monthly $2\,\text{m}$ air temperature over the United Kingdom. Our dataset comprises $3,395$ spatial locations spanning a $30$-month period from June 2023 to November 2025. All temperature data are partitioned into training ($70\%$) and test ($30\%$) sets. The task is to predict the temperature on the test split given the train split. We evaluate using two distinct splitting mechanisms: random and block-wise. For clarity, we present the random split results in this section, while the block-wise split results are detailed in Appendix~\ref{app:climate_modelling_era5}.

\cref{tab:era5_random_baselines_and_ours} provides the experimental results. As shown in the table, our model surpasses the baselines in both MSE and NLL. This result is further validated by \cref{fig:era5_random_baselines_and_ours}. Regarding computational efficiency, our method requires less training time than SGPRN, OILMM, and GS-LVMOGP. While G-MOGP and SV-LMC exhibit lower training times than T-LVMOGP, our model outperforms them significantly in predictive performance, measured from MSE and NLL. Consequently, our model achieves the most favourable trade-off between model expressiveness and computational efficiency. % Additional experimental settings and results are provided in Appendix~\ref{app:climate_modelling_era5}.

\begin{figure*}[!t]
    \centering
    \includegraphics[width=1.0\linewidth]{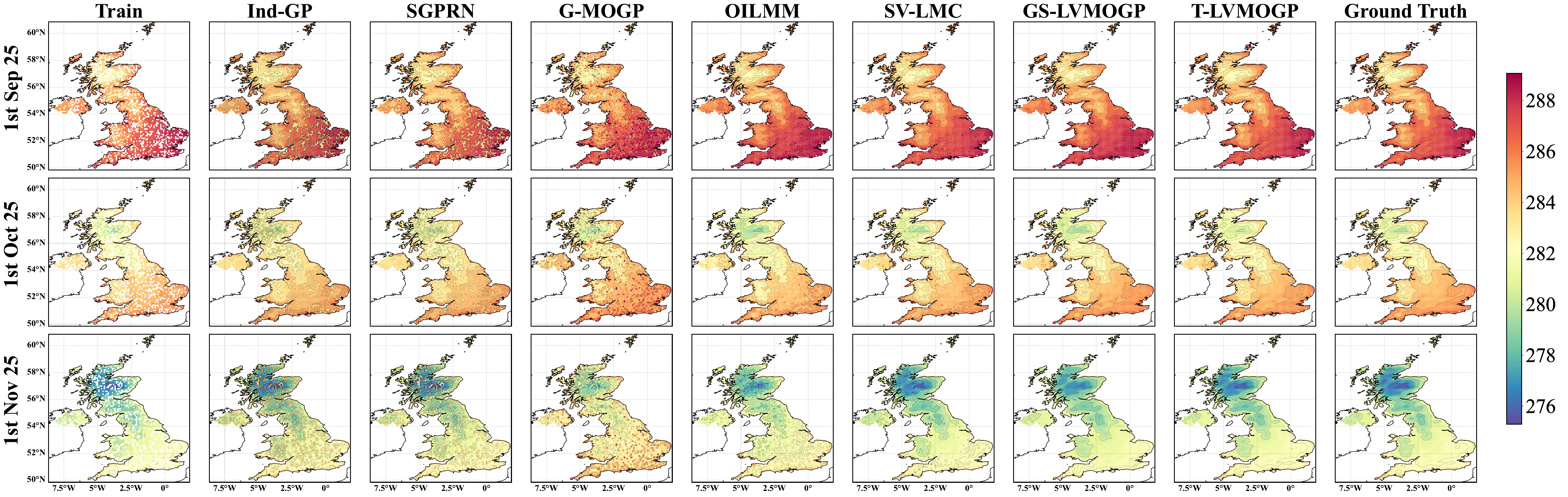}
    \caption{Predictions of different models on the ERA5 dataset with random splitting. The temperature is measured in Kelvin units.}
    \label{fig:era5_random_baselines_and_ours}
\end{figure*}

\begin{table*}[!htb]
\centering
\caption{Performance comparison of different models on the ERA5 dataset with random splitting.}
\label{tab:era5_random_baselines_and_ours}
\begin{tabular}{lccc}
\toprule
\textbf{Model} & \textbf{MSE} $\downarrow$ & \textbf{NLL} $\downarrow$ & \textbf{Training Time (s/epoch)} $\downarrow$ \\
\midrule
\multicolumn{3}{l}{\textit{Baselines}} \\
Ind-GP \citep{hensman2013gaussian} & $0.997 \pm 0.000$ & $1.418 \pm 0.000$ & $31.518 \pm 0.366$ \\
SGPRN  \citep{ijcai2020p340} & $0.897 \pm 0.124$ & $51.005 \pm 7.411$ & $0.780 \pm 0.001$ \\
G-MOGP \citep{dai2024graphical} & $0.316 \pm 0.035$ & $0.910 \pm 0.058$ & $0.685 \pm 0.018$\\
OILMM  \citep{bruinsma2020scalable} & $0.123 \pm 0.052$ & $0.483 \pm 0.403$ & $2.332 \pm 0.001 $\\
SV-LMC \citep{van2020framework} & $0.012 \pm 0.004$ & $0.758 \pm 0.428$ & $\mbf{0.570 \pm 0.015}$\\
GS-LVMOGP \citep{jiang2025scalable} & $0.014 \pm 0.003$ & $-0.699 \pm 0.084$ & $1.323 \pm 0.028$ \\
\midrule
T-LVMOGP (ours) & $\mbf{0.002 \pm 0.000}$  & $\mbf{-1.564 \pm 0.024}$ & $0.708 \pm 0.013$ \\
\bottomrule
\end{tabular}
\end{table*}
\textbf{Copernicus Marine}
In this section, we utilise sea water potential temperature data acquired from the Copernicus Marine Service \citep{copernicus_phy}. For this study, we extracted monthly-averaged temperature fields for the marine region surrounding the United Kingdom, covering a $24$-month period from November 2023 to October 2025. Distinct from the previous experiment, we consider the extrapolation-of-outputs setting as in \citep{jiang2025scalable}, where spatial coordinates serve as the prior mean for the latent variables. The model is trained on a subset of spatial locations to generate predictions for the entire $24$-month period at new locations.
This dataset contains $21,679$ outputs, with $10,873$ observed and $10,806$ held out for test.  

We consider GS-LVMOGP as the baseline for this experiment, and \cref{tab:copernicus_output_extrapolation} compares their predictive performance on this task. The prediction plots are shown in \cref{fig:cm_baseline_and_ours} in Appendix~\ref{app:climate_modelling_copernicus_marine}. For the baseline, we evaluate the number of coregionalisation matrices ranging from $Q=1$ to $3$. Consistent with findings in \citep{jiang2025scalable}, $Q=3$ yields the best performance; however, it remains inferior to our proposed method. Furthermore, T-LVMOGP demonstrates superior computational efficiency, achieving lower training times compared to the baselines. 
% Further experiment details are in Appendix~\ref{app:climate_modelling_copernicus_marine}.

\begin{table*}[!t]
    \centering
    \caption{Performance comparison on the Copernicus Marine dataset for the output extrapolation task.}
    \label{tab:copernicus_output_extrapolation}
    \begin{tabular}{lccc|c}
        \toprule
        & \multicolumn{3}{c}{\textbf{GS-LVMOGP} \citep{jiang2025scalable}} & \textbf{T-LVMOGP} (ours) \\
        \cmidrule(lr){2-4} \cmidrule(lr){5-5}
        \textbf{Metric} & $Q{=}1$ & $Q{=}2$ & $Q{=}3$ & RCNN \\
        \midrule
        \textbf{MSE} $\downarrow$
        & $0.040 \pm 0.002$ & $0.036 \pm 0.001$ & $0.035 \pm 0.002$ & $\mbf{0.029 \pm 0.000}$ \\
        \textbf{NLL} $\downarrow$
        & $5.230 \pm 0.640$ & $5.086 \pm 0.297$ & $4.975 \pm 0.923$ & $\mbf{-0.439 \pm 0.011}$ \\
        \textbf{Training Time (s/epoch)} $\downarrow$
        & $1.321 \pm 0.043$ & $1.750 \pm 0.045$ & $2.076 \pm 0.048$ & $\mbf{1.230 \pm 0.012}$ \\
        \bottomrule
    \end{tabular}
\end{table*}

% whether use one-column tables like this, put std in the parantheses?

% \begin{table}[!htb]
% \centering
% \caption{Performance comparison on the Copernicus Marine dataset for the output extrapolation task.}
% \label{tab:copernicus_output_extrapolation}
%     \begin{tabular}{cccc}
%         \toprule
%         \textbf{Model}
%         & \textbf{MSE} $\downarrow$
%         & \textbf{NLL} $\downarrow$
%         & \textbf{TT (s/epoch)} $\downarrow$ \\
%         \midrule
%         \makecell[c]{GS-LVMOGP \\ ($Q{=}1$)}
%         & \makecell[c]{$0.040$ \\ ($0.002$)}
%         & \makecell[c]{$5.230$ \\ ($0.640$)}
%         & \makecell[c]{$1.321$ \\ ($0.043$)} \\
%         \makecell[c]{GS-LVMOGP \\ ($Q{=}2$)}
%         & \makecell[c]{$0.036$ \\ ($0.001$)}
%         & \makecell[c]{$5.086$ \\ ($0.297$)}
%         & \makecell[c]{$1.750$ \\ ($0.045$)} \\
%         \makecell[c]{GS-LVMOGP \\ ($Q{=}3$)}
%         & \makecell[c]{$0.035$ \\ ($0.002$)}
%         & \makecell[c]{$4.975$ \\ ($0.923$)}
%         & \makecell[c]{$2.076$ \\ ($0.048$)} \\
%         \midrule
%         \makecell[c]{T-LVMOGP \\ (ours)}
%         & \makecell[c]{$\mbf{0.029}$ \\ $(0.000)$}
%         & \makecell[c]{$\mbf{-0.439}$ \\ $(0.011)$}
%         & \makecell[c]{$\mbf{1.230}$ \\ $(0.012)$} \\
%         \bottomrule
%     \end{tabular}
% \end{table}

\subsection{Spatial Transcriptomics}
Spatial transcriptomics enables high-resolution gene expression profiling while simultaneously preserving the spatial context of tissue samples \citep{staahl2016visualization}. In this experiment, we consider a 10x Genomics Visium human prostate cancer dataset \citep{10xvisiumprostate}, comprising gene expression counts for $5,000$ genes across $4,352$ spatial locations. As illustrated in \cref{fig:ipv_hist} in Appendix~\ref{app:spatial_transcriptomics}, the data is characterised by high sparsity and significant over-dispersion. To accommodate these characteristics, we employ the Zero-Inflated Negative Binomial (ZINB) likelihood following \citet{bintayyash2021non}. Details regarding the ZINB likelihood are provided in Appendix~\ref{app:zinb_likelihood}. By formulating spatial spots as inputs and genes as outputs, we obtain a total of $21,760,000$ observations. We implement a random $90\%/10\%$ train-test split to benchmark T-LVMOGP against GS-LVMOGP, both utilising the ZINB likelihood.

The experimental results are summarised in \cref{tab:st_comparison}. Our model outperforms GS-LVMOGP with $Q=1, 2, 3$ in terms of MSE, and achieves comparable NLL values to GS-LVMOGP with $Q=2, 3$. Although GS-LVMOGP with $Q=1$ requires the lowest training time, our model surpasses it in terms of both MSE and NLL. 

\begin{table*}[!htb]
    \centering
    \caption{Predictive performance of GS-LVMOGP and T-LVMOGP  on a Spatial Transcriptomics dataset.}
    \label{tab:st_comparison}
    \begin{tabular}{lccc|cc}
        \toprule
        & \multicolumn{3}{c}{\textbf{GS-LVMOGP} \citep{jiang2025scalable}} 
        & \multicolumn{1}{c}{\textbf{T-LVMOGP} (ours)} \\
        \cmidrule(lr){2-4} \cmidrule(lr){5-5}
        \textbf{Metric}
        & $Q{=}1$ & $Q{=}2$ & $Q{=}3$
        & RCNN \\
        \midrule
        \textbf{MSE} $\downarrow$
        & $11.616 \pm 0.773$
        & $11.396 \pm 0.314$
        & $11.024 \pm 0.350$
        & $\mbf{9.189 \pm 0.342}$ \\
        \textbf{NLL} $\downarrow$
        & $0.677 \pm 0.002$
        & $0.675 \pm 0.001$
        & $\mbf{0.674 \pm 0.001}$
        & $\mbf{0.674 \pm 0.001}$ \\
        \textbf{Training Time (s/epoch)} $\downarrow$
        & $\mbf{65.507 \pm 0.820}$
        & $73.639 \pm 0.505$
        & $78.245 \pm 1.131$
        & $73.047 \pm 0.318$ \\ 
        \bottomrule
    \end{tabular}
\end{table*}

\subsection{Ablation Study and Analysis}
\textbf{Impact of Spectral Normalisation} We present a performance comparison between the proposed T-LVMOGP with and without spectral normalisation in \cref{tab:ablation_sn_nll} and \cref{tab:ablation_sn_mse} in Appendix~\ref{app:ablation_sn_nn}. As shown in \cref{tab:ablation_sn_nll}, the model incorporating spectral normalisation consistently outperforms the variant without it, validating the necessity of this component for our framework.

\begin{table}[!h]
\centering
\caption{Test NLL comparison of T-LVMOGP with and without spectral normalisation.}
\label{tab:ablation_sn_nll}
\setlength{\tabcolsep}{5pt}
\begin{tabular}{lcc}
\toprule
\textbf{Dataset} & \textbf{w/o SN} & \textbf{w/ SN} \\
\midrule
EEG & $4.109 \pm 2.141$ & $\mbf{0.814 \pm 0.310}$ \\
SARCOS & $0.112 \pm 0.856$ & $\mbf{-0.485 \pm 0.01}$ \\
\makecell[l]{ERA5 (random)} & $-1.401 \pm 0.13$ & $\mbf{-1.564 \pm 0.03}$ \\
\makecell[l]{ERA5 (block-wise)} & $-0.895 \pm 0.85$ & $\mbf{-1.503 \pm 0.03}$ \\
\makecell[l]{Copernicus Marine} & $-0.400 \pm 0.08$ & $\mbf{-0.439 \pm 0.01}$ \\
\bottomrule
\end{tabular}
\end{table}

\textbf{Impact of Neural Network} The T-LVMOGP framework does not strictly require the neural network component; replacing it with an identity mapping preserves the model's validity as an MOGP. However, the results presented in \cref{tab:ablation_nn_nll} and \cref{tab:ablation_nn_mse} in Appendix~\ref{app:ablation_sn_nn} indicate that incorporating neural networks yields performance that matches or exceeds that of the identity mapping. This validates the benefits of integrating neural networks into our framework.

\begin{table}[!h]
\centering
\caption{Test NLL comparison of T-LVMOGP with and without neural network architecture.}
\label{tab:ablation_nn_nll}
\setlength{\tabcolsep}{5pt}
\begin{tabular}{lcc}
\toprule
\textbf{Dataset} & \textbf{w/o NN} & \textbf{w/ NN} \\
\midrule
EEG & $1.153 \pm 0.437$ & $\mbf{0.814 \pm 0.310}$ \\
SARCOS & $-0.336 \pm 0.01$ & $\mbf{-0.485 \pm 0.01}$ \\
\makecell[l]{ERA5 (random)} & $-1.554 \pm 0.03 $ & $\mbf{-1.564 \pm 0.03}$ \\
\makecell[l]{ERA5 (block-wise)} & $-1.474 \pm 0.06$ & $\mbf{-1.503 \pm 0.03}$ \\
\makecell[l]{Copernicus Marine} & $-0.420 \pm 0.01$ & $\mbf{-0.439 \pm 0.01}$ \\
% Spatial Transcriptomics & $0.676 \pm 0.001$ & $0.674 \pm 0.001$ \\
\bottomrule
\end{tabular}
\end{table}

\subsection{Optimisation with Tighter Variational Bounds}
As shown in \citet{titsias2025new} and \citet{bui2025tighter}, tighter variational bounds yield performance gains for single-output GP. Our framework naturally extends this advantage to multi-output settings. As reported in \cref{tab:nll_standard_vs_tighter_bound} and \cref{tab:mse_standard_vs_tighter_bound} in Appendix~\ref{app:exp_optimisation_with_tighter_bounds}, optimising our model with tighter variational bounds results in performance that is comparable to or better than the standard bound. These findings empirically validate the effectiveness of employing tighter variational bounds for multi-output learning within our T-LVMOGP framework.

\begin{table}[!h]
\centering
\caption{Test NLL comparison of T-LVMOGP with standard and tighter variational bounds. $\dagger$: for non-Gaussian likelihood, tighter bounds might not be effectively tighter in practice; more explanations are provided in Appendix~\ref{app:exp_optimisation_with_tighter_bounds}.}
\label{tab:nll_standard_vs_tighter_bound}
\setlength{\tabcolsep}{2pt}
\begin{tabular}{lcc}
\toprule
\textbf{Dataset} & \textbf{Standard} & \textbf{Tighter} \\
\midrule
SARCOS & $-0.485 \pm 0.01$ & $\mbf{-0.502 \pm 0.01}$ \\
\makecell[l]{Copernicus Marine} & $-0.439 \pm 0.01$ & $\mbf{-0.443 \pm 0.01}$ \\
Spatial Transcriptomics$\dagger$  & $0.674 \pm 0.001$ & $0.674 \pm 0.002$ \\
\bottomrule
\end{tabular}
\end{table}

\section{Conclusion}
In this work, we introduce a new MOGP framework that alleviates the computational bottleneck induced by high-dimensional outputs while retaining sufficient modelling capacity to capture rich inter-output dependencies. Central to our approach is a Lipschitz-regularised multi-output deep kernel that reformulates MOGPs as scalar GPs defined on a learned embedding space. This transformation enables a direct reuse of scalable inference methods developed for single-output GPs, thereby yielding scalability with respect to the number of outputs. We instantiate the proposed model within the SVGP framework, considering both standard and tighter variational bounds. Future work includes integrating other scalability techniques like structured kernel interpolation \citep{pmlr-v37-wilson15} and nearest-neighbour approximations \citep{pmlr-v139-tran21a,pmlr-v162-wu22h}. 

Our model relies on a factorised variational distribution for latent variables, which scales linearly with the number of outputs but limits the capture of posterior output couplings. Future work could address this via structured variational distributions \cite{NEURIPS2018_d3157f2f} or amortised inference \cite{kingma2014autoencoding}.

\section*{Acknowledgements}
We would like to thank the anonymous reviewers for their insightful comments and constructive suggestions. This work was supported by computational resources from the University of Manchester's CSF3 facility. XJ is supported by a UoM Departmental Studentship; XS is supported by the UoM-CSC Joint Scholarship; MR acknowledges funding from the Wellcome Trust (227415/Z/23/Z) and UKRI-EPSRC (EP/Y028805/1); MA has been financed by the EPSRC Research Projects EP/R034303/1, EP/T00343X/2, EP/V029045/1, the UKRI cross-council grant MR/Z505468/1, and the Wellcome Trust project 217068/Z/19/Z.

\section*{Impact Statement}
This paper presents work whose goal is to advance the field of Machine Learning. There are many potential societal consequences of our work, none of which we feel must be specifically highlighted here.

% \section*{Accessibility}

% \section*{Software and Data}

% Acknowledgements should only appear in the accepted version.

% In the unusual situation where you want a paper to appear in the
% references without citing it in the main text, use \nocite

\bibliographystyle{icml2026}
\bibliography{example_paper}

%%%%%%%%%%%%%%%%%%%%%%%%%%%%%%%%%%%%%%%%%%%%%%%%%%%%%%%%%%%%%%%%%%%%%%%%%%%%%%%
%%%%%%%%%%%%%%%%%%%%%%%%%%%%%%%%%%%%%%%%%%%%%%%%%%%%%%%%%%%%%%%%%%%%%%%%%%%%%%%
% APPENDIX
%%%%%%%%%%%%%%%%%%%%%%%%%%%%%%%%%%%%%%%%%%%%%%%%%%%%%%%%%%%%%%%%%%%%%%%%%%%%%%%
%%%%%%%%%%%%%%%%%%%%%%%%%%%%%%%%%%%%%%%%%%%%%%%%%%%%%%%%%%%%%%%%%%%%%%%%%%%%%%%
\newpage
\appendix
\onecolumn

\icmltitle{Transformed Latent Variable Multi-Output Gaussian Processes: \texorpdfstring{\\}{ } Supplementary Materials}

\section*{Table of Contents}

\vspace{1em}
\hrule

\noindent
A \quad \hyperref[app:notations_and_acronyms]{Notations and Acronyms} \dotfill \pageref{app:notations_and_acronyms} \\

B \quad \hyperref[app:elbo_derivation]{ELBO Derivation} \dotfill \pageref{app:elbo_derivation} \\

C \quad \hyperref[app:lipschitz_continuity_of_resnet]{Lipschitz Continuity of Residual Connected Neural Network (RCNN)} \dotfill \pageref{app:lipschitz_continuity_of_resnet} \\

D \quad \hyperref[app:connections_to_prior_lv_mogp_models]{Connections to Prior LV-MOGP models} \dotfill \pageref{app:connections_to_prior_lv_mogp_models} \\

E \quad \hyperref[app:zinb_likelihood]{Zero-Inflated Negative Binomial (ZINB) Likelihood} \dotfill \pageref{app:zinb_likelihood} \\

F \quad \hyperref[app:exp_details_and_additional_results]{Experimental Details and Additional Results} \dotfill \pageref{app:exp_details_and_additional_results} \\

G \quad \hyperref[app:review_of_svgp_with_tighter_bound]{Review of SVGP with Tighter Variational Bounds} \dotfill \pageref{app:review_of_svgp_with_tighter_bound} \\

H \quad \hyperref[app:further_related_work]{Further Related Work} \dotfill \pageref{app:further_related_work} \\

I \quad \hyperref[app:complexity_analysis_of_baseline_models]{Complexity Analysis of Baseline Models} \dotfill \pageref{app:complexity_analysis_of_baseline_models} \\

J \quad \hyperref[app:review_of_sgprn_oilmm]{Review of SGPRN and OILMM} \dotfill \pageref{app:review_of_sgprn_oilmm} \\

K \quad \hyperref[app:method_comparison_with_baselines]{Methodological Comparison with Baseline Models} \dotfill \pageref{app:review_of_sgprn_oilmm} \\

L \quad \hyperref[app:limitations_and_future_work]{Limitations and Future Work} \dotfill \pageref{app:limitations_and_future_work}

\vspace{1em}
\hrule

\clearpage

\section{Notations and Acronyms}
\label{app:notations_and_acronyms}
\begin{table}[!ht]
    \centering
    \caption{Summary of notations and their descriptions.}
    \begin{tabularx}{\columnwidth}{l X}
        \toprule
        Notation & Description \\
        \midrule
        % \addlinespace
        $D_X$ & Input dimensionality \\
        $D_H$ & Latent variable dimensionality \\
        $D_T$ & Dimensionality of the embedding space \\
        \midrule
        $N$   & Number of inputs \\
        $P$   & Number of outputs \\
        $N_b$ & Number of inputs in a mini-batch, i.e. $|\mcal{B}_{N}|$ \\
        $P_b$ & Number of outputs in a mini-batch, i.e. $|\mcal{B}_{P}|$ \\
        \midrule
        $M$   & Number of inducing variables \\
        $\mbf{Z}$ & Inducing points / inputs \\
        $\mbf{u}$ & Inducing variables \\
        \midrule
        $L$   & Number of latent SVGPs (used in SV-LMC, SGPRN and OILMM) \\
        $Q$   & Number of Coregionalisation matrices (used in GS-LVMOGP) \\
        $M_X$ & Number of inducing points in the input space (used in GS-LVMOGP) \\
        $M_H$ & Number of inducing points in the latent space (used in GS-LVMOGP) \\
        \bottomrule
    \end{tabularx}
\end{table}

\begin{table}[!ht]
    \centering
    \caption{Acronym summary.}
    \begin{tabularx}{\columnwidth}{l X}
    \toprule
    Acronym  & Meaning \\
    \midrule
    GP    &  Gaussian Process \\
    MOGP & Multi-Output Gaussian Process \\
    SVGP  & Sparse Variational Gaussian Process \\
    \midrule
    NN  & Neural Network \\
    RCNN  & Residual Connected Neural Network \\
    \midrule
    Ind-GP & Independent Gaussian Process \\
    LMC & Linear Model of Coregionalisation \\
    SV-LMC & Sparse Variational Linear Model of Coregionalisation \\
    G-MOGP & Graphical Multi-Output Gaussian Process \\
    GPRN & Gaussian Process Regression Network \\
    SGPRN & Scalable Gaussian Process Regression Network \\
    OILMM & Orthogonal Instantaneous Linear Mixing Model \\
    LV-MOGP & Latent Variable Multi-Output Gaussian Process \\
    T-LVMOGP & Transformed Latent Variable Multi-Output Gaussian Process \\
    GS-LVMOGP & Generalised Scalable Latent Variable Multi-Output Gaussian Process \\
    \midrule
    SN & Spectral Normalisation \\
    SN-UB & Spectral Norm Upper Bound \\
    ZINB & Zero-Inflated Negative Binomial \\
    \midrule
    NLL & Negative Log-Likelihood \\
    MSE & Mean Squared Error \\
    \bottomrule
    \end{tabularx}
\end{table}

\clearpage

\section{ELBO Derivation}
\label{app:elbo_derivation}

In this section, we derive the ELBO of the T-LVMOGP model. Consider the latent variables $\{\mbf{H}, \mbf{f}, \mbf{u}\}$ and observations $\mbf{Y}$. We introduce the variational distribution $q(\mbf{H}, \mbf{u}, \mbf{f}) = q(\mbf{H}) q(\mbf{u}) p(\mbf{f} \mid \mbf{u}, \mbf{H})$, and the joint distribution is $p(\mbf{Y}, \mbf{H}, \mbf{u}, \mbf{f}) = p(\mbf{Y} \mid \mbf{f}) p(\mbf{H}) p(\mbf{u}) p(\mbf{f} \mid \mbf{u}, \mbf{H})$.
\begin{align*}
    \log p(\mbf{Y}) &= \log \int p(\mbf{Y}, \mbf{H}, \mbf{u}, \mbf{f}) d\mbf{H} \; d\mbf{u} \; d\mbf{f} \\
    &= \log \int q(\mbf{H}, \mbf{u}, \mbf{f}) \frac{p(\mbf{Y}, \mbf{H}, \mbf{u}, \mbf{f})}{q(\mbf{H}, \mbf{u}, \mbf{f})} d\mbf{H} \; d\mbf{u} \; d\mbf{f} \\
    &\geq \int q(\mbf{H}, \mbf{u}, \mbf{f}) \log \left[\frac{p(\mbf{Y}, \mbf{H}, \mbf{u}, \mbf{f})}{q(\mbf{H}, \mbf{u}, \mbf{f})} \right] d\mbf{H} \; d\mbf{u} \; d\mbf{f} \\
    &= \int q(\mbf{H}, \mbf{u}, \mbf{f}) \log \frac{p(\mbf{H}) p(\mbf{u}) p(\mbf{f} \mid \mbf{u}, \mbf{H}) p(\mbf{Y} \mid \mbf{f})}{q(\mbf{H}, \mbf{u}, \mbf{f})} \, d \mbf{H} \; d \mbf{u} \; d\mbf{f} \\
    &= \int q(\mbf{H}) q(\mbf{u}) p(\mbf{f} \mid \mbf{u}, \mbf{H}) \log \frac{p(\mbf{H}) p(\mbf{u}) p(\mbf{Y} \mid \mbf{f})}{q(\mbf{H}) q(\mbf{u})} \, d \mbf{H} \; d \mbf{u} \; d\mbf{f} \\
    &= \int q(\mbf{H}) q(\mbf{f} \mid \mbf{H}) \left[\log p(\mbf{Y} \mid \mbf{f})\right] - \optn{KL}[q(\mbf{u}) \mid \mid p(\mbf{u})] - \optn{KL}[q(\mbf{H}) \mid \mid p(\mbf{H})] \\
    &= \sum_{n=1}^{N} \sum_{p=1}^{P} \mathbb{E}_{q(\mbf{h}_{p})q(f_p(\mbf{x}_n) \mid \mbf{x}_{n}, \mbf{h}_{p})}\left[\log p(y_{n, p} \mid f_{p}(\mbf{x}_{n})) \right] - \optn{KL}[q(\mbf{u}) \mid \mid p(\mbf{u})] - \sum_{p=1}^{P} \optn{KL}[q(\mbf{h}_{p}) \mid \mid p(\mbf{h}_{p})] \\
    &= \sum_{n=1}^{N} \sum_{p=1}^{P} \mathbb{E}_{q(\mbf{h}_{p})}\left[ \mcal{V}_{n, p}(\mbf{h}_{p}) \right] - \optn{KL}[q(\mbf{u}) \mid \mid p(\mbf{u})] - \sum_{p=1}^{P} \optn{KL}[q(\mbf{h}_{p}) \mid \mid p(\mbf{h}_{p})] \\
    &:= \mcal{L}_{3}, 
\end{align*}
where the inequality follows from Jensen's inequality, and
\begin{equation*}
    \optn{KL}[q(\mbf{H}) || p(\mbf{H})] = \int \left(\prod_{p=1}^{P} q(\mbf{h}_p) \right) \log \left[ \frac{\prod_{p=1}^{P} q(\mbf{h}_p)}{\prod_{p=1}^{P} p(\mbf{h}_p)}\right] d \mbf{h}_1 ... d \mbf{h}_P = \sum_{p=1}^{P} \int q(\mbf{h}_p) \log \left[\frac{q(\mbf{h}_p)}{p(\mbf{h}_p)}\right] d \mbf{h}_p = \sum_{p=1}^{P} \optn{KL}[q(\mbf{h}_p) \mid \mid p(\mbf{h}_p)]. 
\end{equation*}
For the Gaussian likelihood, the inner expectation $\mcal{V}_{n, p}(\mbf{h}_p)$ can be analytically expressed as
\begin{equation*}
    \mcal{V}_{n, p}(\mbf{h}_{p}) = \log \mcal{N}(y_{n, p} \mid \mbf{k}_{\widetilde{\mbf{x}}_{n,p}}^{\top} \mbf{K}_{\mbf{Z} \mbf{Z}}^{-1} \mbf{m}, \sigma_y^2) - \frac{1}{2 \sigma_y^2} \optn{tr}(\mbf{S} \mbf{\Lambda}_{n,p}) - \frac{1}{2 \sigma_y^2} (k_{\widetilde{\mbf{x}}_{n,p}} - q_{\widetilde{\mbf{x}}_{n,p}}), 
\end{equation*}
where $\mbf{\Lambda}_{n,p} = \mbf{K}_{\mbf{Z} \mbf{Z}}^{-1} \mbf{k}_{\widetilde{\mbf{x}}_{n,p}} \mbf{k}_{\widetilde{\mbf{x}}_{n,p}}^{\top} \mbf{K}_{\mbf{Z} \mbf{Z}}^{-1}$ and $q_{\widetilde{\mbf{x}}_{n, p}} = \mbf{k}_{\widetilde{\mbf{x}}_{n,p}}^{\top} \mbf{K}_{\mbf{Z} \mbf{Z}}^{-1} \mbf{k}_{\widetilde{\mbf{x}}_{n,p}}$.

\section{Lipschitz Continuity of Residual Connected Neural Network (RCNN)}
\label{app:lipschitz_continuity_of_resnet}

\begin{definition} [Lipschitz continuous]
     Given two metric spaces ($\mcal{X}$, $d_{X}$) and ($\mcal{Y}$, $d_{Y}$), where $d_{X}$ denotes the metric on the set $\mcal{X}$ and $d_{Y}$ is the metric on the set $\mcal{Y}$, a function $g$: $\mcal{X}$ $\rightarrow$ $\mcal{Y}$ is called Lipschitz continuous if there exists a real constant $C\geq0$ such that, for all $x_1$ and $x_2$ in $\mcal{X}$,  $d_Y(g(x_1), g(x_2)) \leq Cd_X(x_1, x_2)$. The smallest constant $C$ is sometimes called the (best) Lipschitz constant of $g$.
\end{definition}

\begin{remark}
    The Lipschitz constant of a function $g(x)$ can be denoted as 
    $$
    C=\sup_{x_1 \neq x_2} \frac{d_Y(g(x_1), g(x_2))}{d_X(x_1, x_2)}.
    $$
\end{remark}

\begin{definition} [C-bilipschitz]
    For a real number $C\geq1$, if $\frac{1}{C}d_{X}(x_1, x_2) \leq d_{Y}(g(x_1), g(x_2)) \leq Cd_{X}(x_{1}, x_{2})$ for all $x_1, x_2 \in \mcal{X}$, then $g$ is called C-bilipschitz. 
\end{definition}

\begin{proposition} [informal, Theorem 1 of \citet{bartlett2018representing}] Any smooth bilipschitz function $g$ can be represented as a composition $g_{m} \circ g_{m-1} \circ ... \circ g_{1}$ of functions $g_{1}(\cdot), ..., g_{m}(\cdot)$ that are close to the identity in the sense that each $(g_{i} - \text{Id})$ is Lipschitz continuous. 
\end{proposition}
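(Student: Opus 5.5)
The approach follows Bartlett, Evans and Long: split the target map $g$ into a chain of small steps, each a near-identity. To make this precise, take $g:\mathbb{R}^d\to\mathbb{R}^d$ smooth and $C$-bilipschitz, and aim to show that for any $\epsilon>0$ there are $m=\mcal{O}(\log C/\epsilon)$ maps $g_1,\dots,g_m$ with $g=g_m\circ\cdots\circ g_1$ and $\optn{Lip}(g_i-\text{Id})\le\epsilon$.

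The first step is a reduction to the case $g(0)=0$ with derivative $Dg(0)$ symmetric positive definite. Translations are harmless, since they can be absorbed into the last factor. For the linear part, use the polar decomposition of $Dg(0)$. Any element of $GL^+(d)$ can be written as a product of many matrices close to the identity by taking $k$-th roots along a path, and an orthogonal factor in $SO(d)$ can be split as $\exp(A/k)^k$ with $A$ skew. If $\det Dg(0)<0$, one reflection factor must be handled separately, or we simply assume orientation preservation. This reduction is routine.

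The core construction is a homotopy. Set $g_t(x)=g(tx)/t$ for $t\in(0,1]$ and $g_0(x)=Dg(0)x$. Every $g_t$ is $C$-bilipschitz with the same constant as $g$, because rescaling does not change Lipschitz ratios. We then choose times $0=t_0<t_1<\dots<t_m=1$ and define
\[
g_i = g_{t_i}\circ g_{t_{i-1}}^{-1}.
\]
Telescoping gives $g_1\circ g_0 \circ\dots$, which reassembles $g$ after composing with the linear factors from the first step. Write
\[
g_i-\text{Id} = \bigl(g_{t_i}-g_{t_{i-1}}\bigr)\circ g_{t_{i-1}}^{-1}.
\]
Its Lipschitz constant is at most $C\cdot\optn{Lip}(g_{t_i}-g_{t_{i-1}})$, and the latter is controlled by
\[
\sup_x\bigl\|Dg(t_ix)-Dg(t_{i-1}x)\bigr\|.
\]

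The main obstacle is that this supremum need not be small uniformly in $x$ unless $Dg$ is uniformly continuous along rays. Smoothness alone gives this only on compact sets. I would first try the geometric choice $t_i=e^{-(m-i)\delta}$, which compares $Dg(y)$ with $Dg(e^{-\delta}y)$. When $Dg$ is uniformly continuous, or on a compact domain, this difference is small for small $\delta$. Combined with a bound on $\optn{Lip}(Dg)$ on bounded sets, this yields the desired $\epsilon$ for $m$ large. This is exactly where the informal statement hides assumptions.

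For the fully general case I would instead argue as Bartlett et al.\ do. They work directly with the bilipschitz condition and bound $\optn{Lip}(g_i-\text{Id})$ by an inequality on the ratios $\|g(y)-g(y')\|/\|y-y'\|$ over scales. Their result gives $\optn{Lip}(g_i-\text{Id})=\mcal{O}(\log C/m)$. Since the proposition is labelled informal and attributed to that paper, the final write-up can cite their theorem for this analytic estimate. The steps above show how it yields the decomposition into near-identity maps, which is what justifies residual layers in the RCNN.
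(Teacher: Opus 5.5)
The paper gives no proof of this proposition; it only cites Bartlett et al. Their Theorem~1 concerns $h$ on a \emph{bounded} domain with a Lipschitz derivative, a Lipschitz inverse and positive orientation, and the bound on $\optn{Lip}(h_i-\text{Id})$ depends on those constants. Read literally, the informal statement is trivial with $m=1$, since $g-\text{Id}$ is Lipschitz whenever $g$ is. You are right that the real content is the smallness of $\optn{Lip}(g_i-\text{Id})$. In the cited regime your homotopy argument is the right one and closes on its own. Remove the translation and the linear factor so that $Dg(0)=I$. On a convex domain of radius $R$ containing $0$, on which $Dg$ is $\beta$-Lipschitz, uniform spacing $t_i=i/m$ and the mean value inequality give $\optn{Lip}(g_i-\text{Id})\le C\beta R/m$, with $C$ bounding $\optn{Lip}(g_t^{-1})$ (all constants are those of the reduced map). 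Positive orientation is a genuine hypothesis, not a case to handle separately: if $\optn{Lip}(f-\text{Id})\le\epsilon<1$ then $\det Df>0$, so no orientation-reversing map is such a composition.

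The genuine gap is your \emph{fully general case}. Two claims there are false: the target $m=\mcal{O}(\log C/\epsilon)$ in terms of the bilipschitz constant alone, and the claim that Bartlett et al.\ derive $\optn{Lip}(g_i-\text{Id})=\mcal{O}(\log C/m)$ from bilipschitz ratios. So this step cannot be filled by citation or by a sharper estimate.

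Here is a counterexample. Identify $\mathbb{R}^2$ with $\mathbb{C}$ and let $g(z)=z\,e^{i\varphi(|z|)}$, where $\varphi$ is smooth, $\varphi\equiv0$ on $[0,1]$ and $\varphi(r)=k\log r$ for $r\ge2$. In polar frames $Dg$ is the unit-determinant shear $\bigl(\begin{smallmatrix}1&0\\ r\varphi'(r)&1\end{smallmatrix}\bigr)$. So $g$ is smooth, $C(k)$-bilipschitz, and even has a globally Lipschitz derivative. Suppose $g=g_m\circ\cdots\circ g_1$ with $\optn{Lip}(g_i-\text{Id})\le\epsilon<1$. Each $g_i$ maps a difference vector $u$ to $u+w$ with $\|w\|\le\epsilon\|u\|$, i.e.\ turns it by at most $\arcsin\epsilon$. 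Summing these turns along the chain gives a continuous lift $\Phi(x,y)$ of the angle from $x-y$ to $g(x)-g(y)$, with $|\Phi|\le m\arcsin\epsilon$. Along $(x,y)=(r,0)$, $2\le r\le R$, that angle is $k\log r$, hence $k\log(R/2)\le 2m\arcsin\epsilon$. Thus no finite decomposition exists on $\mathbb{R}^2$. On the unit ball, the rescaled map $x\mapsto g(Rx)/R$ has the same $C$ yet needs $m\gtrsim k\log R/\epsilon$ factors.

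The same example refutes your remark that uniform continuity of $Dg$ suffices. Your geometric schedule never reaches $t=0$. On $\mathbb{R}^d$ the remaining step from $Dg(0)$ to $g_{t_0}$ has defect of order $\sup_y\|Dg(y)-Dg(0)\|$ however small $t_0$ is, by scale invariance. What is missing is therefore not an analytic lemma but the hypotheses themselves: a bounded star-shaped domain and quantitative control of $Dg$ on it, with $m$ depending on smoothness times diameter. Under those hypotheses the argument in your first part is already complete.
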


\begin{proposition} [Proposition 1 of \citet{NEURIPS2020_543e8374}] 
\label{app:lipschitz_residual_function_g}
Consider a mapping $g: \mcal{X} \rightarrow \mcal{Y}$ with residual architecture $g = g_{L} \circ g_{L-1} \circ ... \circ g_{2} \circ g_{1}$ where $g_{l}(\mbf{x}) = \mbf{x} + h_{l}(\mbf{x})$. Both $\mcal{X}$ and $\mcal{Y}$ spaces are equipped with the Euclidean norm induced distance $||\cdot||$, i.e. $d_{X}(a, b) = d_{Y}(a, b) = ||a-b||$, which will be used interchangeably below. If for $0 \leq \alpha < 1$, all $h_{l}$ are $\alpha$-Lipschitz, i.e, $d_{Y}(h_{l}(\mbf{x}), h_{l}(\mbf{x}^{\prime})) \leq \alpha \cdot d_X(\mbf{x}, \mbf{x}^{\prime}) \quad \forall \mbf{x}, \mbf{x}^{\prime} \in \mcal{X}$. Then: 
$$L_1 \cdot d_{X}(\mbf{x}, \mbf{x}^{\prime}) \leq d_{Y}(g(\mbf{x}), g(\mbf{x}^{\prime})) \leq L_2 \cdot d_{X}(\mbf{x}, \mbf{x}^{\prime}),$$ where $L_1 = (1-\alpha)^{L}$ and $L_2 = (1+\alpha)^{L}$, i.e., $g$ is confined to have bounded distortion.
\end{proposition}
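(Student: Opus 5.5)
The plan is to prove the two inequalities for a single residual block $g_l(\mbf{x}) = \mbf{x} + h_l(\mbf{x})$ and then compose over the $L$ blocks. Fix $\mbf{x}, \mbf{x}' \in \mcal{X}$ and write $g_l(\mbf{x}) - g_l(\mbf{x}') = (\mbf{x} - \mbf{x}') + (h_l(\mbf{x}) - h_l(\mbf{x}'))$.

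For the upper bound, the triangle inequality together with the $\alpha$-Lipschitz assumption gives
\begin{equation*}
\|g_l(\mbf{x}) - g_l(\mbf{x}')\| \leq \|\mbf{x}-\mbf{x}'\| + \|h_l(\mbf{x}) - h_l(\mbf{x}')\| \leq (1+\alpha)\|\mbf{x}-\mbf{x}'\|.
\end{equation*}
For the lower bound, the reverse triangle inequality gives
\begin{equation*}
\|g_l(\mbf{x}) - g_l(\mbf{x}')\| \geq \|\mbf{x}-\mbf{x}'\| - \|h_l(\mbf{x}) - h_l(\mbf{x}')\| \geq (1-\alpha)\|\mbf{x}-\mbf{x}'\|.
\end{equation*}
The assumption $\alpha<1$ makes $1-\alpha>0$, so this lower bound is informative. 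In particular each $g_l$ is injective.

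Next, I would define the partial compositions $G_0 = \text{Id}$ and $G_l = g_l \circ G_{l-1}$, so that $g = G_L$. I would then show by induction on $l$ that
\begin{equation*}
(1-\alpha)^l \|\mbf{x}-\mbf{x}'\| \leq \|G_l(\mbf{x}) - G_l(\mbf{x}')\| \leq (1+\alpha)^l \|\mbf{x}-\mbf{x}'\|.
\end{equation*}
The inductive step applies the single-block bounds to the pair $G_{l-1}(\mbf{x}), G_{l-1}(\mbf{x}')$ and then uses the induction hypothesis. This is possible because every block acts on the same Euclidean space, so the residual addition is well defined and the same metric is used at each layer. Setting $l = L$ yields $L_1 = (1-\alpha)^L$ and $L_2 = (1+\alpha)^L$.

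No step is a real obstacle; the argument rests only on the triangle inequality and its reverse. The one point needing care is the implicit assumption that $\mcal{X}$ and $\mcal{Y}$ coincide as Euclidean spaces, or at least that every intermediate representation lives in a space of the same dimension. I would state this assumption explicitly, since the residual form $\mbf{x} + h_l(\mbf{x})$ requires it.

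Finally, I would remark how this connects to the spectral normalisation in the main text. If $h_l = \sigma \circ W_l$ with a $1$-Lipschitz activation $\sigma$ and $\|W_l\|_2 \leq \text{SN-UB}$, then $h_l$ is SN-UB-Lipschitz. Taking $\alpha = \text{SN-UB}$ therefore recovers the Lipschitz constant $(1+\text{SN-UB})^L$ quoted earlier.
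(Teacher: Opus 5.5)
Your proof is correct and follows the same route as the paper. Both derive the per-block bounds $(1-\alpha)\|\mbf{x}-\mbf{x}'\| \leq \|g_l(\mbf{x})-g_l(\mbf{x}')\| \leq (1+\alpha)\|\mbf{x}-\mbf{x}'\|$ from the triangle inequality and its reverse (the paper gets the lower bound by rearranging the triangle inequality, which amounts to the same thing), then compose over the $L$ blocks by induction; your partial compositions $G_l$ simply write out the induction step that the paper calls ``easy to see.''
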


\begin{proof}
    This proof is an adaptation of the result of \citet{NEURIPS2020_543e8374} to our context.
    We first prove the following proposition:
    $$\forall l, \quad (1-\alpha) d_{X}(\mbf{x}, \mbf{x}^{\prime}) \leq d_{Y}(g_{l}(\mbf{x}), g_{l}(\mbf{x}^{\prime})) \leq (1+\alpha) d_{X}(\mbf{x}, \mbf{x}^{\prime}).$$
    First, prove the left-hand side:
    \begin{align*}
        d_{X}(\mbf{x}, \mbf{x}^{\prime}) 
        &= ||\mbf{x} - \mbf{x}^{\prime}|| = ||\mbf{x} - \mbf{x}^{\prime} - (g_{l}(\mbf{x}) - g_{l}(\mbf{x}^{\prime})) + (g_{l}(\mbf{x}) - g_{l}(\mbf{x}^{\prime}))||  \\
        &\leq ||(g_{l}(\mbf{x}^{\prime}) - \mbf{x}^{\prime}) - (g_{l}(\mbf{x}) - \mbf{x})|| + ||g_{l}(\mbf{x}) - g_{l}(\mbf{x}^{\prime})||  \\
        &\leq ||h_{l}(\mbf{x}^{\prime}) - h_{l}(\mbf{x})|| + ||g_{l}(\mbf{x}) - g_{l}(\mbf{x}^{\prime})|| \\
        &\leq \alpha \cdot ||\mbf{x}^{\prime} - \mbf{x}|| + ||g_{l}(\mbf{x}) - g_{l}(\mbf{x}^{\prime})|| \quad (h_l \text{ are } \alpha \text{-Lipschitz}) \\
        &= \alpha \cdot d_{X}(\mbf{x}, \mbf{x}^{\prime}) + d_{Y}(g_{l}(\mbf{x}), g_{l}(\mbf{x}^{\prime})).
    \end{align*}
    Rearranging, we get
    $$(1 - \alpha) d_X(\mbf{x}, \mbf{x}^{\prime}) \leq d_Y(g_{l}(\mbf{x}), g_{l}(\mbf{x}^{\prime})).$$
    Now show the right-hand side:
    \begin{align*}
    d_{Y}(g_{l}(\mbf{x}), g_{l}(\mbf{x}^{\prime})) 
    &= ||g_{l}(\mbf{x}) - g_{l}(\mbf{x}^{\prime})|| \\
    &= ||\mbf{x} + h_{l}(\mbf{x}) - (\mbf{x}^{\prime} + h_{l}(\mbf{x}^{\prime}))|| \\
    &\leq ||\mbf{x} - \mbf{x}^{\prime}|| + ||h_{l}(\mbf{x}) - h_{l}(\mbf{x}^{\prime})|| \\
    &\leq (1 + \alpha) ||\mbf{x} - \mbf{x}^{\prime}||  \quad (h_l \text{ are } \alpha \text{-Lipschitz}) \\
    &= (1 + \alpha) d_X(\mbf{x}, \mbf{x}^{\prime}).
    \end{align*}
Now show the bi-Lipschitz condition for an $L$-layer residual connected network function $g = g_{L} \circ g_{L-1} \circ ... \circ g_{2} \circ g_{1}$. It is easy to see that by induction, we get:
$$(1 - \alpha)^{L} d_X(\mbf{x}, \mbf{x}^{\prime}) \leq d_Y(g(\mbf{x}), g(\mbf{x}^{\prime})) \leq (1+\alpha)^{L}d_X(\mbf{x}, \mbf{x}^{\prime}).$$
\end{proof}

\begin{remark}
\label{app:lipschitz_remark_h_l_linear_layer}    
    If $h_{l}(\mbf{x}) = \phi(\mbf{W}_{l}\mbf{x} + \mbf{b}_{l})$, where $\phi$ denotes activation function, then $C_{h_{l}} = C_{\phi} \times ||\mbf{W}_{l}||_{2}$, $C_{h_{l}}$ and $C_{\phi}$ refer to Lipschitz constants for functions $h_{l}(\cdot)$ and $\phi(\cdot)$ respectively, $||\mbf{W}_{l}||_{2}$ is the spectral norm of $\mbf{W}_{l}$. For common activation functions (such as ReLU, sigmoid, tanh), $C_{\phi} = 1$, thus $C_{h_{l}} = ||\mbf{W}_{l}||_{2}$.
\end{remark}

\begin{remark}
    Building on Prop.~\ref{app:lipschitz_residual_function_g} and Remark~\ref{app:lipschitz_remark_h_l_linear_layer}, we realise that for residual connected neural networks comprising linear layers, the bi-Lipschitz property of the overall transformation is guaranteed by applying proper spectral normalisation to the weight matrix of each layer with SN-UB $< 1$. The Lipschitz property holds even when SN-UB $\geq 1$. 
\end{remark}

\subsection{Spectral normalisation}

\begin{definition} [Matrix Norm]
    Given a field $K$ of either real or complex numbers, the matrix norm is a function $||\cdot||: K^{m \times n} \rightarrow \mathbb{R}^{0+}$ that must satisfy the following properties:
    
    For all scalar $\alpha \in K$ and matrices $\mbf{A}, \mbf{B} \in K^{m \times n}$,
    \begin{itemize}
        \item $||\mbf{A}|| \geq 0$ (positive-valued)
        \item $||\mbf{A}|| = 0 \Longleftrightarrow \mbf{A}=\mbf{0}_{m,n}$ (definite)
        \item $||\alpha \mbf{A}|| = |\alpha| \; ||\mbf{A}||$ (absolutely homogeneous)
        \item $||\mbf{A} + \mbf{B}|| \leq ||\mbf{A}|| + ||\mbf{B}||$  (sub-additive or satisfying the triangle inequality)
        \item $||\mbf{AB}|| \leq ||\mbf{A}|| \; ||\mbf{B}||$  (sub-multiplicative)
    \end{itemize}
\end{definition} 

\begin{definition} [Matrix norms induced by vector norms] 
    Suppose a vector norm $||\cdot||_{\alpha}$ on $K^{n}$ and a vector norm $||\cdot||_{\beta}$ on $K^{m}$ are given. Any $m \times n$ matrix $\mbf{A}$ induces a linear operator from $K^{n}$ to $K^{m}$ with respect to the standard basis, and one defines the corresponding induced norm or operator norm on the space $K^{m \times n}$ of all $m \times n$ matrices as follows: 
    $$
    \|\mbf{A}\|_{\alpha, \beta}=\sup \left\{\|\mbf{A} \mbf{x}\|_\beta: \mbf{x} \in K^n \text { such that }\|\mbf{x}\|_\alpha \leq 1\right\}.
    $$
\end{definition}

\begin{definition} [Matrix norms induced by vector p-norms]
    If the $p$-norm for vectors $(1 \leq p \leq \infty)$ is used for both spaces $K^{n}$ and $K^{m}$, then the corresponding operator norm is
    $$
    \|\mbf{A}\|_p=\sup \left\{\|\mbf{A} \mbf{x}\|_p: \mbf{x} \in K^n \text { such that }\|\mbf{x}\|_p \leq 1\right\} .
    $$
\end{definition}

\begin{remark}
    Geometrically speaking, one can imagine a $p$-norm unit ball $\mathbb{V}_{p,n} = \{\mbf{x} \in K^{n}: ||\mbf{x}||_{p} \leq 1\}$ in $K^{n}$, then apply the linear map $\mbf{A}$ to the ball. It becomes a distorted convex shape $\mbf{A} \mathbb{V}_{p,n} \subset K^{m}$, and $||\mbf{A}||_{p}$ measures the longest ``radius" of the distorted convex shape. In other words, we must take a $p$-norm unit ball $\mathbb{V}_{p,m}$ in $K^{m}$, then multiply it by at least $||\mbf{A}||_{p}$, in order for it to be large enough to contain $\mbf{A} \mathbb{V}_{p,n}$. Thus, 
    $$||\mbf{A}||_{p} = \sup_{||\mbf{x}||_{p} = 1}\{||\mbf{A} \mbf{x}||_{p}: \mbf{x} \in \mathbb{R}^{n}\}.$$
\end{remark}

\begin{definition} [Spectral norm, $p=2$]
When $p=2$ (the Euclidean norm), the induced matrix norm is the \textit{spectral norm}. 
\end{definition}

\begin{proposition}
    The spectral norm of a matrix $\mbf{A}$ is the largest singular value of $\mbf{A}$, i.e., the square root of the largest eigenvalue of the matrix $\mbf{A}^{*}\mbf{A}$, where $\mbf{A}^{*}$ denotes the conjugate transpose of $\mbf{A}$: 
    $$||\mbf{A}||_{2} = \sqrt{\lambda_{\text{max}}(\mbf{A}^{*}\mbf{A})} = \sigma_{\text{max}}(\mbf{A}).$$
    where $\sigma_{\text{max}}(\mbf{A})$ represents the largest singular value of matrix $\mbf{A}$.
\end{proposition}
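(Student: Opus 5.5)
The plan is to reduce the operator-norm definition to an eigenvalue problem for the Hermitian positive semidefinite matrix $\mbf{A}^{*}\mbf{A}$, using the spectral theorem. First, for any $\mbf{x}$ I will write
$$\|\mbf{A}\mbf{x}\|_2^2 = (\mbf{A}\mbf{x})^{*}(\mbf{A}\mbf{x}) = \mbf{x}^{*}\mbf{A}^{*}\mbf{A}\mbf{x}.$$
Then, by the remark preceding the definition of the spectral norm,
$$\|\mbf{A}\|_2^2 = \sup_{\|\mbf{x}\|_2=1} \mbf{x}^{*}\mbf{A}^{*}\mbf{A}\mbf{x}.$$
This converts the geometric supremum into a Rayleigh quotient for $\mbf{B} = \mbf{A}^{*}\mbf{A}$.

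Second, I will note that $\mbf{B}$ is Hermitian and satisfies $\mbf{x}^{*}\mbf{B}\mbf{x} = \|\mbf{A}\mbf{x}\|_2^2 \geq 0$. By the spectral theorem, $\mbf{B} = \mbf{U}\mbf{\Lambda}\mbf{U}^{*}$ with $\mbf{U}$ unitary and $\mbf{\Lambda} = \optn{diag}(\lambda_1,\dots,\lambda_n)$, where $\lambda_1 \geq \dots \geq \lambda_n \geq 0$. Substituting $\mbf{y} = \mbf{U}^{*}\mbf{x}$, which is still a unit vector because $\mbf{U}$ is unitary, gives
$$\mbf{x}^{*}\mbf{B}\mbf{x} = \sum_i \lambda_i |y_i|^2 \leq \lambda_1 \sum_i |y_i|^2 = \lambda_1.$$
Equality is attained at $\mbf{y} = \mbf{e}_1$, i.e. $\mbf{x}$ a unit eigenvector for $\lambda_1$. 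Hence the supremum is a maximum and equals $\lambda_{\max}(\mbf{A}^{*}\mbf{A})$. Taking the nonnegative square root yields $\|\mbf{A}\|_2 = \sqrt{\lambda_{\max}(\mbf{A}^{*}\mbf{A})}$.

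Third, I will identify this quantity with $\sigma_{\max}(\mbf{A})$. I will use the definition of the singular values of $\mbf{A}$ as the square roots of the eigenvalues of $\mbf{A}^{*}\mbf{A}$, or equivalently read them off an SVD $\mbf{A} = \mbf{U}_A\mbf{\Sigma}\mbf{V}^{*}$, which gives $\mbf{A}^{*}\mbf{A} = \mbf{V}\mbf{\Sigma}^{*}\mbf{\Sigma}\mbf{V}^{*}$ with eigenvalues $\sigma_i^2$. The largest eigenvalue is therefore $\sigma_{\max}^2$, which completes the chain of equalities.

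The only step that needs care is the second: justifying that the supremum is attained and that the Rayleigh quotient bound holds over $K = \mathbb{C}$ as well as over $\mathbb{R}$. Both rely on the spectral theorem for Hermitian (respectively real symmetric) matrices and on unitary invariance of the Euclidean norm, which I will cite as standard linear algebra. The rest is direct substitution.
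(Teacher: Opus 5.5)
Your proposal is correct and follows the paper's own route: the paper's proof is the one-line Rayleigh-quotient identity $\|\mbf{A}\|_2^2 = \sup_{\mbf{x}\neq\mbf{0}} R(\mbf{A}^{*}\mbf{A},\mbf{x})$. You supply the details it leaves implicit, namely the spectral-theorem argument that this supremum is attained and equals $\lambda_{\max}(\mbf{A}^{*}\mbf{A})$, and the identification with $\sigma_{\max}(\mbf{A})$. Taking the supremum over $K^n$ rather than over $\mathbb{R}^n$ (as the paper literally writes) is also the right choice for complex $\mbf{A}$, where restricting to real $\mbf{x}$ would in general give a value that is too small.
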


\begin{proof}
Using the Rayleigh quotient: $||\mbf{A}||_{2}^{2} = \sup \{R(\mbf{A}^{*}\mbf{A}, \mbf{x}): \mbf{x} \in \mathbb{R}^{n} \}$.
\end{proof}

\begin{proposition} [Power / Von Mises iteration] 
    For $\mbf{M} \in \mathbb{R}^{n \times n}$, if we assume $\mbf{M}$ has an eigenvalue that is strictly greater in magnitude than that of its other eigenvalues and the starting vector $\mbf{b}_0$ has a nonzero component in the direction of the eigenvector associated with the dominant eigenvalue, then an iterative procedure converge to dominant eigenvalue and corresponding eigenvector:
    \begin{equation*}
        \mbf{b}_{k+1} = \frac{\mbf{M} \mbf{b}_{k}}{||\mbf{M} \mbf{b}_{k}||_{2}}, \quad
        \tilde{\lambda}_{\text{max}} = \lambda_{k} = \frac{\mbf{b}_{k}^{*} \mbf{M} \mbf{b}_{k}}{\mbf{b}_{k}^{*} \mbf{b}_{k}}.
    \end{equation*}
\end{proposition}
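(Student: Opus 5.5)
The plan is to prove convergence of the power iteration through an eigen-decomposition argument. I will assume, as is standard, that $\mbf{M}$ is diagonalisable, with eigenvalues ordered $|\lambda_1| > |\lambda_2| \geq \dots \geq |\lambda_n|$ and eigenvectors $\mbf{v}_1, \dots, \mbf{v}_n$ forming a basis. In the spectral-normalisation setting $\mbf{M} = \mbf{W}^{\top}\mbf{W}$ is symmetric positive semidefinite, so this assumption holds there with an orthonormal basis. The non-diagonalisable case can be handled afterwards with the Jordan form.

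\textbf{Step 1: closed form for the iterates.} By induction, the normalisation at each step only rescales, so
\begin{equation*}
\mbf{b}_k = \frac{\mbf{M}^k \mbf{b}_0}{\|\mbf{M}^k \mbf{b}_0\|_2}.
\end{equation*}
The normalising factors are nonzero. Indeed, $c_1 \neq 0$ and $\lambda_1 \neq 0$ (it strictly dominates), so $\mbf{M}^k \mbf{b}_0$ has a nonzero $\mbf{v}_1$-component.

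\textbf{Step 2: expand and factor out the dominant term.} Write $\mbf{b}_0 = \sum_i c_i \mbf{v}_i$ with $c_1 \neq 0$. Then
\begin{equation*}
\mbf{M}^k \mbf{b}_0 = c_1 \lambda_1^k \Bigl(\mbf{v}_1 + \sum_{i\geq 2} \tfrac{c_i}{c_1} (\lambda_i/\lambda_1)^k \mbf{v}_i\Bigr) = c_1\lambda_1^k(\mbf{v}_1 + \mbf{r}_k).
\end{equation*}
Since $|\lambda_i/\lambda_1| \leq |\lambda_2/\lambda_1| < 1$, we get $\|\mbf{r}_k\|_2 = \mcal{O}(|\lambda_2/\lambda_1|^k) \to 0$. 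It follows that
\begin{equation*}
\mbf{b}_k = e^{i\phi_k}\frac{\mbf{v}_1 + \mbf{r}_k}{\|\mbf{v}_1+\mbf{r}_k\|_2},
\end{equation*}
where $e^{i\phi_k} = (c_1\lambda_1^k)/|c_1\lambda_1^k|$. Convergence therefore holds up to this unimodular (sign) factor: the direction of $\mbf{b}_k$ converges to that of $\mbf{v}_1$.

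\textbf{Step 3: the Rayleigh quotient.} The expression for $\lambda_k$ is invariant under scalar multiples of $\mbf{b}_k$, so
\begin{equation*}
\lambda_k = \frac{(\mbf{v}_1+\mbf{r}_k)^*\mbf{M}(\mbf{v}_1+\mbf{r}_k)}{(\mbf{v}_1+\mbf{r}_k)^*(\mbf{v}_1+\mbf{r}_k)}.
\end{equation*}
This is a continuous function of $\mbf{r}_k$ near $\mbf{0}$. Since $\mbf{r}_k \to \mbf{0}$, it converges to $\mbf{v}_1^*\mbf{M}\mbf{v}_1/\mbf{v}_1^*\mbf{v}_1 = \lambda_1$. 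In the symmetric case the error is in fact $\mcal{O}(|\lambda_2/\lambda_1|^{2k})$, because the first-order cross terms vanish by orthogonality.

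\textbf{Main obstacle.} The delicate points are the precise meaning of eigenvector convergence and the non-diagonalisable case. Convergence must be stated modulo the phase factor, which genuinely oscillates when $\lambda_1 < 0$. For a general $\mbf{M}$, I would replace the expansion with the Jordan decomposition. In that case $\mbf{M}^k$ restricted to the complement of the $\lambda_1$-eigenspace, divided by $\lambda_1^k$, still tends to zero: the norm grows at most like $k^{m}|\lambda_2|^k$, and this is dominated by $|\lambda_1|^k$. The argument of Steps 2–3 then goes through with $\mbf{r}_k = \mcal{O}(k^m|\lambda_2/\lambda_1|^k)$.
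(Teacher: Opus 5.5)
The paper gives no proof of this proposition. It is stated as a classical fact supporting the power-iteration step of Algorithm~\ref{alg:SN_PI}, so there is no argument of the paper's to compare yours with.

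Your eigen-expansion proof is the standard one and it is correct. The ingredients are the right ones:
\begin{itemize}
\item the closed form $\mbf{b}_k = \mbf{M}^k\mbf{b}_0/\|\mbf{M}^k\mbf{b}_0\|_2$, with nonvanishing normalisers;
\item the bound $\|\mbf{r}_k\|_2 = \mcal{O}(|\lambda_2/\lambda_1|^k)$;
\item the scale invariance and continuity of the Rayleigh quotient.
\end{itemize}
Your remark that $\mbf{b}_k$ converges only up to sign, alternating when $\lambda_1<0$, is a genuine sharpening of the statement as written. You can also add that, since $\mbf{M}$ is real and $\lambda_1$ strictly dominates, $\lambda_1$ must be real; otherwise $\bar\lambda_1$ would tie with it. Hence $\mbf{v}_1$ can be taken real and $e^{i\phi_k}=\pm 1$ exactly.

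One hypothesis should be made explicit. Both your diagonalisable case and your Jordan remark use that $\lambda_1$ is algebraically simple. In the diagonalisable case this enters through $|\lambda_1|>|\lambda_2|$ in the list with multiplicities. In the Jordan remark it is what makes the sum of the other generalised eigenspaces an $\mbf{M}$-invariant complement of $\optn{span}(\mbf{v}_1)$. The proposition itself only says ``the eigenvector'', i.e.\ geometric multiplicity one.

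If $\lambda_1$ sat in a Jordan block of size $m>1$, your claim that the complementary part divided by $\lambda_1^k$ tends to zero would be false. Components along generalised eigenvectors of $\lambda_1$ grow like $k^{m-1}|\lambda_1|^k$. The conclusion still holds in that case: the leading term is $c_r\binom{k}{r-1}\lambda_1^{k-r+1}\mbf{v}_1$, where $r$ is the highest nonzero Jordan-chain coordinate of $\mbf{b}_0$. However, convergence is then only at rate $\mcal{O}(1/k)$ and needs this different bookkeeping. Adding simplicity of $\lambda_1$ as an explicit hypothesis, which is the standard one, settles the point.
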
 

\begin{algorithm}[!htb]
   \caption{Spectral Normalisation for $\mbf{A} \in \mathbb{R}^{m \times n}$ using Power Iteration \citep{miyato2018spectral}}
   \label{alg:SN_PI}
\begin{algorithmic}[1]
   \STATE {\bfseries Input:} $\mbf{A} \in \mathbb{R}^{m \times n}$; $c > 0$: spectral norm upper bound (SN-UB); number of power iterations $T$
   \STATE {\bfseries Output:} $\tilde{\mbf{A}}$ \COMMENT{(approximately) spectral normalised $\mbf{A}$}
   
   \STATE Initialise $\tilde{\mbf{v}}$ with random vectors \COMMENT{sampled from standard Gaussian distribution}
   \STATE $i \gets 0$
   \WHILE{$i < T$}
      \STATE $\tilde{\mbf{u}} \gets \mbf{A} \tilde{\mbf{v}} \; / \; \|\mbf{A} \tilde{\mbf{v}}\|_{2}$ \COMMENT{(approximation of) the first left singular vector}
      \STATE $\tilde{\mbf{v}} \gets \mbf{A}^{\top} \tilde{\mbf{u}} \; / \; \|\mbf{A}^{\top} \tilde{\mbf{u}}\|_{2}$ \COMMENT{(approximation of) the first right singular vector}
      \STATE $i \gets i + 1$
   \ENDWHILE
   \STATE $\tilde{\sigma} \gets \tilde{\mbf{u}}^{\top} \mbf{A} \tilde{\mbf{v}}$ \COMMENT{(approximation of) the first singular value}
   \IF{$\tilde{\sigma} \leq c$}
   \STATE $\tilde{\mbf{A}} \gets \mbf{A}$
   \ELSE
   \STATE $\tilde{\mbf{A}} \gets c \mbf{A} \; / \; \tilde{\sigma}$
   \ENDIF
\end{algorithmic}
\end{algorithm}

\section{Connections to Prior LV-MOGP Models}
\label{app:connections_to_prior_lv_mogp_models}
In this section, we relate the kernel structure of T-LVMOGP to that of prior Latent Variable MOGP models \citep{dai2017efficient, jiang2025scalable}.
We focus on the (ARD)-RBF kernels used throughout the paper.

\paragraph{T-LVMOGP kernel}
T-LVMOGP defines a valid covariance function on the augmented space $(\mbf{x},\mbf{h}_p)$ via an embedding mapping $\Phi_{\theta}: \mathbb{R}^{D_X} \times \mathbb{R}^{D_H} \rightarrow \mathbb{R}^{D_T}$ and a stationary base kernel:
\begin{equation*}
k_{\text{T-LVMOGP}}((\mbf{x}, \mbf{h}_{p}), (\mbf{x}^{\prime}, \mbf{h}_{p^{\prime}}))
= k_{\text{base}}(\Phi_{\theta}(\mbf{x}, \mbf{h}_{p}), \Phi_{\theta}(\mbf{x}^{\prime}, \mbf{h}_{p^{\prime}})).
\end{equation*}
For an ARD-RBF kernel on $\mathbb{R}^{D_T}$,
\begin{equation*}
k_{\text{RBF}}(\mbf{z}, \mbf{z}^{\prime}) = \sigma^2\exp\!\Big(-\tfrac12 (\mbf{z} - \mbf{z}^{\prime})^\top \mbf{\Lambda}^{-1}( \mbf{z}-\mbf{z}^{\prime})\Big),
\quad \mbf{\Lambda} = \text{diag}(\ell_1^2,\ldots,\ell_{D_T}^2), \quad \sigma^2 \text{ represents outputscale}.
\end{equation*}

\paragraph{LV-MOGP ($Q=1$) as a special case}
\citet{dai2017efficient} introduce LV-MOGP $(Q=1)$ by associating each output $p$ with a latent vector $\mbf{h}_p$
and using a separable kernel
\begin{equation*}
k_{\text{LV-MOGP}}((\mbf{x}, \mbf{h}_{p}), (\mbf{x}^{\prime}, \mbf{h}_{p^{\prime}}))
= k^{(1)}_{X}(\mbf{x}, \mbf{x}^{\prime}) \, k^{(1)}_{H}(\mbf{h}_{p}, \mbf{h}_{p^{\prime}}).
\end{equation*}
Consider T-LVMOGP with $k_{\text{base}}=k_{\text{RBF}}$ on $\mathbb{R}^{D_X+D_H}$ and a blockwise linear embedding
\begin{equation*}
\Phi_{\theta}(\mbf{x},\mbf{h})=
\begin{bmatrix}
\mbf{x}\oslash \boldsymbol{\ell}_X\\
\mbf{h}\oslash \boldsymbol{\ell}_H
\end{bmatrix},
\end{equation*}
where $\oslash$ denotes element-wise (Hadamard) division, $\boldsymbol{\ell}_X\in\mathbb{R}^{D_X}_{>0}$ and $\boldsymbol{\ell}_H\in\mathbb{R}^{D_H}_{>0}$.
Then the exponent separates into input and latent parts, yielding the exact factorisation
\begin{equation*}
k_{\text{T-LVMOGP}}((\mbf{x}, \mbf{h}_{p}), (\mbf{x}^{\prime}, \mbf{h}_{p^{\prime}}))
= \underbrace{\exp\!\Big(-\frac{1}{2} \sum_{d=1}^{D_X}\frac{(x_d-x^{\prime}_d)^2}{\ell_{X,d}^2}\Big)}_{k_X^{(1)}(\mbf{x},\mbf{x}^{\prime})}
\cdot
\underbrace{\exp\!\Big(-\frac{1}{2} \sum_{d=1}^{D_H}\frac{(h_{p,d}-h_{p^{\prime},d})^2}{\ell_{H,d}^2}\Big)}_{k_H^{(1)}(\mbf{h}_p,\mbf{h}_{p^{\prime}})}.
\end{equation*}
Therefore, under the (ARD)-RBF setting, the kernel employed in LV-MOGP is a special case of that of T-LVMOGP obtained by restricting $\Phi_\theta$
to a blockwise linear scaling.

\paragraph{GS-LVMOGP ($Q>1$) as a special case} GS-LVMOGP \citep{jiang2025scalable} extends LV-MOGP to a sum-of-separable kernel:
\begin{equation*}
k_{\text{GS-LVMOGP}}((\mbf{x}, \mbf{h}_{p}), (\mbf{x}^{\prime}, \mbf{h}_{p^{\prime}}))
= \sum_{q=1}^{Q} k_{X}^{(q)}(\mbf{x}, \mbf{x}^{\prime}) \, k_{H}^{(q)}(\mbf{h}_{p,q}, \mbf{h}_{p^{\prime}, q}),
\end{equation*}
where $\mbf{h}_{p} = \{\mbf{h}_{p,q}\}_{q=1}^{Q}$. This can be recovered by T-LVMOGP by using a block-structured embedding
$\Phi_{\theta}(\mbf{x}, \mbf{h}_p)=\mathrm{concat}\left(\Phi_{\theta}^{(1)}(\mbf{x}, \mbf{h}_{p,1}),\ldots,\Phi_{\theta}^{(Q)}(\mbf{x}, \mbf{h}_{p,Q})\right)$
and an additive base kernel
\begin{equation*}
k_{\text{base}}(\mbf{z}, \mbf{z}^{\prime})=\sum_{q=1}^{Q} k_{\text{RBF}}^{(q)}(\mbf{z}^{(q)}, {\mbf{z}^{\prime}}^{(q)}),
\quad \mbf{z}^{(q)}=\Phi_{\theta}^{(q)}(\mbf{x},\mbf{h}_{p,q}),
\end{equation*}
with $\Phi_{\theta}^{(q)}(\mbf{x}, \mbf{h})=[\mbf{x}\oslash \boldsymbol{\ell}^{(q)}_X; \mbf{h}\oslash \boldsymbol{\ell}^{(q)}_H]$.
Each additive component factorises as in the $Q=1$ case, giving exactly the sum-of-separable GS-LVMOGP kernel.
Consequently, GS-LVMOGP is also a special case of T-LVMOGP, while T-LVMOGP generalises these models by allowing
a general (regularised) neural network $\Phi_\theta$ that induces non-separable covariances not representable as $\sum_q k_X^{(q)}k_H^{(q)}$.

Although T-LVMOGP can recover the kernel classes of LV-MOGP ($Q{=}1$) and GS-LVMOGP ($Q{>}1$) through appropriate restrictions on $\Phi_\theta$ and $k_{\text{base}}$, the resulting inference procedures are not equivalent. In particular, T-LVMOGP performs sparse variational inference by placing inducing inputs in the learned embedding space induced by $\Phi_\theta$, whereas LV-MOGP and GS-LVMOGP typically introduce inducing points separately in the input space and the latent space to exploit the separable (or sum-of-separable) structure. Consequently, our ``special-case'' statements should be interpreted at the kernel structure level rather than as an exact equivalence of the entire model.

\section{Zero-Inflated Negative Binomial (ZINB) Likelihood}
\label{app:zinb_likelihood}

\paragraph{Negative Binomial (NB) Distribution} NB distribution can be regarded as a mixture of Poisson distributions with a Gamma prior:
\begin{equation*} 
    \text{Poisson:} \quad y \mid \lambda \sim \text{Pois}(\lambda), \quad p(y \mid \lambda) = e^{-\lambda} \frac{\lambda^{y}}{y!},
\end{equation*}
\begin{equation*}
    \text{Gamma (shape-rate parametrisation):} \quad \lambda \sim \text{Gamma}(r, \beta), \quad p(\lambda) = \frac{\beta^{r}}{\Gamma(r)} \lambda^{r-1} e^{-\beta \lambda}, \; \lambda > 0,
\end{equation*}
\begin{equation*}
    \text{NB}(Y=y) = p_{Y}(y) = \int p(y \mid \lambda) p(\lambda) d \lambda = \binom{y+r-1}{y}\left(\frac{1}{1+\beta}\right)^r\left(\frac{\beta}{1+\beta}\right)^y.
\end{equation*}
Recall the canonical NB form is (count of $y$ failures before $r$ successes, success probability $p$):
\begin{equation*}
    p_{Y}(y) = \binom{y+r-1}{y}(1-p)^yp^r, \quad y=0,1,2,...
\end{equation*}
We recognise that: 
\begin{equation*}
    p = \frac{1}{1+\beta}, \quad 1-p = \frac{\beta}{1+\beta}.
\end{equation*}
The negative binomial has a mean and a variance defined by
\begin{equation*}
    \mathbb{E}[Y] = \frac{r(1-p)}{p} \equiv m, \quad \optn{Var}[Y] = \frac{r(1-p)}{p^2} = m + \frac{m^2}{r}.
\end{equation*}
Consider $r \rightarrow \infty$ while keeping $m$ fixed; $\optn{Var}[Y] \rightarrow m$, and the NB converges to the Poisson distribution.

\paragraph{Mean-Dispersion Parametrisation} NB can be alternatively parametrised by a mean parameter ($m$) and a dispersion parameter ($\alpha=1/r$), defined as follows \footnote{Recall that $\Gamma(n) = (n-1)!$.}:
\begin{equation*}
    \optn{NB}(y \mid m, \alpha) = \frac{\Gamma(r + y)}{\Gamma(r) y!} \left(\frac{r}{r + m}\right)^{1/\alpha} \left(\frac{m}{r+m}\right)^{y}, \quad y \in \{0,1,...\}; \quad \text{with mean $m$, variance $m + m^2\alpha$}. 
\end{equation*}

\paragraph{Extra-Zero Mechanism (Michaelis–Menten)} A zero-inflation probability
\begin{equation*}
    \psi = 1 - \frac{m}{k_m + m},
\end{equation*}
with $k_m \geq 0$, it is borrowed from the Michaelis–Menten saturation curve: small $m \Rightarrow \psi \approx 1$ (many extra zeros); large $m \Rightarrow \psi \rightarrow 0$. If $k_m$ is chosen to be 0, $\psi$ is always 0, and there is no zero inflation.

\paragraph{Zero-Inflated Negative Binomial}
In negative binomial regression, the mean of the distribution is related to the explanatory variables. For GP models, $m$ can be modelled as the positively transformed latent GP function $f$ times a positive scaling factor. 
\begin{align*}
    p(Y=0 \mid f) &= \psi + (1-\psi) \optn{NB}(Y=0 \mid m, \alpha), \\
    p(Y=y>0 \mid f)&= (1-\psi) \optn{NB}(Y=y\mid m, \alpha).
\end{align*}
Recall that $\optn{NB}(0 \mid m, \alpha) = \left(\frac{r}{r+m}\right)^{r} = (1 + \alpha m)^{-1/\alpha}$. The summary of three key formulas:
\begin{align*}
\log P(Y=y \mid f) & = \begin{cases}\log \left[\psi+(1-\psi)(1+\alpha m)^{-1 / \alpha}\right], & y=0 \\
\log (1-\psi)+\log \mathrm{NB}(y \mid m, \alpha), & y>0;\end{cases} \\
\mathbb{E}[Y=y \mid f] & =(1-\psi) m; \\
\operatorname{Var}(Y=y \mid f) & =m(1-\psi)[1+m(\alpha+\psi)] .
\end{align*}

\section{Experimental Details and Additional Results}
The code implementation is available from \url{https://github.com/XiaoyuJiang17/T-LVMOGP-official}.

\label{app:exp_details_and_additional_results}
To ensure a fair comparison, we implement all models, including the proposed T-LVMOGP and the baselines (Ind-GP, SV-LMC, SGPRN, OILMM, G-MOGP, and GS-LVMOGP), using the \texttt{PyTorch} \citep{NEURIPS2019_bdbca288} and \texttt{GPyTorch} \citep{NEURIPS2018_27e8e171} frameworks. Experiments are conducted on a high-performance computing cluster utilising NVIDIA HGX A100-SXM4 or NVIDIA L40S-SXM4 GPUs, with all computations performed in double precision (\texttt{Float64}). For computational efficiency assessment, training times are benchmarked on an NVIDIA RTX-5090 GPU. Throughout this work, we employ the ARD-RBF kernel as the base kernel of T-LVMOGP for simplicity; and $J=1$ Monte Carlo sample is adopted for model training. In terms of tighter variational bounds, the correction term $\Delta$ given by \cref{equ: correction_term_gaussian} is evaluated conditionally on sampled $\mbf{h}_{p}^{(j)}$. When data are missing, we denote $\Omega \subseteq \{1,...,N\} \times \{1, ..., P\}$ as the index set for training data with $|\Omega| = N_{\text{train}}$. We draw input-output pairs from $\Omega$ and scale the mini-batch expected log-likelihood term in $\widetilde{\mcal{L}}_{3}$ by $N_{\text{train}}/(NP)$, where $N_{\text{train}} < NP$ denotes the total number of observed training data, to ensure an unbiased weight for the expected log-likelihood term.

We choose $S=20$ samples for latent variables when making predictions on the test set. The NLL is computed as follows:
\begin{align*}
    \text{NLL} &= -\log q(y_{*} \mid \mbf{x}_{*}) = - \log \int q(\mbf{h}_{p_{*}}) q(f_{p_{*}}(\mbf{x}_{*}) \mid \mbf{x}_{*}, \mbf{h}_{p_{*}}) p(y_{*} \mid f_{p_{*}}(\mbf{x}_{*})) \, d \mbf{h}_{p_{*}} \, d f_{p_{*}}(\mbf{x}_{*}) \\
    &\approx - \log \frac{1}{S} 
    \sum_{s} \int q(f_{p_{*}}(\mbf{x}_{*}) \mid \mbf{x}_{*}, \mbf{h}^{s}_{p_{*}}) p(y_{*} \mid f_{p_{*}}(\mbf{x}_{*})) \, d f_{p_{*}}(\mbf{x}_{*})  \\
    &= \log S - \logsumexp_{s=1,...,S} \left[\log \underbrace{\int q(f_{p_{*}}(\mbf{x}_{*}) \mid \mbf{x}_{*}, \mbf{h}^{s}_{p_{*}}) p(y_{*} \mid f_{p_{*}}(\mbf{x}_{*})) \, d f_{p_{*}}(\mbf{x}_{*})}_{q(y_{*} \mid \mbf{h}^{s}_{p_{*}})} \right].
\end{align*}
For the Gaussian likelihood, the integral for $q(y_{*} \mid \mbf{h}^{s}_{p_{*}})$ can be analytically solved. For non-Gaussian likelihoods, we approximate it by plugging in the mean value of $q(f_{p_{*}}(\mbf{x}_{*}) \mid \mbf{x}_{*}, \mbf{h}^{s}_{p_{*}})$, i.e. 
\begin{equation*}
    q(y_{*} \mid \mbf{h}^{s}_{p_{*}}) \approx p(y_{*} \mid f_{\text{mean*}}), \; \text{where } f_{\text{mean*}} \text{ represents the mean value of } q(f_{p_{*}}(\mbf{x}_{*}) \mid \mbf{x}_{*}, \mbf{h}^{s}_{p_{*}}).
\end{equation*}

\subsection{Dataset Statistics}
The summary of dataset statistics is presented in \cref{tab:app_dataset_stat_summary}. We adopt the Gaussian likelihood for all experiments, except the Spatial Transcriptomics experiment, where we use a zero-inflated negative binomial likelihood (more details about this likelihood can be found in Appendix~\ref{app:zinb_likelihood}).

\begin{table}[!htpb]
    \centering
    \caption{Summary of dataset statistics. $D_X$ and $P$ denote input and output dimensions, respectively. $N$ denotes the number of inputs, and $N_{train}$ and $N_{test}$ denote the total numbers of training and test data points, respectively.}
    \label{tab:app_dataset_stat_summary}
    \begin{tabular}{lrrrrr} 
        \toprule
        Dataset & $D_X$ & $N$ & $P$ & $N_{train}$ & $N_{test}$ \\ 
        \midrule
        EEG     & $1$     & $256$         & $7$     & $1,492$       & $300$ \\
        SARCOS  & $21$    & $48,933$          & $7$     & $171,266$     & $171,265$ \\
        ERA5-random    & $1$     & $30$          & $3,395$ & $71,850$      & $30,000$ \\
        ERA5-block    & $1$     & $30$          & $3,395$ & $71,850$      & $30,000$ \\
        Copernicus Marine  &  $1$  &  $24$  &  $10,873$  &  $260,952$  & $259,344$  \\
        Spatial Transcriptomics      & $2$     & $4,352$       & $5,000$ & $19,582,941$  & $2,177,059$ \\ 
        \bottomrule
    \end{tabular}
\end{table}

\subsection{EEG Prediction}
\label{app:eeg_prediction}

We summarise the experimental settings of our model for this experiment in \cref{tab:eeg_tlvmogp_exp_details}. Additional prediction plots are provided in \cref{fig:eeg_lmc_tlvmogp_F1_FZ}. For baseline models involving latent functions (SGPRN, SV-LMC, and OILMM), we set $L=3$, and $M=200$ inducing points are allocated separately to each latent function for SV-LMC and OILMM. To ensure SGPRN training stability, gradient clipping is used with a maximum norm of $10$. We set the coefficient of the regularisation term, $\beta_2$, to $1.6$ to improve OILMM's predictive performance. For G-MOGP, $M=200$ inducing points are used for each output. Finally, regarding GS-LVMOGP, we set $Q=3$, $D_H=3$, $M_H=4$ and $M_X=50$. Note that $Q=3$ is more flexible than $Q=1,2$ according to \citet{jiang2025scalable}. All baseline models are trained using Adam with a learning rate of $0.01$ for $1000$ epochs. 

\begin{table}[!ht]
    \centering
    \caption{Experimental settings for the EEG experiment.}
    \label{tab:eeg_tlvmogp_exp_details}
    \begin{tabular}{lr}
         \toprule
         Setting & Value \\
         \midrule 
         Number of residual blocks & $3$ \\
         Total number of neural network parameters & $60$ \\
         Spectral norm upper bound (SN-UB) & $0.005$ \\
         Latent dimensionality ($D_{H}$) & $3$ \\
         Number of inducing points $(M)$ & $200$ \\
         Likelihood type & Gaussian likelihood \\
         Output batch size ($P_b$) & $7$ \\
         Input batch size ($N_b$)  & $128$ \\
         Optimiser & Adam, $lr=0.01$ \\
         Training epochs & $1000$ \\ 
         \bottomrule
    \end{tabular}
\end{table}

\begin{figure}[!htpb]
    \centering
    \begin{subfigure}[t]{1.0\linewidth}
        \centering
        \includegraphics[width=\linewidth]{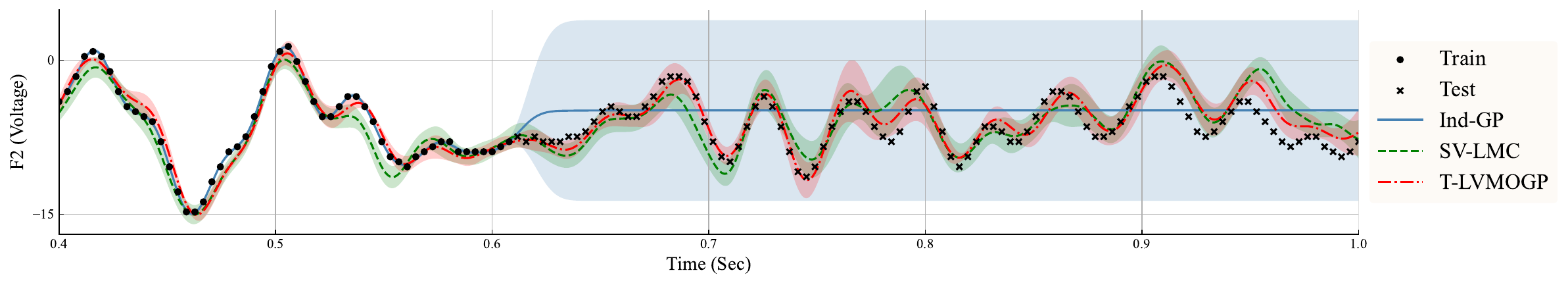}
        \caption{F1}
    \end{subfigure}
    % \vspace{0.2em}
    \begin{subfigure}[t]{1.0\linewidth}
        \centering
        \includegraphics[width=\linewidth]{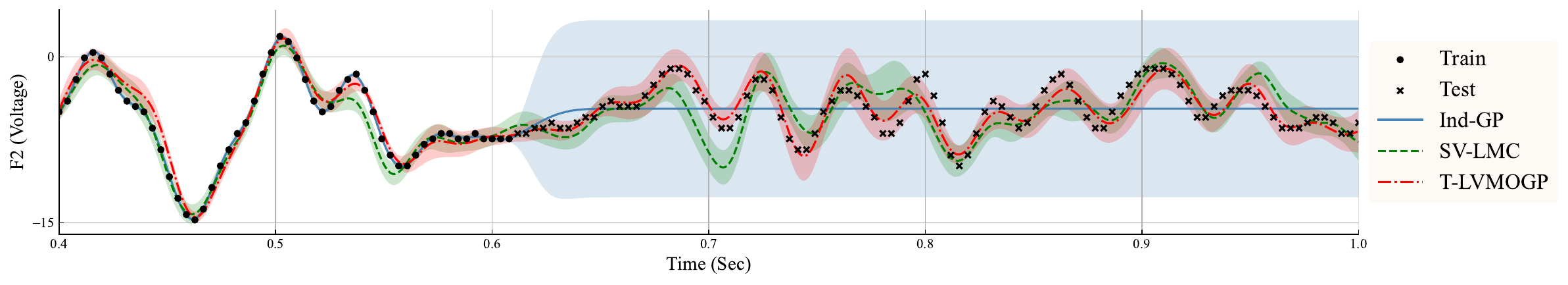}
        \caption{FZ}
    \end{subfigure}

    \caption{
    Prediction plots for Ind-GP, SV-LMC and T-LVMOGP on EEG channels F1 and FZ.
    }
    \label{fig:eeg_lmc_tlvmogp_F1_FZ}
\end{figure}

\subsection{Inverse Dynamics Prediction}
\label{app:inverse_dynamics_prediction}

The SARCOS dataset is publicly available \footnote{\url{https://gaussianprocess.org/gpml/data/}}, and a sketch of the anthropomorphic robot arm is shown in \cref{fig:sketch_sarcos_arm}. The experimental settings are listed in \cref{tab:sarcos_tlvmogp_exp_details}. We set $L=7$ for OILMM and $L=5$ for SV-LMC. For each latent function, we adopt $M=200$ inducing points. For G-MOGP, we use $M=100$ inducing points for each output function. Regarding GS-LVMOGP, we have $Q=3$, $D_H=3$, $M_H=5$ and $M_X=40$. All baseline models are trained with Adam at a learning rate of $0.01$ for $1000$ epochs. 

\begin{figure}[!ht]
    \centering
    \begin{minipage}[c]{0.55\linewidth}
        \centering
        \captionof{table}{Experimental settings for the SARCOS experiment.}
        \label{tab:sarcos_tlvmogp_exp_details}
        \resizebox{\linewidth}{!}{%
            \begin{tabular}{lr}
                 \toprule
                 Setting & Value \\
                 \midrule 
                 Number of residual blocks & $3$ \\
                 Total number of neural network parameters & $2106$ \\
                 Spectral norm upper bound (SN-UB) & $1.0$ \\
                 Latent dimensionality ($D_{H}$) & $5$ \\
                 Number of inducing points $(M)$ & $200$ \\
                 Likelihood type & Gaussian likelihood \\
                 Output batch size ($P_b$) & $7$ \\
                 Input batch size ($N_b$)  & $256$ \\
                 Optimiser & Adam, $lr=0.01$ \\
                 Training epochs & $1000$ \\ 
                 \bottomrule
            \end{tabular}%
        }
    \end{minipage}%
    \hfill 
    \begin{minipage}[c]{0.4\linewidth}
        \centering
        \includegraphics[width=0.55\linewidth]{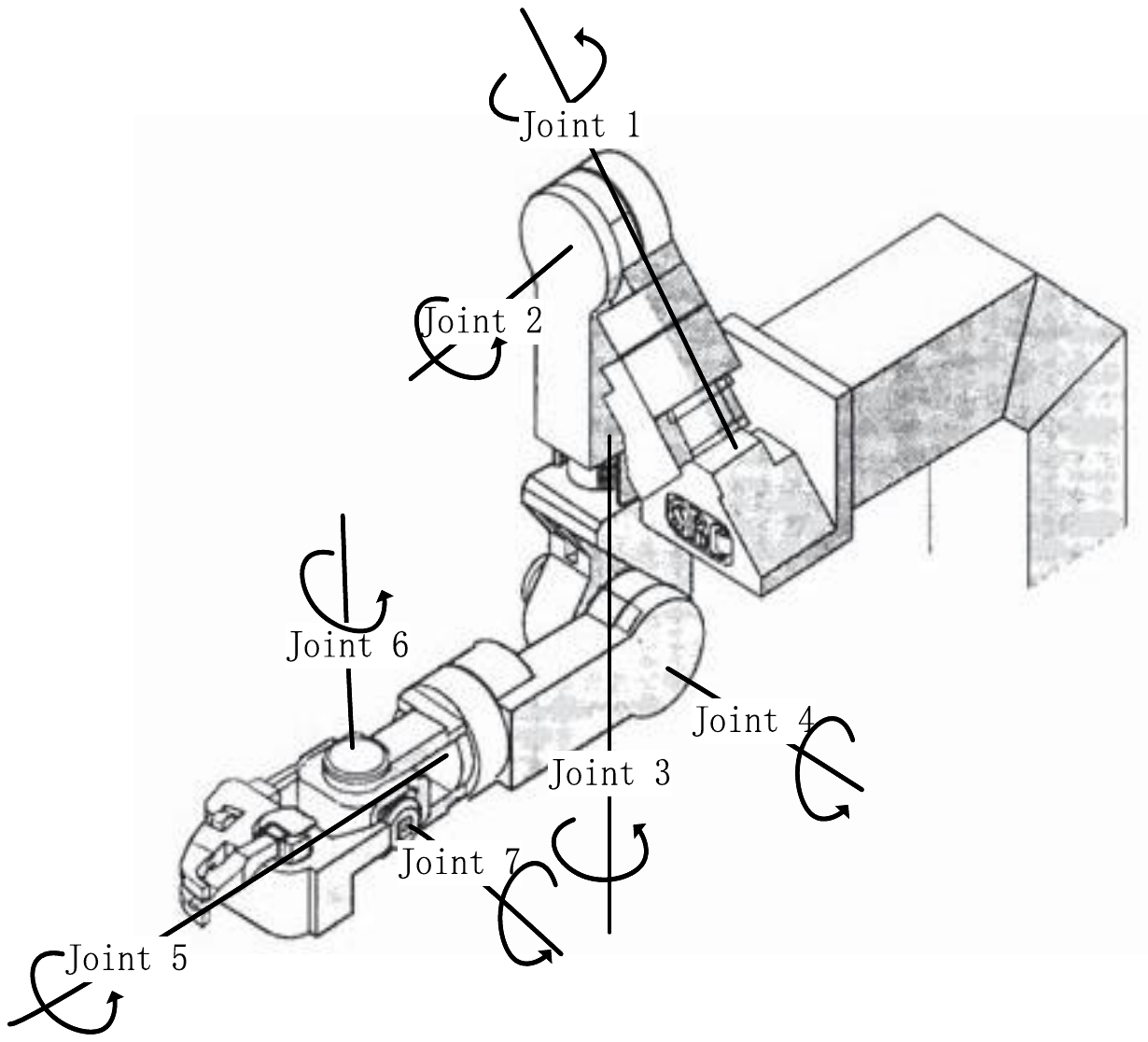}
        \caption{Sketch of the SARCOS anthropomorphic robot arm \citep{vijayakumar2000locally, zhao2016variational}.}
        \label{fig:sketch_sarcos_arm}
    \end{minipage}
\end{figure}

\subsection{Climate Modelling}
\subsubsection{ERA5}
\label{app:climate_modelling_era5}
The ERA5 dataset, a fifth-generation atmospheric reanalysis produced by the ECMWF, is a global reanalysis product that provides data on a $0.25^{\circ} \times 0.25^{\circ}$ regular latitude-longitude grid. This dataset is available from Climate Data Store \footnote{ \url{https://cds.climate.copernicus.eu/datasets}}. 
The experimental settings for T-LVMOGP are provided in \cref{tab:era5_tlvmogp_exp_details}. In terms of baselines, SGPRN, SV-LMC and OILMM adopt $L=100$ latent functions;  SV-LMC and OILMM use $M=10$ inducing points for each latent GP. G-MOGP also utilise $M=10$ inducing points per output. For GS-LVMOGP, we use the following hyperparameters: $Q=3$, $D_H=3$, $M_H=20$ and $M_X=10$. All baseline models are trained with Adam at a learning rate of $0.01$ for $1000$ epochs, except for Ind-GP, which converged within $50$ epochs. The experimental settings remain the same for both random and block-wise data splitting.

\begin{table}[!ht]
    \centering
    \caption{Experimental settings of the ERA5 experiment for both splitting mechanisms.}
    \label{tab:era5_tlvmogp_exp_details}
    \begin{tabular}{lr}
     \toprule
     Setting & Value \\
     \midrule 
     Number of residual blocks & $3$ \\
     Total number of neural network parameters & $60$ \\
     Spectral norm upper bound (SN-UB) & $0.1$ \\
     Latent dimensionality ($D_{H}$) & $3$ \\
     Number of inducing points $(M)$ & $200$ \\
     Likelihood type & Gaussian likelihood \\
     Output batch size ($P_b$) & $128$ \\
     Input batch size ($N_b$)  & $30$ \\
     Optimiser & Adam, $lr=0.01$ \\
     Training epochs & $1000$ \\ 
     \bottomrule
    \end{tabular}%
\end{table}

\begin{table}[!htb]
\centering
\caption{Performance comparison of different models on the ERA5 dataset with block-wise splitting.}
\label{tab:era5_block_wise_baselines_and_ours}
\begin{tabular}{lccc}
\toprule
\textbf{Model} & \textbf{MSE} $\downarrow$ & \textbf{NLL} $\downarrow$ & \textbf{Training Time (s/epoch)} $\downarrow$\\
\midrule
\multicolumn{3}{l}{\textit{Baselines}} \\
Ind-GP \citep{hensman2013gaussian}  & $1.065 \pm 0.000$ & $1.459 \pm 0.000$ & $31.401 \pm 0.792$\\
SGPRN  \citep{ijcai2020p340}  & $1.039 \pm 0.089$ & $59.928 \pm 4.913$ & $0.780 \pm 0.001$\\
G-MOGP  \citep{dai2024graphical} & $0.907 \pm 0.035$ & $1.353 \pm 0.021$ & $0.684 \pm 0.012$\\
SV-LMC   \citep{van2020framework} & $0.162\pm 0.112$ & $1.669 \pm 1.430$ & $0.552 \pm 0.009$ \\
OILMM  \citep{bruinsma2020scalable} & $0.142 \pm 0.100$ & $0.599 \pm 0.660$ & $\mbf{0.267 \pm 0.001}$ \\
GS-LVMOGP  \citep{jiang2025scalable} & $0.019 \pm 0.011$ & $-0.550 \pm 0.230$ & $1.346 \pm 0.020$ \\
\midrule
T-LVMOGP (ours) & $\mbf{0.003 \pm 0.000}$  & $\mbf{-1.503 \pm 0.026}$ & $0.703 \pm 0.014$\\
\bottomrule
\end{tabular}
\end{table}

\begin{figure*}[!htb]
    \centering
    \includegraphics[width=1.0\linewidth]{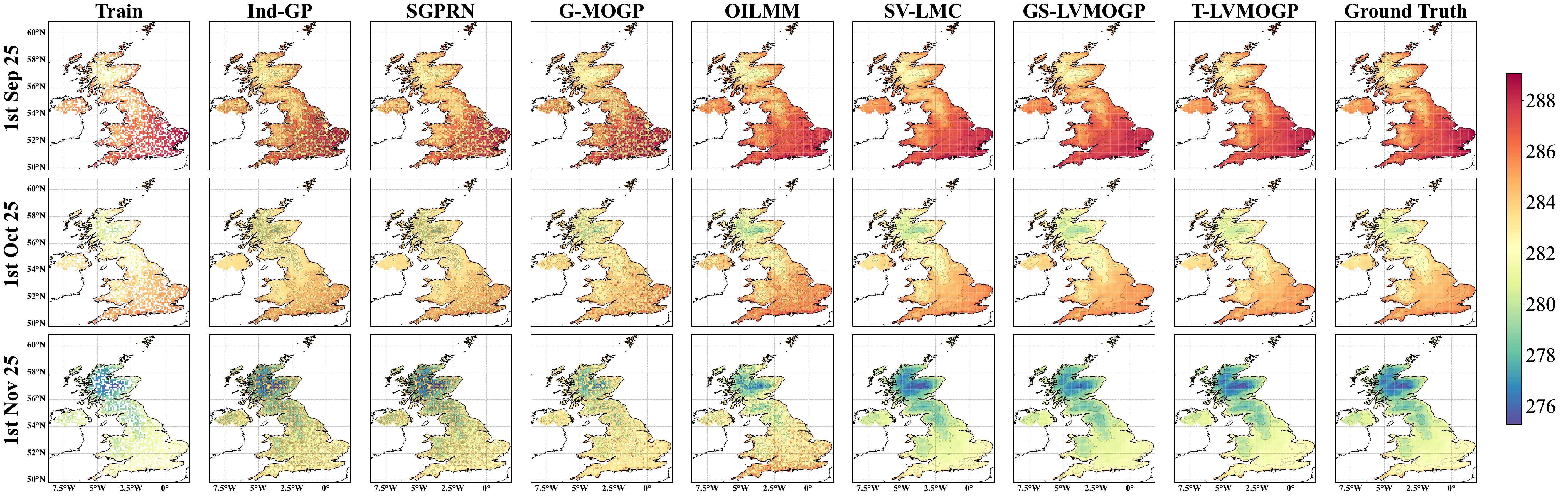}
    \caption{Predictions of different models on the ERA5 dataset with block-wise splitting. The temperature is measured in Kelvin units.}
    \label{fig:era5_block_wise_baselines_and_ours}
\end{figure*}

In this dataset, we additionally consider an alternative train/test partitioning strategy based on block-wise splitting. Concretely, we randomly select $1,500$ outputs and hold out their observations from the first $10$ time points as the test set. Independently, we randomly select another $1,500$ outputs and hold out their observations from the last $10$ time points for testing. This strategy induces block-wise missing data, where observations are missing over contiguous intervals. This poses a greater challenge for MOGP compared to random splitting.

The quantitative comparison between T-LVMOGP and the baseline methods is reported in \cref{tab:era5_block_wise_baselines_and_ours}, with predictive visualisations provided in \cref{fig:era5_block_wise_baselines_and_ours}. As summarised in \cref{tab:era5_block_wise_baselines_and_ours}, our approach achieves the best performance across all methods under both MSE and NLL metrics. In terms of computational efficiency, OILMM is notably faster than in the random-split setting. This improvement arises because the projection from the observation space to the latent space is accelerated when inputs share similar missing patterns, which is the case for block-wise missing. 

We study the impact of the per-layer SN-UB on the ERA5 dataset, with results reported in \cref{tab:era5_random_varing_sn_ub} and \cref{tab:era5_block_wise_varing_sn_ub}. While SN-UB variation only modestly affects MSE, the best NLL is achieved at medium SN-UB values. This trend aligns with our observations from earlier experiments (EEG and SARCOS), suggesting that T-LVMOGP benefits most from a balanced spectral normalisation regime.

\begin{figure}[!ht]
    \centering
    \begin{minipage}[c]{0.47\linewidth}
        \centering
        \captionof{table}{Performance of T-LVMOGP with different SN-UB values on ERA5 random.}
        \label{tab:era5_random_varing_sn_ub}
        \resizebox{\linewidth}{!}{%
            \begin{tabular}{lcc}
            \toprule
            \textbf{SN-UB} & \textbf{MSE} $\downarrow$ & \textbf{NLL} $\downarrow$ \\
            \midrule
            0.5   & $0.0022 \pm 0.0002$ & $-1.556 \pm 0.029$ \\
            0.1   & $0.0022 \pm 0.0001$ & $\mbf{-1.564 \pm 0.024}$ \\
            0.01  & $0.0022 \pm 0.0001$ & $-1.558 \pm 0.033$ \\
            \bottomrule
            \end{tabular}
        }
    \end{minipage}%
    \hfill 
    \begin{minipage}[c]{0.47\linewidth}
        \centering
        \captionof{table}{Performance of T-LVMOGP with different SN-UB values on ERA5 block-wise.}
        \label{tab:era5_block_wise_varing_sn_ub}
        \resizebox{\linewidth}{!}{%
            \begin{tabular}{lcc}
            \toprule
            \textbf{SN-UB} & \textbf{MSE} $\downarrow$ & \textbf{NLL} $\downarrow$ \\ 
            \midrule
            0.5     & $0.0029 \pm 0.0002$ & $-1.475 \pm 0.028$ \\ 
            0.1     & $\mbf{0.0027 \pm 0.0002}$ & $\mbf{-1.503 \pm 0.026}$ \\ 
            0.01    & $0.0028 \pm 0.0002$ & $-1.492 \pm 0.036$ \\
            \bottomrule
            \end{tabular}
        }
    \end{minipage}
\end{figure}

\subsubsection{Copernicus Marine}
\label{app:climate_modelling_copernicus_marine}
The Copernicus Marine dataset \footnote{\url{https://data.marine.copernicus.eu/product/GLOBAL_ANALYSISFORECAST_PHY_001_024/description}} is derived from the Operational Mercator global ocean analysis and forecast system, which assimilates satellite and in-situ observations into numerical models at a horizontal resolution of $1/12^{\circ}$. 
The experimental settings are listed in \cref{tab:cm_tlvmogp_exp_details}. We use GS-LVMOGP with different $Q$ values as baselines, and the hyperparameters are set to $D_H=2$, $M_H=30$, and $M_X=10$. For both GS-LVMOGP and T-LVMOGP, we specify the prior mean of the latent variables as the associated spatial coordinates and fix the prior covariance at $0.01$:
\begin{equation*}
    p(\mbf{h}_{p}) = \mcal{N}(\mbf{h}_{p} \mid \text{sc}_{p}, 0.01 \cdot \mbf{I}), \; \text{where } \text{sc}_{p} \text{ refers to the spatial coordinates associated with the } p\text{-th output.} 
\end{equation*}
The baseline models are trained using Adam with a learning rate of $0.01$ for $1000$ epochs. Regarding predictions on new outputs, we draw $S=20$ samples $\mbf{h}_{p_{*}} \sim \mcal{N}(\mbf{h}_{p_{*}} \mid \text{sc}_{p_{*}}, 0.1 \cdot \mbf{I})$, and use Monte Carlo approach for predicting on new spatial location with coordinates $\text{sc}_{p_{*}}$. Note that we inflate prior variance at test time for robustness.

For T-LVMOGP, the prediction performance with different SN-UB values is listed in \cref{tab:cm_varing_sn_ub}. The MSE remains invariant across different SN-UB settings.  The optimal NLL is achieved at an SN-UB value of $0.7$.

\begin{figure*}[!t]
    \centering
    \begin{minipage}[c]{0.58\linewidth} 
        \centering
        \captionof{table}{Experimental settings of the Copernicus Marine experiment.}
        \label{tab:cm_tlvmogp_exp_details}
        % \small 
        \begin{tabular}{lr}
            \toprule
            Setting & Value \\
            \midrule 
            Number of residual blocks & $3$ \\
            Total number of NN parameters & $36$ \\
            Spectral norm upper bound (SN-UB) & $0.7$ \\
            Latent dimensionality ($D_{H}$) & $2$ \\
            Number of inducing points $(M)$ & $300$ \\
            Likelihood type & Gaussian \\
            Output batch size ($P_b$) & $256$ \\
            Input batch size ($N_b$)  & $24$ \\
            Optimiser & Adam, $lr=0.01$ \\
            Training epochs & $1000$ \\ 
            \bottomrule
        \end{tabular}
    \end{minipage}
    \hfill
    \begin{minipage}[c]{0.38\linewidth} 
        \centering
        \includegraphics[width=0.8\linewidth]{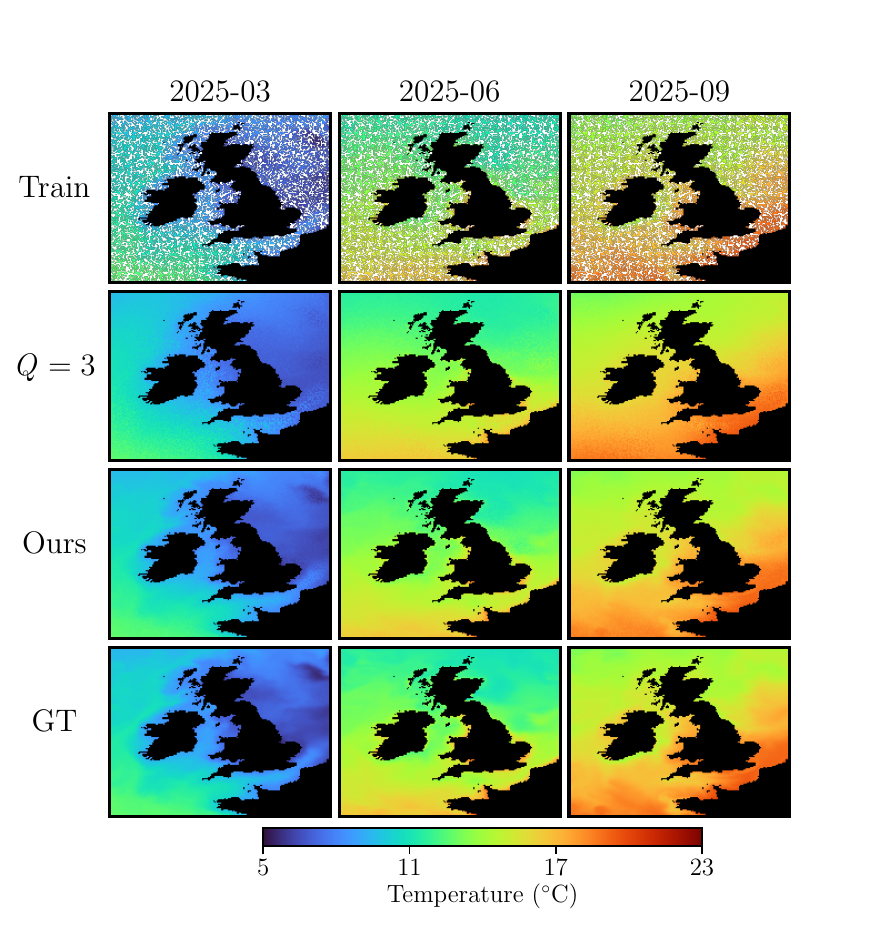}
        \caption{Predictions of GS-LVMOGP with $Q=3$ and T-LVMOGP for output extrapolation task.}
        \label{fig:cm_baseline_and_ours}
    \end{minipage}
\end{figure*}

\begin{table}[!ht]
\centering
\caption{Performance of T-LVMOGP with different SN-UB values on the Copernicus Marine dataset.}
\label{tab:cm_varing_sn_ub}
\begin{tabular}{lcc}
\toprule
\textbf{SN-UB} & \textbf{MSE} $\downarrow$ & \textbf{NLL} $\downarrow$ \\
\midrule
0.5     & $0.029 \pm 0.001$ & $-0.434 \pm 0.013$ \\
0.6     & $0.029 \pm 0.000$ & $-0.438 \pm 0.010$ \\
0.7     & $0.029 \pm 0.000$ & $\mbf{-0.439 \pm 0.011}$ \\
0.8     & $0.029 \pm 0.001$ & $-0.433 \pm 0.018$ \\
\bottomrule
\end{tabular}
\end{table}

\subsection{Spatial Transcriptomics}
\label{app:spatial_transcriptomics}
Spatial transcriptomics \citep{staahl2016visualization} provides a high-resolution view of gene expression while preserving tissue spatial topology, yielding rich yet challenging high-dimensional data. Applying machine learning models is vital to disentangle the resulting spatial variability, enabling the discovery of distinct tissue sub-populations and disease markers. \cref{fig:spatial_gene_expression_plot_4_genes} illustrates spatial gene expressions for $4$ selected genes in the 10x Genomics Visium human prostate cancer dataset \citep{10xvisiumprostate}. The experimental settings are listed in \cref{tab:st_tlvmogp_exp_details}. We use GS-LVMOGP with varying $Q$ as baselines, with hyperparameters set to $D_H=3$, $M_H=20$, and $M_X=25$.
For both GS-LVMOGP and T-LVMOGP, the PCA vectors computed for each gene are established as the prior means of the latent variables. For ZINB likelihood, as mentioned in Appendix~\ref{app:zinb_likelihood}, we utilise the positively transformed latent GP function $f$ times a positive scaling factor to model the distribution's mean, and in our experiments, we choose the \texttt{softplus} function for instantiating this positive transformation. 

% For T-LVMOGP, we additionally consider alternative neural network architectures (Multi-Layer Perceptron (MLP)), and the performance comparison is shown in \cref{tab:st_rcnn_vs_mlp}. Notably, no spectral normalisation is applied for these models. The MLP has many more parameters than the RCNN, achieving lower MSE and MLL at the cost of longer training time. These results reveal that the larger, more flexible neural network performs well on this challenging dataset. This indicates that, for a specific dataset, neural network designs beyond RCNN may yield even better results. However, as shown in the paper, the RCNN adopted in our work achieves competitive results across different tasks, and the exploration of neural network architectures for specific datasets is not the primary focus of this work and is left as future work.

% Contrary to standard expectations, deploying a larger, more flexible network does not lead to overfitting in this experiment. We attribute this robustness to the implicit regularisation induced by the substantial scale of this dataset.

\begin{figure}[!t]
    \centering
    \begin{subfigure}[b]{0.23\textwidth}
        \centering
        \includegraphics[width=\linewidth]{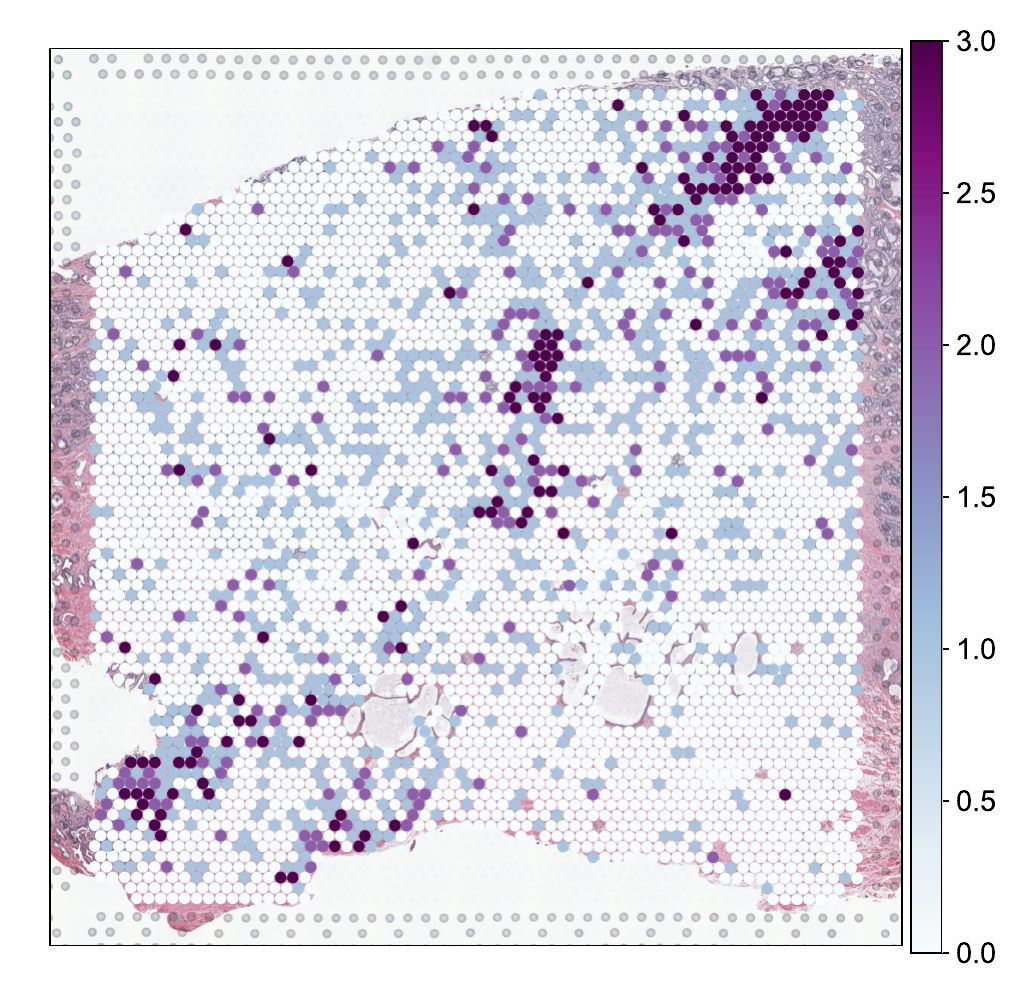} 
        \caption{SPP1}
    \end{subfigure}
    \hfill 
    \begin{subfigure}[b]{0.23\textwidth}
        \centering
        \includegraphics[width=\linewidth]{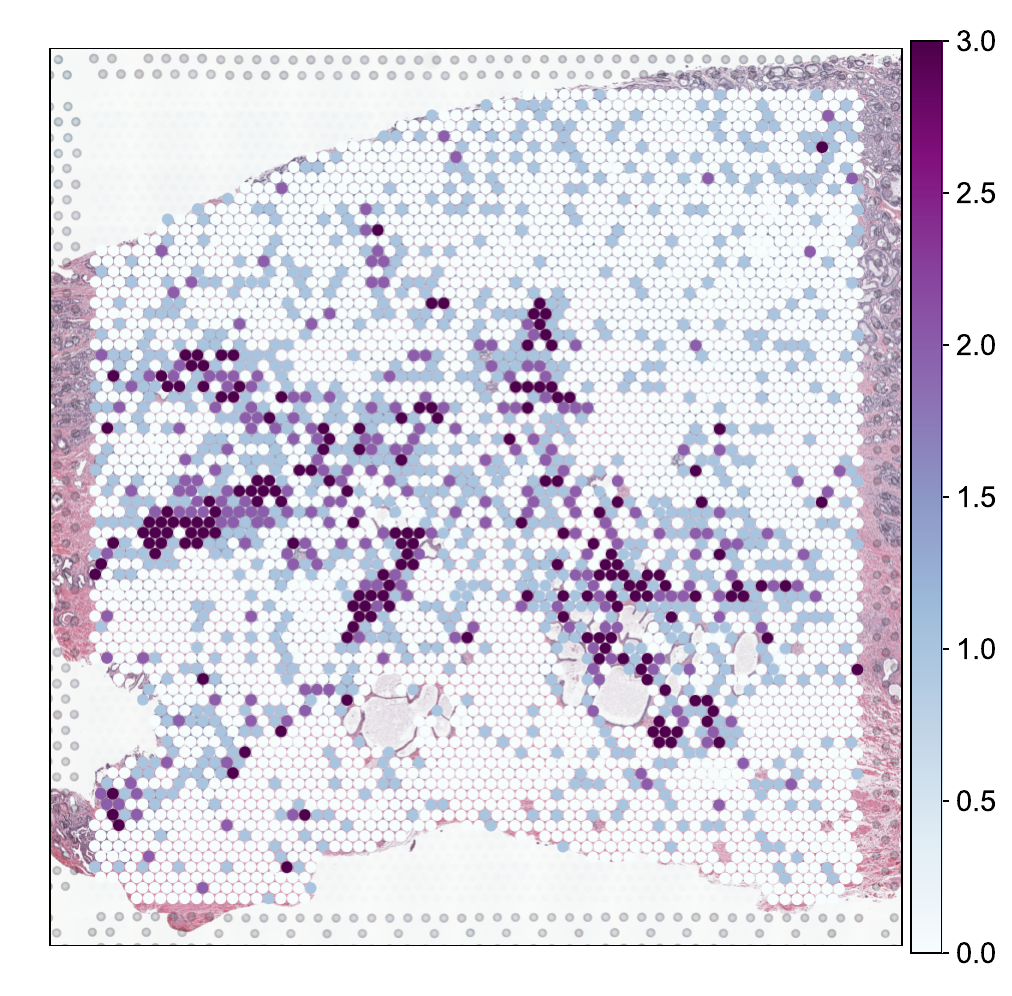}
        \caption{IGHG2}
    \end{subfigure}
    \hfill
    \begin{subfigure}[b]{0.23\textwidth}
        \centering
        \includegraphics[width=\linewidth]{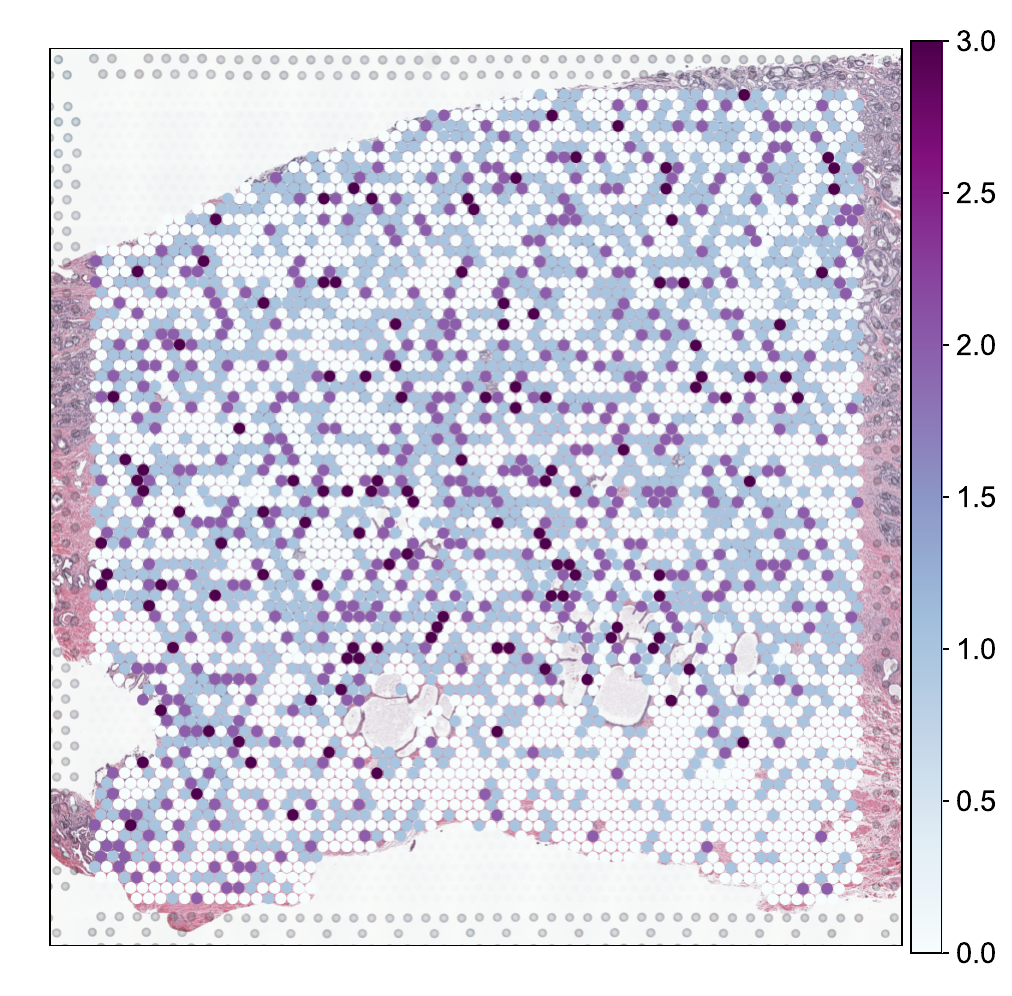}
        \caption{MIB2}
    \end{subfigure}
    \hfill
    \begin{subfigure}[b]{0.23\textwidth}
        \centering
        \includegraphics[width=\linewidth]{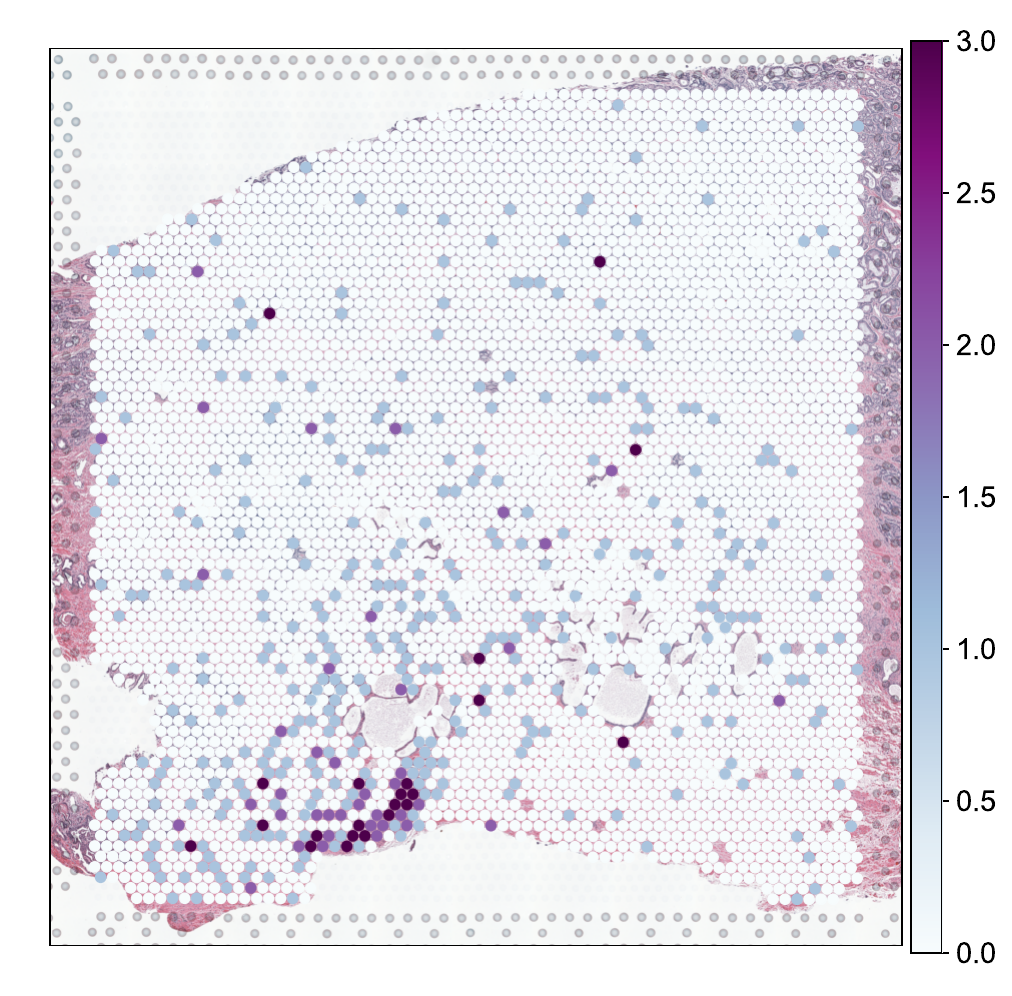}
        \caption{L1CAM}
    \end{subfigure}
    \caption{Spatial plots of gene expressions for 4 selected genes.}
    \label{fig:spatial_gene_expression_plot_4_genes}
\end{figure}

\begin{figure*}[!t]
    \centering
    \begin{minipage}[c]{0.58\linewidth} 
        \centering
        \captionof{table}{Experimental settings of the Spatial Transcriptomics experiment.}
        \label{tab:st_tlvmogp_exp_details}
        % \small 
        \begin{tabular}{lr}
            \toprule
            Setting & Value \\
            \midrule 
            Number of residual blocks & $3$ \\
            Total number of NN parameters & $90$ \\
            Latent dimensionality ($D_{H}$) & $3$ \\
            Number of inducing points $(M)$ & $500$ \\
            Likelihood type & ZINB \\
            Michaelis–Menten constant ($k_m$) & $0.1$ \\
            Scale factor & $1.0$ \\
            Output batch size ($P_b$) & $64$ \\
            Input batch size ($N_b$)  & $256$ \\
            Optimiser & Adam, $lr=0.0005$ \\
            Training epochs & $300$ \\ 
            \bottomrule
        \end{tabular}
    \end{minipage}
    \hfill
    \begin{minipage}[c]{0.38\linewidth} 
        \centering
        \includegraphics[width=1\linewidth]{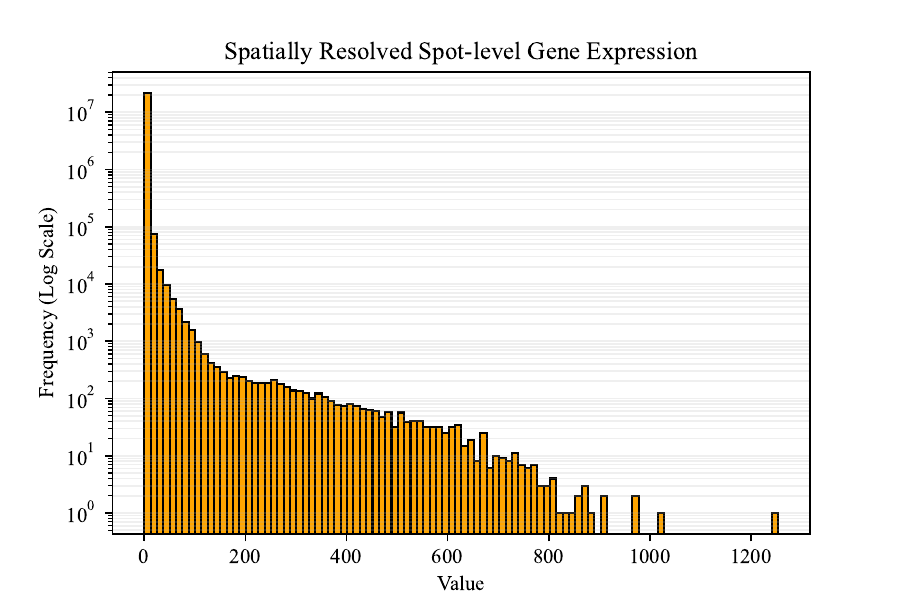}
        \caption{Histogram of the spatially resolved gene expression values in Log Scale. This dataset is highly sparse, with over $75\%$ of values being zero, and exhibits significant overdispersion, with extreme values exceeding $1,200$.}
        \label{fig:ipv_hist}
    \end{minipage}
\end{figure*}

% \begin{table}[!ht]
%     \centering
%     \begin{tabular}{lrrr}
%         \toprule
%         \textbf{Metric} & \textbf{RCNN} & \textbf{MLP} & \textbf{MLP} \\ 
%         \midrule
%         Number of Layers &  $3$  & $5$  & $7$ \\
%         Hidden Layers  & $[5{-}5]$ & $[64{-}128{-}128{-}64]$ & $[64{-}128{-}256{-}256{-}128{-}64]$ \\
%         Number of NN parameters & $90$ & $33,797$ & $148,997$\\
%         MSE & $9.189 \pm 0.342$ & $\mbf{6.709 \pm 0.337}$ & $6.800 \pm 0.366$ \\
%         NLL & $0.674 \pm 0.001$ & $\mbf{0.651 \pm 0.002}$ & $\mbf{0.651 \pm 0.003}$\\
%         Training Time (s/epoch) & $\mbf{73.047 \pm 0.318}$ & $85.301 \pm 0.397$ & $97.23 \pm 0.337$ \\
%         \bottomrule
%     \end{tabular}
%     \caption{Comparison of different neural network architectures used for T-LVMOGP on the Spatial Transcriptomics dataset.}
%     \label{tab:st_rcnn_vs_mlp}
% \end{table}

\subsection{Ablation Study and Analysis}
\label{app:ablation_sn_nn}

\cref{tab:ablation_sn_mse} and \cref{tab:ablation_nn_mse} provide the test MSE of ablation studies regarding spectral normalisation and neural network, respectively. For the EEG experiment, we notice that T-LVMOGP with identity mapping achieved better MSE compared to neural networks. We attribute this phenomenon to the small size of the dataset (only $1,492$ training data points), where simpler models often perform well.

\begin{table}[!ht]
    \centering
    \begin{minipage}[t]{0.48\textwidth}
        \centering
        \caption{Test MSE comparison of T-LVMOGP with and without spectral normalisation.}
        \label{tab:ablation_sn_mse}
        \resizebox{\linewidth}{!}{
            \begin{tabular}{lcc}
            \toprule
            \textbf{Dataset} & \textbf{w/o SN} & \textbf{w/ SN} \\
            \midrule
            EEG & $0.518 \pm 0.179$ & $\mbf{0.115 \pm 0.025}$ \\
            SARCOS & $0.317 \pm 0.447$ & $\mbf{0.022 \pm 0.000}$ \\
            \makecell[l]{ERA5 (random)} & $0.003 \pm 0.001$ & $\mbf{0.002 \pm 0.000}$ \\
            \makecell[l]{ERA5 (block-wise)} & $0.006 \pm 0.004$ & $\mbf{0.003 \pm 0.000}$ \\
            \makecell[l]{Copernicus Marine} & $0.031 \pm 0.002$ & $\mbf{0.029 \pm 0.000}$ \\
            \bottomrule
            \end{tabular}
        }
    \end{minipage}
    \hfill 
    \begin{minipage}[t]{0.48\textwidth}
        \centering
        \caption{Test MSE comparison of T-LVMOGP with and without neural network.}
        \label{tab:ablation_nn_mse}
        \resizebox{\linewidth}{!}{
            \begin{tabular}{lcc}
            \toprule
            \textbf{Dataset} & \textbf{w/o NN} & \textbf{w/ NN} \\
            \midrule
            EEG & $\mbf{0.096 \pm 0.023}$ & $0.115 \pm 0.025$ \\
            SARCOS & $0.030 \pm 0.000$ & $\mbf{0.022 \pm 0.000}$ \\
            \makecell[l]{ERA5 (random)} & $0.002 \pm 0.002$ & $0.002 \pm 0.000$ \\
            \makecell[l]{ERA5 (block-wise)} & $0.003 \pm 0.001$ & $0.003 \pm 0.000$ \\
            \makecell[l]{Copernicus Marine} & $0.030 \pm 0.000$ & $\mbf{0.029 \pm 0.000}$ \\
            \bottomrule
            \end{tabular}
        }
    \end{minipage}
\end{table}

We conduct additional ablation studies of T-LVMOGP on the SARCOS dataset by varying $D_H$ and $M$, while keeping all other hyperparameters the same as those used in \cref{exp:sarcos} of the paper. The results are reported in \cref{tab:abl_sarcos_varying_D_h} and \cref{tab:abl_sarcos_varying_M}. In terms of $D_H$, we find that varying $D_H$ has only a limited effect on test predictive performance, with $D_H = 7$ yielding the best overall result. As for $M$, varying the number of inducing points from 100 to 300 shows a consistent improvement in predictive performance as $M$ increases, consistent with our expectations.

\begin{table}[!htb]
\centering
\caption{Performance comparison of T-LVMOGP on the SARCOS dataset by varying $D_H$ from $1$ to $9$.}
\label{tab:abl_sarcos_varying_D_h}
\begin{tabular}{lccccc}
\toprule
$D_H$ & 1 & 3 & 5 & 7 & 9 \\
\midrule
MSE ($\downarrow$) & $0.024 \pm 0.001$ & $0.024 \pm 0.000$ & $\mathbf{0.022 \pm 0.000}$ & $\mathbf{0.022 \pm 0.000}$ & $0.023 \pm 0.000$ \\
NLL ($\downarrow$) & $-0.462 \pm 0.012$ & $-0.463 \pm 0.006$ & $-0.485 \pm 0.009$ & $\mathbf{-0.493 \pm 0.009}$ & $-0.481 \pm 0.008$ \\
\bottomrule
\end{tabular}
\end{table}

\begin{table}[!htb]
\centering
\caption{Performance comparison of T-LVMOGP on the SARCOS dataset by varying $M$ from $100$ to $300$.}
\label{tab:abl_sarcos_varying_M}
\begin{tabular}{lccccc}
\toprule
$D_H$ & 100 & 150 & 200 & 250 & 300 \\
\midrule
MSE ($\downarrow$) & $0.025 \pm 0.001$ & $0.024 \pm 0.001$ & $0.022 \pm 0.000$ & $0.022 \pm 0.000$ & $\mbf{0.021 \pm 0.002}$ \\
NLL ($\downarrow$) & $-0.429 \pm 0.012$ & $-0.459 \pm 0.016$ & $-0.485 \pm 0.009$ & $-0.496 \pm 0.013$ & $\mathbf{-0.505 \pm 0.029}$ \\
\bottomrule
\end{tabular}
\end{table}

\subsection{Optimisation with Tighter Variational Bounds}
\label{app:exp_optimisation_with_tighter_bounds}
\cref{tab:mse_standard_vs_tighter_bound} reports the test MSE comparison between T-LVMOGP models optimised using standard and tighter variational bounds. In the Spatial Transcriptomics experiment, we do not observe a clear benefit from optimising the ``tighter'' variational bound. We attribute this to the following reasons (1) the limited additional flexibility introduced for non-Gaussian likelihoods: relative to the standard SVGP bound, the new objective augments $q(f \mid \mbf{u})$ with only a single extra parameter \citep{titsias2025new} (the spherical $\mbf{V}$, see Appendix~\ref{app:tighter_bounds}), which may be insufficient to yield a measurable improvement. (2) Moreover, there is no general theoretical guarantee that the new objective is strictly tighter than the standard SVGP bound, which is different from the Gaussian likelihood case. (3) Finally, because the expected log-likelihood term is typically intractable under non-Gaussian likelihoods and must be approximated (e.g., via Gaussian-Hermite quadrature or Monte Carlo), it becomes difficult in practice to ascertain which bound is effectively tighter throughout optimisation.

\begin{table}[!ht]
\centering
\caption{Test MSE comparison of T-LVMOGP with standard and tighter variational bounds.}
\label{tab:mse_standard_vs_tighter_bound}
\begin{tabular}{lcc}
\toprule
\textbf{Dataset} & \textbf{Standard} & \textbf{Tighter} \\
\midrule
SARCOS & $0.022 \pm 0.000$ & $0.022 \pm 0.000$ \\
\makecell[l]{Copernicus Marine} & $0.029 \pm 0.000$ & $\mbf{0.028 \pm 0.001}$ \\
Spatial Transcriptomics & $\mbf{9.189 \pm 0.342}$ & $9.203 \pm 0.332$ \\
\bottomrule
\end{tabular}
\end{table}

\section{Review of SVGP with Tighter Variational Bounds}
\label{app:review_of_svgp_with_tighter_bound}
\subsection{Standard Bounds}
SVGP leverages a small set of $M(\ll N)$ \emph{inducing points} to approximate the true posterior. The inducing locations and the related inducing variables are denoted as follows, respectively
\begin{equation*}
    \mbf{Z}=[\mbf{z}_m]_{m=1}^M\in\mathbb{R}^{M\times D},\quad \mbf{u}=[u_m]_{m=1}^M\in\mathbb{R}^{M}.
\end{equation*}
To keep the GP prior untouched, the joint prior distribution of $\mbf{f}$ and $\mbf{u}$ is formulated by
\begin{equation*}
    \begin{bmatrix} \mbf{f} \\ \mbf{u} \end{bmatrix}\sim \mcal{N}\left(
    \begin{bmatrix} \mbf{f} \\ \mbf{u} \end{bmatrix} \mid \mbf{0},\begin{bmatrix}
        \mbf{K_{XX}} & \mbf{K_{XZ}} \\ \mbf{K_{ZX}} & \mbf{K_{ZZ}}
    \end{bmatrix} 
    \right).
\end{equation*}
Therefore, we have the marginal prior over $\mbf{u}$ and the conditional prior $\mbf{f}|\mbf{u}$:
\begin{align*}
    p(\mbf{u}) & = \mcal{N}(\mbf{u} \mid \mbf{0},\mbf{K_{ZZ}}), \\
    p(\mbf{f} \mid \mbf{u}) &= \mcal{N}(\mbf{f} \mid \mbf{K_{XZ}}\mbf{K}^{-1}_{\mbf{ZZ}}\mbf{u},\mbf{K_{XX}}-\mbf{K_{XZ}}\mbf{K}^{-1}_{\mbf{ZZ}}\mbf{K_{ZX}}).
\end{align*}
SVGP approximates the exact posterior $p(\mbf{f}, \mbf{u} \mid \mbf{y})$ by a variational distribution $q(\mbf{f}, \mbf{u})$ through the minimisation of $\optn{KL}[q(\mbf{f}, \mbf{u}) \mid \mid p(\mbf{f}, \mbf{u} \mid \mbf{y})]$. A critical assumption is the following choice:
\begin{equation*}
    q(\mbf{f}, \mbf{u}) = p(\mbf{f} \mid \mbf{u}) q(\mbf{u}), \quad q(\mbf{u}) \text{ is an optimisable $M$-dimensional variational distribution}.
\end{equation*}
The ELBO is as follows:
\begin{align*}
    \label{app:recap_elbos_general_line1}
    \log p(\mbf{y}) &\geq \int p(\mbf{f} \mid \mbf{u}) q(\mbf{u}) \log \frac{p(\mbf{y} \mid \mbf{f}) p(\mbf{f} \mid \mbf{u}) p(\mbf{u})}{p(\mbf{f} \mid \mbf{u}) q(\mbf{u})} \, d\mbf{f} \, d\mbf{u} \\ 
    &= \int q(\mbf{u}) \log \frac{\exp\{\int p(\mbf{f} \mid \mbf{u}) \log p(\mbf{y} \mid \mbf{f}) \, d \mbf{f}\}p(\mbf{u})}{q(\mbf{u})} \, d\mbf{u} \\
    &= - \optn{KL}\left[q(\mbf{u}) \mid \mid \frac{\exp\{\int p(\mbf{f} \mid \mbf{u}) \log p(\mbf{y} \mid \mbf{f}) \, d \mbf{f}\}p(\mbf{u})}{\int \exp\{\int p(\mbf{f} \mid \mbf{u}) \log p(\mbf{y} \mid \mbf{f}) \, d \mbf{f}\}p(\mbf{u}) \, d\mbf{u}}\right] \\ 
    &\quad + \log \int \exp\left\{\int p(\mbf{f} \mid \mbf{u}) \log p(\mbf{y} \mid \mbf{f}) \, d \mbf{f}\right\}p(\mbf{u}) \, d\mbf{u}.
\end{align*}
\paragraph{Titsias' collapsed bounds}
If we optimise over $q(\mbf{u})$ and obtain the optimal choice $q^{*}(\mbf{u}) \propto \exp\{\int p(\mbf{f} \mid \mbf{u}) \log p(\mbf{y} \mid \mbf{f}) \, d\mbf{f}\}p(\mbf{u})$, and substitute this optimal distribution to the above ELBO, we arrive at the so-called \textit{collapsed bound \citep{titsias2009variational}}:
\begin{equation*}
\log p(\mbf{y}) \geq \mcal{L}=\log \int \exp \left\{\int p(\mbf{f} \mid \mbf{u}) \log p(\mbf{y} \mid \mbf{f}) d \mbf{f}\right\} p(\mbf{u}) \, d \mbf{u}.
\end{equation*}
For the Gaussian likelihood, the above bound takes the form:
\begin{equation}
    \label{app:recap_titsias_gaussian}
    \mcal{L}=\underbrace{\log \mcal{N}\left(\mbf{y} \mid \mbf{0}, \mbf{Q}_{\mbf{XX}}+\sigma_y^2 \mbf{I}\right)}_{\text {DTC log lik }}-\underbrace{\frac{1}{2 \sigma_y^2} \optn{tr}\left(\mbf{K}_{\mbf{XX}}-\mbf{Q}_{\mbf{XX}}\right)}_{\text {trace term }},
\end{equation}
where $\mbf{Q}_{\mbf{XX}} = \mbf{K}_{\mbf{XZ}} \mbf{K}_{\mbf{ZZ}}^{-1}\mbf{K}_{\mbf{ZX}}$ is the $M$-rank Nyström matrix. The inducing points $\mbf{Z}$,  kernel parameters $\theta$ and likelihood noise $\sigma_y^2$ can be optimised by maximising the bound, which requires $\mcal{O}(NM^2)$.
\paragraph{Hensman's uncollapsed bounds}
To support mini-batch training, instead of using ``optimal'' $q(\mbf{u})$, \citet{hensman2013gaussian} maintain an explicit representation of these inducing variables. They introduce a variational distribution $q(\mbf{u}) = \mcal{N}(\mbf{u} \mid \mbf{m}, \mbf{S})$, where $\mbf{m}$ and $\mbf{S}$ are variational parameters. Thus, 
\begin{align}
    \log p(\mbf{y}) &\geq \int q(\mbf{f}) \log p(\mbf{y} \mid \mbf{f}) \, d \mbf{f} - \optn{KL}[q(\mbf{u}) \mid \mid p(\mbf{u})] \nonumber \\
    \label{app:recap_elbos_hensman_line2}
    &= \sum_{n=1}^{N} \mathbb{E}_{q(f_{i})}\left[\log p(y_{i} \mid f_{i}) \right] - \optn{KL}[q(\mbf{u}) \mid \mid p(\mbf{u})],
\end{align}
where $q(f_{i}) = \int p(\mbf{f} \mid \mbf{u}) q(\mbf{u}) \, d\mbf{f}_{-{i}} \, d\mbf{u}$ is the marginal over $f(\mbf{x}_i)$ with respect to $q(\mbf{f}, \mbf{u})$, calculated as follows:
\begin{equation*}
    \label{app:recap_hensman_qfi}
    q(f_{i}) = \mcal{N}(f_{i} \mid \mbf{k}_{\mbf{x}_{i} \mbf{Z}} \mbf{K}_{\mbf{Z} \mbf{Z}}^{-1} \mbf{m}, k_{\mbf{x}_{i} \mbf{x}_{i}} - q_{\mbf{x}_{i} \mbf{x}_{i}} + \mbf{k}_{\mbf{x}_{i} \mbf{Z}} \mbf{K}_{\mbf{Z} \mbf{Z}}^{-1} \mbf{S} \mbf{K}_{\mbf{Z} \mbf{Z}}^{-1} \mbf{k}_{\mbf{Z} \mbf{x}_{i}}).
\end{equation*}
For the Gaussian likelihood, the log-likelihood term in \cref{app:recap_elbos_hensman_line2} has an analytical formula, which results in the following bound:
\begin{align}
    \label{app:recap_hensman_gaussian_line1}
    \log p(\mbf{y}) &\geq \sum_{i=1}^{N} \left\{\mathbb{E}_{q(\mbf{u})}\left[\log \mcal{N}(y_{i} \mid \mbf{k}_{\mbf{x}_{i}\mbf{Z}} \mbf{K}_{\mbf{Z} \mbf{Z}}^{-1} \mbf{u}, \sigma_y^2) \right] - \frac{1}{2 \sigma_y^2} \left(k_{\mbf{x}_{i} \mbf{x}_{i}} -q_{\mbf{x}_{i} \mbf{x}_{i}} \right) \right\} - \optn{KL}[q(\mbf{u}) \mid \mid p(\mbf{u})] \\
    &= \sum_{n=1}^{N} \left\{ \log \mcal{N}(y_{i} \mid \mbf{k}_{\mbf{x}_{i} \mbf{Z}} \mbf{K}_{\mbf{Z} \mbf{Z}}^{-1} \mbf{m}, \sigma_y^2) - \frac{1}{2 \sigma_y^2} \optn{tr}(\mbf{S} \mbf{\Lambda}_{i}) - \frac{1}{2 \sigma_y^2} \left( k_{\mbf{x}_{i} \mbf{x}_{i}} - q_{\mbf{x}_{i} \mbf{x}_{i}} \right)\right\} - \optn{KL}[q(\mbf{u}) \mid \mid p(\mbf{u})], \nonumber
\end{align}
where $\mbf{\Lambda}_{i} = \mbf{K}_{\mbf{Z} \mbf{Z}}^{-1} \mbf{k}_{\mbf{Z} \mbf{x}_{i}} \mbf{k}_{\mbf{x}_{i} \mbf{Z}} \mbf{K}_{\mbf{Z} \mbf{Z}}^{-1}$.
\subsection{Tighter Bounds}
\label{app:tighter_bounds}
The essential design of the variational distribution in SVGP is that  $q(\mbf{f}, \mbf{u}) = p(\mbf{f \mid \mbf{u}}) q(\mbf{u})$, where the variational conditional $q(\mbf{f} \mid \mbf{u})$ matches the prior conditional $p(\mbf{f} \mid \mbf{u})$. Recently, \citet{titsias2025new} and \citet{bui2025tighter} have proposed a new form of variational distribution:
\begin{equation*}
    q(f_{*}) = p(f_{*} \mid \mbf{f}, \mbf{u}) q(\mbf{f} \mid \mbf{u}) q(\mbf{u}), 
    \quad q(\mbf{f} \mid \mbf{u}) = \mcal{N}(\mbf{f} \mid \mbf{K}_{\mbf{X} \mbf{Z}} \mbf{K}_{\mbf{Z} \mbf{Z}}^{-1} \mbf{u}, \mbf{D}_{\mbf{X} \mbf{X}}^{1/2} \mbf{V} \mbf{D}_{\mbf{X} \mbf{X}}^{\top/2}),
    \quad q(\mbf{u}) = \mcal{N}(\mbf{u} \mid \mbf{m}, \mbf{S}),
\end{equation*}
where $\mbf{V}$ is a diagonal matrix with positive diagonal elements $v_1, v_2, ..., v_N$ and $\mbf{D}_{\mbf{X} \mbf{X}} = \mbf{K}_{\mbf{X} \mbf{X}} - \mbf{Q}_{\mbf{X} \mbf{X}}$. 
\begin{align}
    \log p(\mbf{y}) &\geq \int q(\mbf{f} \mid \mbf{u}) q(\mbf{u}) \log \frac{p(\mbf{y} \mid \mbf{f}) p(\mbf{f} \mid \mbf{u}) p(\mbf{u})}{q(\mbf{f} \mid \mbf{u}) q(\mbf{u})} \, d\mbf{f} \, d\mbf{u} \nonumber \\
    \label{app:tighterSVGP_general}
    &= -\optn{KL}[q(\mbf{u}) \mid \mid p(\mbf{u})] - \int q(\mbf{u}) \optn{KL}[q(\mbf{f} \mid \mbf{u}) \mid \mid p(\mbf{f} \mid \mbf{u})] + \int q(\mbf{f} \mid \mbf{u}) q(\mbf{u}) \log p(\mbf{y} \mid \mbf{f}) \, d\mbf{f} \, d\mbf{u}.
\end{align}
Thanks to the structure of the variational distribution, the middle term can be simplified to
\begin{equation}
    \label{app:tighterSVGP_exp_kl_term}
    -\int q(\mbf{u}) \optn{KL}[q(\mbf{f} \mid \mbf{u}) \mid \mid p(\mbf{f} \mid \mbf{u})] = - \frac{1}{2} \optn{tr}(\mbf{V}) + \frac{1}{2} \log |\mbf{V}| + \frac{N}{2} = \frac{1}{2} \sum_{n=1}^{N} \left[1 + \log(v_n) - v_n \right].
\end{equation}
\begin{proposition} [\citep{titsias2025new}, Lemma 3.2] For the Gaussian likelihood, 
    \begin{equation}
        \label{app:tighterSVGP_exp_log_lik_gaussian}
        \mathbb{E}_{q(\mbf{f} \mid \mbf{u})}\left[\log p(\mbf{y} \mid \mbf{f}) \right] = \log \mcal{N}(\mbf{y} \mid \mbf{K}_{\mbf{X} \mbf{Z}} \mbf{K}_{\mbf{Z} \mbf{Z}}^{-1} \mbf{u}, \sigma_y^2 \mbf{I}) - \frac{1}{2}\sum_{n=1}^{N} \frac{v_{n} d_{n}}{\sigma_y^2},
    \end{equation}
where $d_n$ is the $n$-th element in the diagonal of $\mbf{D}_{\mbf{X} \mbf{X}}$, $d_n = k_{\mbf{x}_n \mbf{x}_n} - \mbf{k}_{\mbf{x}_n \mbf{Z}} \mbf{K}_{\mbf{Z} \mbf{Z}}^{-1} \mbf{k}_{\mbf{Z} \mbf{x}_n}$. By combining \cref{app:tighterSVGP_exp_kl_term} and \cref{app:tighterSVGP_exp_log_lik_gaussian}, the ELBO can be 
written as 
\begin{equation}
    \label{app:tighterSVGP_elbo_gaussian}
    \log p(\mbf{y}) \geq \int q(\mbf{u}) \log \frac{\mcal{N}(\mbf{y} \mid \mbf{K}_{\mbf{X} \mbf{Z}} \mbf{K}_{\mbf{Z} \mbf{Z}}^{-1} \mbf{u}, \sigma_y^2 \mbf{I}) p(\mbf{u})}{q(\mbf{u})} \, d \mbf{u} - \frac{1}{2} \sum_{n=1}^{N} \left\{ \frac{v_n d_n}{\sigma_y^2} - 1 - \log(v_n) + v_n \right\}.
\end{equation}
\end{proposition}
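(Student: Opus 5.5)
The plan is to compute the expectation $\mathbb{E}_{q(\mbf{f} \mid \mbf{u})}[\log p(\mbf{y} \mid \mbf{f})]$ directly from Gaussian moments, and then to combine it with \cref{app:tighterSVGP_exp_kl_term} inside \cref{app:tighterSVGP_general}.

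First, the Gaussian likelihood factorises over data points:
\begin{equation*}
\log p(\mbf{y} \mid \mbf{f}) = -\tfrac{N}{2}\log(2\pi\sigma_y^2) - \tfrac{1}{2\sigma_y^2}\|\mbf{y}-\mbf{f}\|^2 .
\end{equation*}
So only the first two moments of $q(\mbf{f} \mid \mbf{u})$ are needed. Write $\boldsymbol{\mu} = \mbf{K}_{\mbf{XZ}}\mbf{K}_{\mbf{ZZ}}^{-1}\mbf{u}$ and $\mbf{C} = \mbf{D}_{\mbf{XX}}^{1/2}\mbf{V}\mbf{D}_{\mbf{XX}}^{\top/2}$. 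The standard identity $\mathbb{E}\|\mbf{y}-\mbf{f}\|^2 = \|\mbf{y}-\boldsymbol{\mu}\|^2 + \optn{tr}(\mbf{C})$ then gives
\begin{equation*}
\mathbb{E}_{q(\mbf{f} \mid \mbf{u})}[\log p(\mbf{y} \mid \mbf{f})] = \log\mcal{N}(\mbf{y} \mid \boldsymbol{\mu}, \sigma_y^2\mbf{I}) - \tfrac{1}{2\sigma_y^2}\optn{tr}(\mbf{C}).
\end{equation*}

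The remaining and main step is to show that $\optn{tr}(\mbf{C}) = \sum_n v_n d_n$. This is where the choice of square root $\mbf{D}_{\mbf{XX}}^{1/2}$ matters. For a generic square root, such as a Cholesky factor, $\optn{tr}(\mbf{D}^{1/2}\mbf{V}\mbf{D}^{\top/2}) = \sum_n v_n [\mbf{D}^{\top/2}\mbf{D}^{1/2}]_{nn}$, and this need not equal $\sum_n v_n d_n$. I would follow \citet{titsias2025new} and interpret the covariance so that its diagonal entries are $v_n d_n$. Concretely, either the factor is taken as $\optn{diag}(\mbf{D}_{\mbf{XX}})^{1/2}$ in the diagonal parameterisation, or the construction ensures $[\mbf{C}]_{nn} = v_n [\mbf{D}_{\mbf{XX}}]_{nn}$. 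Under that reading, $\optn{tr}(\mbf{C}) = \sum_n v_n d_n$ immediately. This also matches the main text's claim that the optimum is $v_n = (1+d_n/\sigma_y^2)^{-1}$.

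Finally, substitute this expectation into the third term of \cref{app:tighterSVGP_general} and average over $q(\mbf{u})$. Absorb $-\optn{KL}[q(\mbf{u}) \| p(\mbf{u})]$ into the integral $\int q(\mbf{u})\log\frac{\mcal{N}(\cdot)p(\mbf{u})}{q(\mbf{u})}\,d\mbf{u}$. The $v$-dependent pieces do not depend on $\mbf{u}$, so they come out of the integral. Adding \cref{app:tighterSVGP_exp_kl_term} then yields $-\tfrac{1}{2}\sum_n\{v_n d_n/\sigma_y^2 - 1 - \log v_n + v_n\}$, which is \cref{app:tighterSVGP_elbo_gaussian}.
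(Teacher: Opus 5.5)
Your overall structure is right: take the Gaussian moment identity, then plug it into \cref{app:tighterSVGP_general} together with \cref{app:tighterSVGP_exp_kl_term}. Your final combination step is exactly what the paper does; the paper imports the expectation identity from \citet{titsias2025new} and only performs that combination. The gap is in the step you yourself flag as the main one, $\tr(\mbf{C}) = \sum_n v_n d_n$, where neither of your proposed readings works.

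\begin{itemize}
\item Replacing the factor by $\optn{diag}(\mbf{D}_{\mbf{XX}})^{1/2}$ changes the variational family to $\mbf{C} = \optn{diag}(v_n d_n)$. This is no longer of the form $\mbf{A}\mbf{V}\mbf{A}^\top$ with $\mbf{A}\mbf{A}^\top = \mbf{D}_{\mbf{XX}}$, so \cref{app:tighterSVGP_exp_kl_term} fails. For example, at $\mbf{V} = \mbf{I}$ the conditional no longer equals $p(\mbf{f} \mid \mbf{u})$, so the KL is not zero. You therefore cannot combine the two results as in your last step.
\item The alternative, that the construction ensures $[\mbf{C}]_{nn} = v_n d_n$, cannot hold for all diagonal $\mbf{V}$ unless $\mbf{D}_{\mbf{XX}}$ is itself diagonal. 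The reason is that $[\mbf{A}\mbf{V}\mbf{A}^\top]_{nn} = \sum_m A_{nm}^2 v_m$ mixes the $v_m$.
\item Appealing to the optimum $v_n = (1 + d_n/\sigma_y^2)^{-1}$ is circular, since that optimum is derived from this very lemma.
\end{itemize}

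The missing observation is that only the trace has to match, not the individual diagonal entries. You already wrote the right general formula, $\tr(\mbf{D}^{1/2}\mbf{V}\mbf{D}^{\top/2}) = \sum_n v_n [\mbf{D}^{\top/2}\mbf{D}^{1/2}]_{nn}$. Now take $\mbf{D}_{\mbf{XX}}^{1/2}$ to be the symmetric positive semidefinite square root. Then $\mbf{D}^{\top/2} = \mbf{D}^{1/2}$ and $\mbf{D}^{\top/2}\mbf{D}^{1/2} = \mbf{D}_{\mbf{XX}}$, so cyclicity of the trace gives $\tr(\mbf{C}) = \tr(\mbf{V}\mbf{D}_{\mbf{XX}}) = \sum_n v_n d_n$.

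Note that with this choice the diagonal entries of $\mbf{C}$ are $\sum_m [\mbf{D}_{\mbf{XX}}^{1/2}]_{nm}^2 v_m$, not $v_n d_n$; only their sum agrees. More generally, any square root with $\optn{diag}(\mbf{A}^\top\mbf{A}) = \optn{diag}(\mbf{A}\mbf{A}^\top)$ works, whereas a Cholesky factor generally does not. The KL identity \cref{app:tighterSVGP_exp_kl_term} holds for every square root $\mbf{A}$ with $\mbf{A}\mbf{A}^\top = \mbf{D}_{\mbf{XX}}$, so the symmetric choice is consistent with both terms. Your final combination step then goes through unchanged.
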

    
\paragraph{Collapsed Bound and Optimal V}
For the Gaussian likelihood, similar to Titsias’ collapsed bounds in SVGP \citep{titsias2009variational}, we can obtain the optimal form for $q(\mbf{u})$, thus
\begin{equation}
    \label{app:tighterSVGP_collapsed_with_M}
    \log p(\mbf{y}) \geq \log \mcal{N}(\mbf{y} \mid 0, \mbf{Q}_{\mbf{X} \mbf{X}} + \sigma_y^2 \mbf{I}) - \frac{1}{2} \sum_{n=1}^{N} \left[\frac{v_n d_n}{\sigma_y^2} - 1 - \log(v_n) + v_n \right].
\end{equation}
Setting the partial derivative w.r.t. $v_n$ to 0, we arrive at $v_n = \frac{\sigma_y^2}{d_n + \sigma_y^2} \leq 1$ and the following bound:
\begin{equation}
    \label{app:tighterSVGP_collapsed_optimal_M}
    \log p(\mbf{y}) \geq -\frac{N}{2} \log(2\pi) - \frac{1}{2} \mbf{y}^{\top}\left(\mbf{Q}_{\mbf{X} \mbf{X}} + \sigma_y^2 \mbf{I} \right)^{-1} \mbf{y} - \frac{1}{2} \log |\mbf{Q}_{\mbf{X} \mbf{X}} + \sigma_y^2 \mbf{I}| - \frac{1}{2} \sum_{n=1}^{N} \log \left(1 + \frac{d_n}{\sigma_y^2} \right).
\end{equation}
\begin{remark}
    When $\mbf{V}$ is the identity matrix, that is $v_n=1 \; \forall n$, \cref{app:tighterSVGP_collapsed_with_M} becomes identical to Titsias’ collapsed bound \cref{app:recap_titsias_gaussian}. It is noticeable that \cref{app:tighterSVGP_collapsed_optimal_M} is tighter (larger) than \cref{app:recap_titsias_gaussian} due to the inequality $\log(1+x) \leq x$.
\end{remark}
\paragraph{Uncollapsed Bound for the Gaussian likelihood} Starting from \cref{app:tighterSVGP_elbo_gaussian}, plugging in optimal $v_n = \frac{\sigma_y^2}{d_n + \sigma_y^2}$, 
\begin{align*}
    \log p(\mbf{y}) &\geq \sum_{n=1}^{N} \left\{\mathbb{E}_{q(\mbf{u})}[\log \mcal{N}(y_n \mid \mbf{k}_{\mbf{x}_n \mbf{Z}} \mbf{K}_{\mbf{Z} \mbf{Z}}^{-1} \mbf{u}, \sigma_y^2)] - \frac{1}{2} \log \left(1 + \frac{d_n}{\sigma_y^2} \right) \right\} - \optn{KL}[q(\mbf{u}) \mid \mid p(\mbf{u})] \\
    &\overset{(\ref{app:recap_hensman_gaussian_line1})}{=} \underbrace{\sum_{n=1}^{N} \left\{\mathbb{E}_{q(\mbf{u})}[\log \mcal{N}(y_n \mid \mbf{k}_{\mbf{x}_n \mbf{Z}} \mbf{K}_{\mbf{Z} \mbf{Z}}^{-1} \mbf{u}, \sigma_y^2)] - \frac{d_n}{2 \sigma_y^2} \right\} - \optn{KL}[q(\mbf{u}) \mid \mid p(\mbf{u})]}_{\text{Hensman's uncollapsed bound for the Gaussian likelihood}}  + \underbrace{\frac{1}{2}\sum_{n=1}^{N} \left\{ \frac{d_n}{\sigma_y^2}
    -  \log \left(1 + \frac{d_n}{\sigma_y^2} \right) \right \}}_{\text{Correction Term }\Delta \geq 0} \\
    &= \underbrace{\sum_{n=1}^{N} \mathbb{E}_{p(f_{n} \mid \mbf{u}) q(\mbf{u})}\left[\log \mcal{N}(y_{n} \mid f_{n}, \sigma_y^2)  \right] - \optn{KL}[q(\mbf{u}) \mid \mid p(\mbf{u})] }_{\text{Another notation}} + \underbrace{\frac{1}{2} \sum_{n=1}^{N} \left\{ \frac{d_n}{\sigma_y^2}
    - \log \left(1 + \frac{d_n}{\sigma_y^2} \right) \right \}}_{\text{Correction Term } \Delta}.
\end{align*}
\paragraph{Uncollapsed Bound for non-Gaussian likelihood} 
Based on \cref{app:tighterSVGP_general} and \cref{app:tighterSVGP_exp_kl_term}, we have the following result:
\begin{equation*}
    \log p(\mbf{y}) \geq -\optn{KL}[q(\mbf{u}) \mid \mid p(\mbf{u})] + \frac{1}{2} \sum_{n=1}^{N} \left[1 + \log(v_n) - v_n\right] + \sum_{n=1}^{N} \int q(f_n \mid \mbf{u}) q(\mbf{u}) \log p(y_{n} \mid f_{n}) \, df_{n} \, d\mbf{u}.
\end{equation*}
The marginalisation $q(f_n \mid \mbf{u}) = \int q(\mbf{f} \mid \mbf{u})d\mbf{f}_{-n}$ cannot be trivially expressed. \citet{titsias2025new} use a simplified version of $q(\mbf{f} \mid \mbf{u})$, in which they choose a spherical $\mbf{V} = v \mbf{I}$ with $v>0$. The ELBO becomes
\begin{equation*}
    \log p(\mbf{y}) \geq -\optn{KL}[q(\mbf{u}) \mid \mid p(\mbf{u})] + \frac{N}{2}[1 + \log(v) - v] + \sum_{n=1}^{N} \underbrace{\mathbb{E}_{q(f_n)}[\log p(y_n \mid f_n)]}_{\text{Monte Carlo, Quadrature ...}},
\end{equation*}
where the marginal is $q(f_n) = \mcal{N}(f_n \mid \mbf{k}_{\mbf{x}_n \mbf{Z}} \mbf{K}_{\mbf{Z} \mbf{Z}}^{-1} \mbf{m}, v d_{n} + \mbf{k}_{\mbf{x}_n \mbf{Z}} \mbf{K}_{\mbf{Z} \mbf{Z}}^{-1} \mbf{S} \mbf{K}_{\mbf{Z} \mbf{Z}}^{-1} \mbf{k}_{\mbf{Z} \mbf{x}_n})$.

\section{Further Related Work}
\label{app:further_related_work}
There are more works about MOGPs in the literature. \citet{yu2005learning} exploit the
equivalence between parametric linear models and GPs for multi-task learning in a hierarchical Bayesian framework. Semiparametric Latent Factor Models (SLFM) can be viewed as a special case of LMC \citep{teh2005semiparametric}. Latent Force Models (LFMs) \citep{alvarez2009latent} induce dependencies by passing latent GP forces through mechanistic differential equations, yielding physically interpretable multi-output kernels. \citet{melkumyan2011multi} develop multi-task covariance functions by combining stationary kernels via Fourier arguments under the process convolution framework. Collaborative MOGP \citep{nguyen2014collaborative} represents output functions as a combination of two elements: projections from latent GPs shared across all outputs and individual GPs capturing the output-specific residuals. \citet{NEURIPS2018_2974788b} propose a hierarchical MOGP model, where Convolution Process layers are embedded in the middle of Deep Gaussian Processes (DGP) \citep{pmlr-v31-damianou13a} for explicitly modelling output dependencies. To enhance expressivity in multichannel signal modelling, the Multi-Output Spectral Mixture kernel (MOSM) \citep{parra2017spectral, altamirano2022nonstationary} is introduced, which is effective for modelling phase differences and delays among outputs. \citet{garcia2022conditional} reformulate the MOGP problem as a sequence of conditioned univariate GPs by augmenting the input space with previous output values—a strategy similarly adopted in autoregressive GP modelling \citep{requeima2019gaussian}. While MOGP literature predominantly focuses on modelling output correlations through complex covariance structures,  \citet{fortuin2019meta} demonstrate that meta-learning deep mean functions can be a competitive strategy for knowledge transfer in low-data regimes.

In terms of applications and specific extensions, \citet{zhao2016variational} propose a convolved MOGP model for complex dynamical systems such as motion capture, traffic flow and robot inverse dynamics. \citet{yousefi2019multi} construct an MOGP model based on LMC for modelling data points that have been aggregated at different input scales. To improve heterogeneous modelling of MOGP, 
\citet{giraldo2021fully} propose a fully natural gradient scheme applicable for both LMC and process
convolution. \citet{lopez2022multioutput} apply an MOGP model with a separable kernel to surrogate modelling of coastal flood hazard assessment. \citet{yoon2022doubly} introduce a specialised variant of LMC that decomposes input effects on outputs into components shared across or specific to tasks and samples. \citet{ma2023large} introduce a multi-output multi-class classification model based on LMC. \citet{ma2023latent} extend LV-MOGP for modelling hierarchical datasets, where several replicates are observed for each output.

\section{Complexity Analysis of Baseline Models}
\label{app:complexity_analysis_of_baseline_models}
% order of characters: Q, N_b, P_b, L, M_X, M_H, M, D_X, D_H, D_T, N, P

We provide a comparative analysis of the baseline models (SV-LMC \citep{van2020framework}, SGPRN \citep{ijcai2020p340}, OILMM \citep{bruinsma2020scalable}, G-MOGP \citep{dai2024graphical}, and GS-LVMOGP \citep{jiang2025scalable}), focusing on their respective model sizes and computational costs based on our implementation. Notably, in the following evaluation, kernel parameters are excluded from the parameter enumeration.

\subsection{SV-LMC}
The total number of trainable parameters is given by \begin{equation*}LP + LMD_{X} + LM + LM^2,\end{equation*}where $LP$ denotes the number of parameters in the LMC mixing coefficients (projection matrix). The remaining terms correspond to the $L$ latent SVGPs: $LMD_{X}$ represents the inducing locations (where $D_X$ is the input dimension), $LM$ accounts for the variational mean parameters, and $LM^2$ represents the elements of the Cholesky factor of the variational covariance matrix. The computational complexity is formulated as \begin{equation*}\mcal{O}(N_bLM^2 + LM^3 + N_b P_b L).\end{equation*}Here, the term $\mcal{O}(N_bLM^2 + LM^3)$ captures the complexity associated with the $L$ latent SVGPs. The term $N_b P_b L$ corresponds to the computational cost of the linear projection from the latent space to the observation space for a mini-batch with $N_b$ inputs and $P_b$ output dimensions.

\subsection{SGPRN}
The total number of trainable parameters consists of:$$LN + 2N^2 + 2L^2 + LNP + P^2,$$where the terms $LN + N^2 + L^2$ correspond to the parameters of the variational distribution $q(\mbf{F})$, and $LNP + N^2 + L^2 + P^2$ correspond to those of the variational distribution $q(\mcal{W})$. In cases where the outputs form a tensor of dimensions $P_1 \times P_2 \times \dots \times P_t$ (such that $P = \prod_{i=1}^{t}P_{i}$), the $P^2$ term is replaced by $\sum_{i=1}^{t} P_i^2$. A comprehensive review of this model is provided in Appendix~\ref{app:review_of_sgprn}. The computational complexity is given by $$\mcal{O}(N^3 + LN^2P + N_bL^2P).$$Here, the $\mcal{O}(N^3)$ term arises from the Cholesky decomposition of $\mbf{L}_{\hat{f}}$; $\mcal{O}(LN^2P)$ results from computing $\mbf{U}_{1} \mbf{U}_{1}^{\top}$; and $\mcal{O}(N_bL^2P)$ represents the cost of computing $\mathbb{E}_q[\mbf{W}_n]^{\top} \mathbb{E}_{q}[\mbf{W}_n]$ for a mini-batch of size $N_b$. The presence of the $\mcal{O}(N^3)$ term limits the tractability of this model for datasets with a massive number of inputs.

\subsection{OILMM}
The total number of trainable parameters is given by \begin{equation*} 2L + LMD_X + LM + LM^2.\end{equation*}Here, the two $L$ terms denote the sum of the number of parameters in the diagonal of the basis matrix $\mbf{H}$ and the additive noise term for the latent GPs. The term $LMD_X$ corresponds to the inducing locations, while $LM + LM^2$ accounts for the variational parameters (specifically, the variational mean and the Cholesky factor of the variational covariance matrix). More details about this model are provided in Appendix~\ref{app:review_of_oilmm}.

The computational complexity for OILMM is: \begin{equation*}\mcal{O}(N_b L M^2 + LM^3 + N_b P_b L + L^3 + L^2P).\end{equation*}The initial term, $\mcal{O}(N_bLM^2 + LM^3)$, represents the complexity associated with the latent SVGPs. The term $\mcal{O}(N_bP_bL)$ quantifies the cost of projecting a mini-batch with $N_b$ inputs and $P_b$ outputs from the latent space to the observation space. The final terms, $\mcal{O}(L^3 + L^2 P)$, arise from the construction of the projection matrix. In practice, outputs within each mini-batch are sorted and grouped by missing-data patterns prior to computation. Consequently, the effective computational cost is also influenced by the sparsity rate and the specific missingness patterns of the dataset.

\subsection{G-MOGP}
The total number of trainable parameters is
\begin{equation*}
    2P^2 + MD_XP + MP + M^2P. 
\end{equation*}
Here, the two $P^2$ terms correspond to the attention weights and attention biases, respectively. The inducing points are defined specifically for every output, thus $MD_XP$ is the total number of parameters for inducing locations, $MP + M^2P$ accounts for the variational distribution parameters. It is important to note that this approach necessitates the construction of $P \times P$ matrices for attention weights and biases. This imposes a constraint on memory scalability as $P$ increases, particularly when compared to our proposed T-LVMOGP. The computational cost is
\begin{equation*}
    \mcal{O}(N_b P_b MP + P_bM^3 + N_bP_bM^2). 
\end{equation*}
The term $\mcal{O}(N_bP_bMP)$ arises from computing the $P$ covariance matrices $\mbf{K}_{\mbf{X}_b \mbf{Z}}$ and aggregating them via weighted averaging for the $P_b$ outputs. The final terms, $\mcal{O}(P_b M^3 + N_bP_bM^2)$, represent the computational complexity associated with the $P_b$ independent SVGPs.

\subsection{GS-LVMOGP}
The total number of model parameters is
\begin{equation*}
    QD_HP + QD_H^2P + M_X M_H + M_X^2 + M_H^2 + QM_H D_H + M_X D_X, 
\end{equation*}
where $QD_HP + QD_H^2P$ counts the variational parameters for latent variables, $M_XM_H$ is the number of parameters for the variational mean of inducing variables; $M_X^2$ (input space) and $M_H^2$ (latent space) are for the Cholesky factors of covariance matrices of variational distributions. The term $QM_HD_H$ is the number of parameters of the inducing points in the latent space, while the $M_XD_X$ is for inducing points in the input space. The computational complexity of this model depends on the choice of $Q$. For $Q=1$, the Cholesky factorisation of $\mbf{K}_{\mbf{f}\mbf{f}} = \mbf{K}^{1}_{\mbf{H}^{1} \mbf{H}^{1}} \otimes \mbf{K}^{1}_{\mbf{X}\mbf{X}}$ can be computed in $\mcal{O}(M_X^3 + M_H^3)$ by exploiting the Kronecker product structure. For $Q>1$, $\mbf{K}_{\mbf{f}\mbf{f}}$ becomes $\sum_{q=1}^{Q} \mbf{K}^{q}_{\mbf{H}^q \mbf{H}^q} \otimes \mbf{K}^{q}_{\mbf{X} \mbf{X}} \in \mathbb{R}^{M_XM_H \times M_XM_H}$, and the Cholesky has cost $\mcal{O}(M_X^3M_H^3)$. 
The overall computational complexity for $Q=1$ is
\begin{equation*}
    \mcal{O}(N_bP_bM_X^2M_H^2 + M_X^3 + M_H^3),
\end{equation*}
the $\mcal{O}(N_b P_b M_X^2 M_H^2)$ comes from $\mbf{L}_{\mbf{Z} \mbf{Z}}^{-1} \mbf{K}_{\mbf{Z} \mbf{X}_{n}}$, where $\mbf{L}_{\mbf{Z} \mbf{Z}}$ is the Cholesky factor of $\mbf{K}_{\mbf{Z} \mbf{Z}} \in \mathbb{R}^{M_X M_H \times M_X M_H}$, and $\mbf{K}_{\mbf{Z} \mbf{X}_{n}} \in \mathbb{R}^{M_X M_H \times N_b P_b}$ is the cross-covariance matrix between inducing points and mini-batch data points. 
For $Q>1$, the computational cost is
\begin{equation*}
    \mcal{O}(N_bP_bM_X^2M_H^2 + M_X^3M_H^3 + QM_X^2M_H^2).
\end{equation*}
The term $QM_X^2M_H^2$ is from summing $Q$ Kronecker products. 

\section{Review of SGPRN and OILMM}
\label{app:review_of_sgprn_oilmm}
In this section, we present a detailed review of the two scalable MOGP models adopted as baselines in this paper—namely,  SGPRN \citep{ijcai2020p340} and OILMM \citep{bruinsma2020scalable}—to provide interested readers with additional background and context.

\subsection{Scalable Gaussian Process Regression Network (SGPRN)}
\label{app:review_of_sgprn}

% \texttt{Official GitHub Repository: \url{https://github.com/shib0li/Scalable-GPRN}}

 Gaussian Process Regression Networks (GPRN) \citep{wilson2011gaussian} are flexible Bayesian models for multi-output regression. A critical bottleneck for GPRNs is the intractability of their inference. Existing methods use a fully factorised structure \citep{wilson2011gaussian} (or a mixture of such structures \citep{nguyen2013efficient}) over all the outputs and latent functions for posterior approximation; however, this can miss the strong posterior dependencies among the latent variables (and projection weights) and hurt the inference quality. \citet{ijcai2020p340} propose tensorising the output space and introducing a matrix/tensor-normal posterior. This structure not only captures the strong posterior dependencies of the weights (or latent functions), but also requires significantly fewer covariance parameters. They train the model via ELBO, exploiting Kronecker product properties to accelerate the computation of the expensive log determinant and matrix inverse. 

\paragraph{GPRN} To model multiple outputs, GPRN first introduce a small set of $L$ latent functions, $\{f_1(\cdot), ..., f_L(\cdot)\}$, where each latent function $f_l(\cdot)$ is sampled from a GP prior. Next, GPRN introduces a $P \times L$ projection matrix $\mbf{W}$, where each element $w_{ij} (1 \leq i \leq P, 1 \leq j \leq L)$ is also considered as a function of the input, and sampled from an independent GP prior. Given an input $\mbf{x}$, the outputs are modelled by
\begin{equation*}
    \mbf{y}(\mbf{x)} = \mbf{W}(\mbf{x})[\mbf{f}(\mbf{x}) + \sigma_f\mbf{\epsilon}] + \sigma_y \mbf{z},
\end{equation*}

where $\mbf{f}(\mbf{x}) = [f_1(\mbf{x}), f_2(\mbf{x}), ..., f_L(\mbf{x})]^{\top}$, $\mbf{\epsilon}$ and $\mbf{z}$ are random noise sampled from standard Gaussian distribution. GPRN accommodates input-dependent (i.e. non-stationary) correlations of the outputs. Given $\mbf{W}(\cdot)$, the covariance of two outputs $y_i(\mbf{x}_a)$ and $y_j(\mbf{x}_b)$ is
\begin{equation*}
    k_{y_i, y_j}(\mbf{x}_a, \mbf{x}_b) = \sum_{l=1}^{L} w_{il}(\mbf{x}_a) k_{\hat{f}_l}(\mbf{x}_a, \mbf{x}_b) w_{jl}(\mbf{x}_b) + \delta_{ij} \delta_{ab}\sigma_y^2,
\end{equation*}
where $k_{\hat{f}_l}(\mbf{x}_a, \mbf{x}_b) = k_{f_l}(\mbf{x}_a, \mbf{x}_b) + \delta_{ab}\sigma_f^2$ and $k_{f_l}$ is the covariance (kernel) function for the latent function $f_l(\cdot)$. The output covariances are determined by the inputs via the projection weights $w_{il}(\mbf{x}_a)$ and $w_{jl}(\mbf{x}_b)$. Therefore, the model can adaptively capture the complex output correlations that vary across the input space.
Following the original paper \citep{wilson2011gaussian}, we assume all the latent functions share the same kernel $k_f(\cdot, \cdot)$ and parameters $\boldsymbol{\theta}_{f}$, and all the projection weights share the same kernel $k_w(\cdot, \cdot)$ and parameters $\boldsymbol{\theta}_w$. We write $\hat{f}_l(\mbf{x}) = f_l(\mbf{x}) + \sigma_f \epsilon$, where $\epsilon \sim \mcal{N}(0, 1)$ and we denote $\hat{\mbf{f}}_{l} = [\hat{f}_l(\mbf{x}_1), \hat{f}_l(\mbf{x}_2), ..., \hat{f}_l(\mbf{x}_N)]^{\top}$ and projection matrix $\mbf{w}_{ij} = [w_{ij}(\mbf{x}_1), w_{ij}(\mbf{x}_2), ..., w_{ij}(\mbf{x}_N)]^{\top}$. $\mbf{W}_n$ is $P \times L$, each $[\mbf{W}_n]_{ij} = w_{ij}(\mbf{x}_n)$, and $\mbf{h}_n = [\hat{f}_1(\mbf{x}_n), ..., \hat{f}_L(\mbf{x}_n)]^{\top}$. The joint distribution then is
\begin{equation*}
    p(\mbf{Y}, \{\mbf{w}_{ij}\}, \{\hat{\mbf{f}}_l\} \mid \mbf{X}, \boldsymbol{\theta}_f, \boldsymbol{\theta}_w, \sigma_f^2, \sigma_y^2) = \prod_{l=1}^{L} \mcal{N}(\hat{\mbf{f}}_l \mid \mbf{0}, \mbf{L}_{\hat{f}}) \prod_{i=1}^{P} \prod_{j=1}^{L} \mcal{N}(\mbf{w}_{ij} \mid \mbf{0}, \mbf{L}_{w}) \prod_{n=1}^{N} \mcal{N}(\mbf{y}_n \mid \mbf{W}_n \mbf{h}_n, \sigma_y^2\mbf{I}).
\end{equation*}

\paragraph{Matrix and Tensor Normal Posteriors} To capture posterior dependency between the latent functions, they use a matrix normal distribution as the joint variational posterior for the $N \times L$ function values $\mbf{F} = [\hat{\mbf{f}}_1,..., \hat{\mbf{f}}_L]$,
\begin{equation*}
    q(\mbf{F}) = \mcal{MN}(\mbf{F} \mid \mbf{M}; \mbf{\Sigma}, \mbf{\Omega}) = \mcal{N}(\optn{vec}(\mbf{F}) \mid \optn{vec}(\mbf{M}), \mbf{\Sigma} \otimes \mbf{\Omega}),
\end{equation*}
where $\mbf{\Sigma} \in \mathbb{R}^{N \times N}$ and $\mbf{\Omega} \in \mathbb{R}^{L \times L}$ are row and column covariance matrices, respectively. 

Next, they consider the variational posterior of the projection weights $\{w_{ij}\}$. To obtain a joint posterior yet with compact parameterisation, they tensorise the $ P$-dimensional output space into an $M$-mode tensor space \footnote{In this section, we overload the notation $M$ to represent the number of tensor modes, distinct from the number of inducing points as defined in the main text.}, $p_1 \times p_2 \times \cdots \times p_M$ where $P = \prod_{m=1}^{M} p_m$. For simplicity, they set $p_1 = \cdots = p_M = p = \sqrt[M]{P}$. Then they organise all the weights into an $N \times L \times p_1 \times p_2 \times \cdots \times p_M$ tensor $\mcal{W}$. They introduce the tensor normal distribution:
\begin{equation*}
    q(\mcal{W}) = \mcal{TN}(\mcal{W} \mid \mcal{U}; \mbf{\Gamma}_1, \mbf{\Gamma}_2, \cdots, \mbf{\Gamma}_{M+2}) = \mcal{N}(\optn{vec}(\mcal{W}) \mid \optn{vec}(\mcal{U}), \mbf{\Gamma}_1 \otimes \mbf{\Gamma}_2 \otimes \cdots \otimes \mbf{\Gamma}_{M+2}),
\end{equation*}
where $\mcal{U}$ is the mean tensor, $\{\mbf{\Gamma}_1, \mbf{\Gamma}_2, ..., \mbf{\Gamma}_{M+2}\}$ are covariance matrices in each mode, $\mbf{\Gamma}_1 \in \mathbb{R}^{N \times N}, \mbf{\Gamma}_2 \in \mathbb{R}^{L \times L}$, all others are $p \times p$. 

Finally, they choose their variational distribution as $q(\mbf{F}, \mcal{W}) = q(\mbf{F}) q(\mcal{W})$. 

\paragraph{Simplified ELBO}
\begin{equation*}
    \mcal{L} = \mathbb{E}_q\left[\log \frac{p\left(\mbf{Y}, \mbf{F}, \mcal{W} \mid \boldsymbol{\theta}_f, \boldsymbol{\theta}_w, \sigma_f^2, \sigma_y^2\right)}{q(\mbf{F}, \mcal{W})}\right] = -\optn{KL}[q(\mbf{F}) \mid\mid p(\mbf{F})] - \optn{KL}\left[q(\mcal{W}) \mid\mid p(\mcal{W})\right] + \mathbb{E}_q[\log p(\mbf{Y} \mid \mbf{X}, \mcal{W}, \mbf{F})],
\end{equation*}
and we can get tensorised prior for $\mbf{F}$ and $\mcal{W}$ as follows: 
\begin{equation*}
    p(\mbf{F}) = \mcal{MN}(\mbf{F} \mid \mbf{0}; \mbf{L}_{\hat{f}}, \mbf{I}_{L}) = \mcal{N}(\optn{vec}(\mbf{F}) \mid \mbf{0}, \mbf{L}_{\hat{f}} \otimes \mbf{I}_{L}),
\end{equation*}
\begin{equation*}
    p(\mcal{W}) = \mcal{TN}(\mcal{W} \mid 0; \mbf{L}_{w}, \mbf{I}_{L}, \mbf{I}_{p_1}, ..., \mbf{I}_{p_M}) = \mcal{N}(\optn{vec}(\mcal{W}) \mid 0, \mbf{L}_{w} \otimes \mbf{I}_{L} \otimes \mbf{I}_{p_1} \otimes \cdots \otimes \mbf{I}_{p_M}).
\end{equation*}
The KL terms are as follows:
\begin{equation*}
\optn{KL}[q(\mbf{F}) \| p(\mbf{F})]=\frac{1}{2}\left[\tr\left(\mbf{L}_{\hat{f}}^{-1} \boldsymbol{\Sigma}\right) \tr(\boldsymbol{\Omega})-NL +\tr\left(\mbf{L}_{\hat{f}}^{-1} \mbf{M} \mbf{M}^{\top}\right) +L \log \left|\mbf{L}_{\hat{f}}\right|-(L \log |\boldsymbol{\Sigma}|+N \log |\boldsymbol{\Omega}|)\right],
\end{equation*}
\begin{equation*}
\optn{KL}[q(\mcal{W}) \| p(\mcal{W})]=\frac{1}{2}\left[\tr\left(\mbf{L}_w^{-1} \boldsymbol{\Gamma}_1\right) \prod_{m=2}^{M+2} \tr\left(\boldsymbol{\Gamma}_m\right)-NLP+PL \log \left|\mbf{L}_w\right| +\tr\left(\mbf{L}_{w}^{-1} \mbf{U}_1 \mbf{U}_1^{\top}\right)-\sum_{m=1}^{M+2} \frac{N P L}{p_m} \log \left|\boldsymbol{\Gamma}_m\right|\right],
\end{equation*}
where $\mbf{U}_1$ is an $N \times PL$ matrix, obtained by unfolding the mean tensor $\mcal{U}$ at mode 1. 

For the Gaussian likelihood, the expected log likelihood term is
\begin{equation*}
    \mathbb{E}_q[\log p(\mbf{Y} \mid \mbf{X}, \mcal{W}, \mbf{F})] = -\frac{NP}{2} \log 2\pi - NP \log \sigma_y - \sum_{n=1}^{N} \frac{1}{2 \sigma_y^2}\big[\mbf{y}_n^{\top} \mbf{y}_n - 2 \mbf{y}_n^{\top} \mathbb{E}_{q}[\mbf{W}_n]\mathbb{E}_{q}[\mbf{h}_n] + \tr(\mathbb{E}_{q}[\mbf{W}_n^{\top}\mbf{W}_n] \mathbb{E}_{q}[\mbf{h}_n\mbf{h}_n^{\top}])\big],
\end{equation*}
where $\mathbb{E}_q[\mbf{W}_n]$ is obtained by taking the $n$-th slice of $\mcal{U}$ at mode 1, and then reorganise it into an $P \times L$ matrix, $\mathbb{E}_q[\mbf{h}_n]$ the $n$-th row vector of $\mbf{M}$. The moments are calculated as follows:
\begin{equation*}
    \mathbb{E}_{q}[\mbf{W}_n^{\top}\mbf{W}_n] = \mathbb{E}_q[\mbf{W}_n]^{\top} \mathbb{E}_{q}[\mbf{W}_n] + \mbf{\Gamma}_1[n,n] \mbf{\Gamma}_2 \prod_{m=3}^{M+2} \tr(\mbf{\Gamma}_m),
\end{equation*}
\begin{equation*}
    \mathbb{E}_q[\mbf{h}_n \mbf{h}_n^{\top}] = \mathbb{E}_{q}[\mbf{h}_n] \mathbb{E}_{q}[\mbf{h}_n]^{\top} + \mbf{\Sigma}[n,n] \mbf{\Omega}.
\end{equation*}

In general, we do not have a closed-form expression for the expected log-likelihood, so we use Monte Carlo methods to approximate it. For diagonal ``noise'' models, the expected log likelihood is factorised across inputs. We sample $\mcal{W}^{(s)}$ and $\mbf{F}^{(s)}$ from $q(\mcal{W})$ and $q(\mbf{F})$, respectively. For a given input $\mbf{x}_n$, 
\begin{equation*}
    \mathbb{E}_{q}[\log p(\mbf{y}_n \mid \mbf{x}_n, \mcal{W}, \mbf{F})] \approx \log p(\mbf{y}_n \mid \mbf{x}_n, \mcal{W}^{(s)}, \mbf{F}^{(s)}) = \log p(\mbf{y}_n \mid \mbf{W}_n^{(s)} \mbf{h}_{n}^{(s)}).
\end{equation*}
The sampling can be accelerated by exploiting the Kronecker product structure. Firstly, we sample tensors of i.i.d. standard Gaussian noises, $\epsilon_{f}$ of shape $N \times L$ and $\epsilon_{w}$ of shape $N \times L  \times p_1 \times \cdots \times p_M$, respectively. Then we generate $\mbf{F}^{(s)} = \mbf{M} + \epsilon_f \times_1 \text{chol}(\mbf{\Sigma}) \times_2 \text{chol}(\mbf{\Omega})$; $\mcal{W}^{(s)} = \mcal{U} + \epsilon_{w} \times_1 \text{chol}(\mbf{\Gamma}_1) \times_2 \text{chol}(\mbf{\Gamma}_2) \times \cdots \times_{M+2} \text{chol}(\mbf{\Gamma}_{M+2})$, where chol($A$) refers to the Cholesky factor of $A$. The computation of ELBO involves only matrices at each tensor mode, making the optimisation much easier.

\paragraph{Prediction} Given a new input $\mbf{x}^{*}$, we aim to use the estimated variational posterior to predict the output $\mbf{y}^{*}$. The posterior mean of $\mbf{y}^{*}$ is computed by $\mathbb{E}[\mbf{y}^{*} \mid \mbf{x}^{*}, \mbf{X}, \mbf{Y}] = \mathbb{E}[\mbf{W}(\mbf{x}^{*})] \mathbb{E}[\hat{\mbf{f}}(\mbf{x}^{*})]$ where $\hat{\mbf{f}}(\mbf{x}^{*}) = [\hat{f}_1(\mbf{x}^{*}), ..., \hat{f}_{L}(\mbf{x}^{*})]^{\top}$. Each $[\mathbb{E}[\mbf{W}(\mbf{x}^{*})]]_{ij}$ is computed by $\mbf{k}_{w}^{*}\mbf{L}_{w}^{-1}\mbf{v}_{ij}$ where $\mbf{k}_{w}^{*} = [k_w(\mbf{x}^{*}, \mbf{x}_1), ..., k_w(\mbf{x}^{*}, \mbf{x}_N)]$ and $\mbf{v}_{ij}$ is obtained by first organising $\mcal{U}$ to an $N\times L \times P$ tensor $\hat{\mcal{U}}$ and then taking the fiber $\hat{\mcal{U}}(:,i,j)$. Each $\mathbb{E}[\hat{f}_l(\mbf{x}^{*})] = \mbf{k}_{\hat{f}}^{*} \mbf{L}_{\hat{f}}^{-1}\mbf{M}(:,l)$ where $\mbf{k}_{\hat{f}}^{*} = [k_{\hat{f}}(\mbf{x}^{*}, \mbf{x}_1),...,k_{\hat{f}}(\mbf{x}^{*}, \mbf{x}_N)]$. The predictive distribution, however, does not have a closed form due to the likelihood being non-Gaussian w.r.t the projection weights and latent functions. We adopt Monte-Carlo approximations, i.e. we generate a set of i.i.d. posterior samples of $\mbf{W}(\mbf{x}^{*})$ and $\hat{\mbf{f}}(\mbf{x}^{*})$, denoted by $\{\widetilde{\mbf{W}}^{(s)}, \widetilde{\mbf{f}}^{(s)}\}_{s=1}^{S}$, and then approximate $p(\mbf{y}^{*} \mid \mbf{x}^{*}, \mbf{X}, \mbf{Y}) \approx \frac{1}{S} \sum_{s=1}^{S} \mcal{N}(\mbf{y}^{*} \mid \widetilde{\mbf{W}}^{(s)} \widetilde{\mbf{f}}^{(s)}, \sigma_y^2 \mbf{I})$.

For prediction, we discard cross-covariances among different test inputs by default. Therefore, w.l.o.g., we abuse the notation $\mbf{x}^{*}$ to indicate only one test input.
\begin{equation*}
    q(\hat{\mbf{f}}(\mbf{x}^{*})) = \mcal{N}\Bigg(
    \hat{\mbf{f}}(\mbf{x}^{*}) \mid 
    \underbrace{\mbf{k}_{\hat{f}}^{*} \mbf{L}_{\hat{f}}^{-1} \mbf{M}}_{\boldsymbol{\mu}_{\hat{f}} \in \mathbb{R}^{N_{\text{test}} \times L}}
    ,
    \underbrace{\big(k_{\hat{f}}(\mbf{x}^{*}, \mbf{x}^{*}) -\mbf{k}_{\hat{f}}^{*}\mbf{L}_{\hat{f}}^{-1}\mbf{k}_{\hat{f}}^{*\top}\big)}_{\mbf{L}^{\text{pred}}_{\hat{f}} \in \mathbb{R}^{N_{\text{test}} \times N_{\text{test}}}} \otimes \mbf{I}_{L}
    + \underbrace{\big(\mbf{k}_{\hat{f}}^{*}\mbf{L}_{\hat{f}}^{-1} \mbf{\Sigma}\mbf{L}_{\hat{f}}^{-1}\mbf{k}_{\hat{f}}^{*\top}\big)}_{\mbf{L}_{\hat{f}}^{\text{VI}} \in \mathbb{R}^{N_{\text{test}} \times N_{\text{test}}}} \otimes \mbf{\Omega}
    \Bigg).
\end{equation*}
$\mbf{L}_{f}^{\text{pred}}$ and $\mbf{L}_{f}^{\text{VI}}$ can be approximated by diagonal matrices for simplicity. First, generate two tensors of i.i.d. standard Gaussian noise with shape $s \times N_{\text{test}} \times L$, denoted as $\epsilon_{f}^{(1)}$ and $\epsilon_{f}^{(2)}$. Then, 
$\widetilde{\mbf{F}} = \{\widetilde{\mbf{f}}^{(s)}\}_{s=1}^{S} = \boldsymbol{\mu}_{\hat{f}} + \epsilon_{f}^{(1)} \times_2 \text{chol}(\mbf{L}_{\hat{f}}^{\text{pred}}) + \epsilon_{f}^{(2)} \times_2 \text{chol}(\mbf{L}_{\hat{f}}^{\text{VI}}) \times_3 \text{chol} (\mbf{\Omega})$, where $s$ denotes the number of Monte Carlo samples, and $\widetilde{\mbf{F}} \in \mathbb{R}^{s \times N_{\text{test}} \times L}$. 
\begin{align*}
    q(\mbf{W}(\mbf{x}^{*})) &= \mcal{N}\Big(\optn{vec}(\mbf{W}(\mbf{x}^{*})) \mid \optn{vec}(\underbrace{\mcal{U} \times_1 \mbf{k}_{w}^{*}\mbf{L}_{w}^{-1}}_{\boldsymbol{\mu}_{w} \in \mathbb{R}^{N_{\text{test}}\times L \times p_1 \times \cdots \times p_M}}); \underbrace{\big(k_{w}(\mbf{x}^{*}, \mbf{x}^{*})-\mbf{k}_{w}^{*}\mbf{L}_{w}^{-1}\mbf{k}_{w}^{*\top}\big)}_{\mbf{L}_{w}^{\text{pred}} \in \mathbb{R}^{N_{\text{test}} \times N_{\text{test}}}} \otimes \mbf{I}_{L} \otimes \mbf{I}_{P} \\&\qquad + \underbrace{\big(\mbf{k}_{w}^{*} \mbf{L}_{w}^{-1} \mbf{\Gamma}_1 \mbf{L}_{w}^{-1} \mbf{k}_{w}^{*\top} \big)}_{\mbf{L}_{w}^{\text{VI}} \in \mathbb{R}^{N_{\text{test}} \times N_{\text{test}}}} \otimes \mbf{\mbf{\Gamma}}_{2} \otimes \mbf{\Gamma}_3 \otimes \cdots \otimes \mbf{\mbf{\Gamma}_{M+2}}
    \Big).
\end{align*}
Similarly, generate two tensors of i.i.d. standard Gaussian noise with shape $s \times N_{\text{test}} \times p_1 \times p_2 \times \cdots \times p_M$, denoted as $\epsilon_{w}^{(1)}$ and $\epsilon_{w}^{(2)}$. Then, $\widetilde{\mbf{W}} = \{\widetilde{\mbf{W}}^{(s)}\}_{s=1}^{S} = \boldsymbol{\mu}_{w} + \epsilon_{w}^{(1)} \times_2 \text{chol}(\mbf{L}_{w}^{\text{pred}}) + \epsilon_{w}^{(2)} \times_2 \text{chol}(\mbf{L}_{w}^{\text{VI}}) \times_3 \text{chol} (\mbf{\Gamma}_{2}) \times \cdots \times_{M+2} \text{chol} (\mbf{\Gamma}_{M+2})$.

%%%%%%%%%%%%%%%%%%%%%%%%%%%%%%%%%%%%%%%%%%%%%%%%%%%%%%%%%%%%%%%%%%%%%%%%%%%%%%%
%%%%%%%%%%%%%%%%%%%%%%%%%%%%%%%%%%%%%%%%%%%%%%%%%%%%%%%%%%%%%%%%%%%%%%%%%%%%%%%

\subsection{Orthogonal Instantaneous Linear Mixing Model (OILMM)}

\label{app:review_of_oilmm}

% \texttt{Official GitHub Repository:
% \url{https://github.com/wesselb/oilmm}}

\citet{bruinsma2020scalable} consider MOGP problems under the assumption that high-dimensional observations lie near a low-dimensional linear subspace. They first review the Instantaneous Linear Mixing Model (ILMM) framework, which is a special type of LMC but enjoys lower computational cost ($\mcal{O}(N^3L^3)$ instead of $\mcal{O}(N^3P^3)$) by using ``sufficient statistic'' to exploit the ``low-rank'' covariance structure. The key idea to achieve such a reduction in computational complexity is to use a ``projection'' matrix $\mbf{T}$ to transform the original MOGP with $P$ outputs to another MOGP with $L$ outputs. The authors further note that if the transformed $L$-dimensional MOGP can be decoupled into $L$ independent GPs, the computational complexity can be further reduced to $\mcal{O}(LN^3)$. They propose several assumptions on the mixing basis vectors and the observation noise matrix to achieve this goal. To support the handling of missing data, the authors illustrate several modifications to their method. One key limitation of their model is that it supports only the Gaussian likelihood, and it appears challenging to extend it to a general likelihood.

\paragraph{Instantaneous Linear Mixing Model (ILMM)} 
\begin{definition}
    Consider $L$ independent GPs $z_l(\cdot)$ collectively denoted as $z(\cdot)$, with kernels $k_{\theta_l}$ where $l \in \{1, 2, ..., L\}$, $\mbf{H}$ is a $P \times L$ matrix with linearly independent columns, and $\mbf{\Sigma}$ is a covariance matrix of observation noise $P \times P$. Then the ILMM is given by the following generative model:
    \begin{align*}
        z_{l} &\sim \mcal{GP}(0, k_{\theta_l}(\mbf{x}, \mbf{x}^{\prime})) \quad (\text{latent processes}) \\
        f(\mbf{x}) \mid \mbf{H}, z(\mbf{x}) &= \mbf{H} z(\mbf{x}) \quad (\text{mixing mechanism})\\
        \mbf{y} \mid f(\mbf{x}) &\sim \mcal{GP}(f(\mbf{x}), \delta[\mbf{x} - \mbf{x}^{\prime}] \mbf{\Sigma}) \quad (\text{noise model})
    \end{align*}
\end{definition}

\begin{remark} [Connections to LMC] For a set of inputs $\mbf{X}$, if we marginalise out $z(\mbf{X})$, we find that 
\begin{equation*}
    f(\mbf{X}) \sim \mcal{GP}\Big(0, \sum_{l=1}^{L} \mbf{H}_{:,l} \mbf{H}_{:, l}^{\top} \otimes k_{\theta_{l}}(\mbf{X}, \mbf{X})\Big),
\end{equation*}
where $\mbf{H}_{:,l} \in \mathbb{R}^{P}$ denotes the $l$-th column of the mixing matrix. In LMC, observation noise covariance $\mbf{\Sigma}$ is typically assumed to be diagonal.
\end{remark}

\begin{remark} [Connections to Factor Analysis] Recall that Factor Analysis (FA) is a latent variable model having $\mbf{z} \in \mathbb{R}^{L}; \mbf{c}, \mbf{\epsilon}, \mbf{y} \in \mathbb{R}^{P}$ and $\mbf{H} \in \mathbb{R}^{P \times L}$, and $\mbf{\Psi} \in \mathbb{R}^{P \times P} \text{is a diagonal matrix}$. FA is defined as follows:
\begin{equation*}
    \mbf{z} \sim \mcal{N}(\mbf{z} \mid 0, I_L), \quad \mbf{\epsilon} \sim \mcal{N}(\mbf{\epsilon} \mid 0, \mbf{\Psi}), \quad \mbf{z} \indep \mbf{\epsilon}, \quad \mbf{y} = \mbf{H} \mbf{z} + \mbf{c} + \mbf{\epsilon}, 
\end{equation*}
where $\mbf{c}$ is the bias term, which we assume is 0 for simplicity. In ILMM, if we choose a white noise kernel for 
each of the latent processes (i.e. $k_{\theta_l}(\mbf{x}, \mbf{x}^{\prime}) = \delta[\mbf{x} - \mbf{x}^{\prime}]$) and $\mbf{\Sigma}$ diagonal, then the ILMM model recovers FA exactly. In other words, the ILMM is a time-varying generalisation of FA.
\end{remark}

\begin{remark} The ILMM exhibits a low-rank covariance structure, which can be exploited by devising a low-dimensional ``summary'' or ``projection'' of the $P$-dimensional observations. They introduce a projection matrix $\mbf{T} \in \mathbb{R}^{L \times P}$, such that $\mbf{T} \mbf{y} \in \mathbb{R}^{L}$ is minimally sufficient. Specifically:
\begin{align*}
    \mbf{T} &= (\mbf{H}^{\top} \mbf{\Sigma}^{-1} \mbf{H})^{-1} \mbf{H}^{\top} \mbf{\Sigma}^{-1} \\
    p(\mbf{z} \mid \mbf{y}) &= p(\mbf{z} \mid \mbf{T} \mbf{y}),
\end{align*}
that is, conditioning on $\mbf{y}$ is equivalent to conditioning on $\mbf{T} \mbf{y}$, where $\mbf{T} \mbf{y}$ can be interpreted as a ``summary'' or ``projection'' of $\mbf{y}$. The proof can be found in Prop. 3 Appendix D of the original paper \cite{bruinsma2020scalable}.
\end{remark}

\begin{proposition}
\label{app:ilmm_properties}
    For the ILMM model equipped with $\mbf{T}$ defined above, consider the data matrix $\mbf{Y} \in \mathbb{R}^{P \times N}$, we have the following properties: 
    \begin{itemize}
        % \item $p(f(\mbf{X}) \mid \mbf{Y}) = p(f(\mbf{X}) \mid \mbf{T} \mbf{Y})$
        \item $\mbf{T} \mbf{y} \mid z(\mbf{x}) \sim \mcal{GP}(z(\mbf{x}), \delta[\mbf{x} - \mbf{x}^{\prime}] \mbf{\Sigma}_{T})$ with $\mbf{\Sigma}_T = (\mbf{H}^{\top} \mbf{\Sigma}^{-1} \mbf{H})^{-1} = \mbf{T} \mbf{\Sigma} \mbf{T}^{\top}$.
        \item $p(\mbf{Y}) = \left[ \prod_{n=1}^{N} \frac{\mcal{N}(\mbf{y}_n \mid 0, \mbf{\Sigma})}{\mcal{N}(\mbf{T} \mbf{y}_n \mid 0, \mbf{\Sigma}_{T})}\right] \int p\left(z(\mbf{X})\right) \prod_{n=1}^{N} \mcal{N}(\mbf{T}\mbf{y}_{n} \mid z(\mbf{x}_{n}), \mbf{\Sigma}_{T}) dz(\mbf{X}).$ 
    \end{itemize}
where the $n^{\text{th}}$ observation $\mbf{y}_n$ is the $n^{\text{th}}$ column of $\mbf{Y}$. Please refer to Appendix F of the original paper for the proof.
\end{proposition}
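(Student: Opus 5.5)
We prove the two properties of \cref{app:ilmm_properties} in order, using only Gaussian linear algebra and the definition $\mbf{T} = (\mbf{H}^{\top}\mbf{\Sigma}^{-1}\mbf{H})^{-1}\mbf{H}^{\top}\mbf{\Sigma}^{-1}$. Two facts are used throughout. First, $\mbf{T}\mbf{H} = \mbf{I}_L$. Second, $\mbf{T}\mbf{\Sigma}\mbf{T}^{\top} = (\mbf{H}^{\top}\mbf{\Sigma}^{-1}\mbf{H})^{-1}\mbf{H}^{\top}\mbf{\Sigma}^{-1}\mbf{\Sigma}\mbf{\Sigma}^{-1}\mbf{H}(\mbf{H}^{\top}\mbf{\Sigma}^{-1}\mbf{H})^{-1} = (\mbf{H}^{\top}\mbf{\Sigma}^{-1}\mbf{H})^{-1} = \mbf{\Sigma}_T$.

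\textbf{First property.} Conditionally on $z(\cdot)$, we have $\mbf{y}(\mbf{x}) = \mbf{H}z(\mbf{x}) + \boldsymbol{\epsilon}(\mbf{x})$. The noise $\boldsymbol{\epsilon}$ is white with covariance $\delta[\mbf{x}-\mbf{x}']\mbf{\Sigma}$. Applying the linear map $\mbf{T}$ gives $\mbf{T}\mbf{y}(\mbf{x}) = z(\mbf{x}) + \mbf{T}\boldsymbol{\epsilon}(\mbf{x})$. This is Gaussian with mean $z(\mbf{x})$ and covariance $\delta[\mbf{x}-\mbf{x}']\mbf{T}\mbf{\Sigma}\mbf{T}^{\top} = \delta[\mbf{x}-\mbf{x}']\mbf{\Sigma}_T$.

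\textbf{Second property.} Write $p(\mbf{Y}) = \int p(z(\mbf{X}))\prod_n \mcal{N}(\mbf{y}_n \mid \mbf{H}z_n, \mbf{\Sigma})\,dz(\mbf{X})$. The key step is a per-observation factorisation: for every $z\in\mathbb{R}^L$,
\begin{equation*}
\mcal{N}(\mbf{y}\mid \mbf{H}z,\mbf{\Sigma}) = \frac{\mcal{N}(\mbf{y}\mid 0,\mbf{\Sigma})}{\mcal{N}(\mbf{T}\mbf{y}\mid 0,\mbf{\Sigma}_T)}\,\mcal{N}(\mbf{T}\mbf{y}\mid z,\mbf{\Sigma}_T).
\end{equation*}
Once this holds, the ratio does not depend on $z$. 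It can therefore be pulled out of the integral, and taking the product over $n$ gives the stated formula.

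To prove the identity, we expand the exponent:
\begin{equation*}
-\tfrac12(\mbf{y}-\mbf{H}z)^{\top}\mbf{\Sigma}^{-1}(\mbf{y}-\mbf{H}z) = -\tfrac12\mbf{y}^{\top}\mbf{\Sigma}^{-1}\mbf{y} + z^{\top}\mbf{H}^{\top}\mbf{\Sigma}^{-1}\mbf{y} - \tfrac12 z^{\top}\mbf{\Sigma}_T^{-1}z.
\end{equation*}
Since $\mbf{H}^{\top}\mbf{\Sigma}^{-1} = \mbf{\Sigma}_T^{-1}\mbf{T}$, the middle term equals $z^{\top}\mbf{\Sigma}_T^{-1}\mbf{T}\mbf{y}$. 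Completing the square in $z$ then gives
\begin{equation*}
-\tfrac12\mbf{y}^{\top}\mbf{\Sigma}^{-1}\mbf{y} + \tfrac12(\mbf{T}\mbf{y})^{\top}\mbf{\Sigma}_T^{-1}\mbf{T}\mbf{y} - \tfrac12(\mbf{T}\mbf{y}-z)^{\top}\mbf{\Sigma}_T^{-1}(\mbf{T}\mbf{y}-z).
\end{equation*}
These three terms are exactly the exponents of $\mcal{N}(\mbf{y}\mid 0,\mbf{\Sigma})$, $1/\mcal{N}(\mbf{T}\mbf{y}\mid 0,\mbf{\Sigma}_T)$ and $\mcal{N}(\mbf{T}\mbf{y}\mid z,\mbf{\Sigma}_T)$.

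\textbf{Normalising constants (main obstacle).} The remaining work is to check that the normalising constants also match. On the right-hand side, the $(2\pi)^{L/2}|\mbf{\Sigma}_T|^{1/2}$ factors from the second and third Gaussians cancel. What remains is $(2\pi)^{-P/2}|\mbf{\Sigma}|^{-1/2}$, which is exactly the normaliser of the left-hand side. We would verify this bookkeeping carefully, since it is where errors creep in. No determinant lemma is needed because of this cancellation; the only requirement is that $\mbf{H}$ has full column rank, so that $\mbf{\Sigma}_T$ is invertible.
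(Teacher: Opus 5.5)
Your proof is correct. The first property is the image of the white-noise term under $\mbf{T}$, combined with $\mbf{T}\mbf{H}=\mbf{I}_L$ and $\mbf{T}\mbf{\Sigma}\mbf{T}^{\top}=\mbf{\Sigma}_T$. The second follows from your per-observation factorisation, and both your exponent check and your normaliser check are right. The paper does not prove this proposition itself; it defers to Appendix F of \citet{bruinsma2020scalable}. So your Gaussian-algebra argument is a self-contained proof, not a different route to one the paper gives.

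Two remarks may sharpen it. First, you can skip the normaliser step you flag as the main obstacle. The $z$-dependent parts of $\log\mcal{N}(\mbf{y}\mid\mbf{H}z,\mbf{\Sigma})$ and $\log\mcal{N}(\mbf{T}\mbf{y}\mid z,\mbf{\Sigma}_T)$ coincide: both equal $z^{\top}\mbf{\Sigma}_T^{-1}\mbf{T}\mbf{y}-\tfrac12 z^{\top}\mbf{\Sigma}_T^{-1}z$ up to $z$-free terms. Hence the ratio of the two densities is constant in $z$. Evaluating it at $z=0$ gives $\mcal{N}(\mbf{y}\mid 0,\mbf{\Sigma})/\mcal{N}(\mbf{T}\mbf{y}\mid 0,\mbf{\Sigma}_T)$ directly, which also explains why the prefactor has this form.

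Second, your leftover exponent links to Prop.~\ref{app:ilmm_interpretation_log_lik}. Since $\mbf{T}^{\top}\mbf{\Sigma}_T^{-1}\mbf{T}=\mbf{T}^{\top}\mbf{H}^{\top}\mbf{\Sigma}^{-1}$, you have $\mbf{y}^{\top}\mbf{\Sigma}^{-1}\mbf{y}-(\mbf{T}\mbf{y})^{\top}\mbf{\Sigma}_T^{-1}\mbf{T}\mbf{y}=\|\mbf{y}-\mbf{H}\mbf{T}\mbf{y}\|_{\mbf{\Sigma}}^{2}$. This is the ``data lost by projection'' term, which the paper obtains there via the orthogonal projector $\mbf{\Sigma}^{-1/2}\mbf{H}\mbf{T}\mbf{\Sigma}^{1/2}$. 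Your normaliser ratio supplies the rest of that proposition: the ``noise lost'' term $\tfrac12\log(|\mbf{\Sigma}|/|\mbf{\Sigma}_T|)$ and the $\tfrac{P-L}{2}\log 2\pi$ constant.
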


\begin{remark}
    Crucially, $\mbf{y}$ are $P$ dimensional, $\mbf{T} \mbf{y}$ are $L$ dimensional summaries, and generally, $L \ll P$. The ``sufficiency'' of statistic $\mbf{T} \mbf{y}$ can be used to make predictions on unseen $\mbf{x}_{*}$:
    \begin{align*}
        p(f(\mbf{x}_{*}) \mid \mbf{Y}) &= \int p(f(\mbf{x}_{*}) \mid f(\mbf{X})) p(f(\mbf{X}) \mid \mbf{Y}) df(\mbf{X}) \\
        &= \int p(f(\mbf{x}_{*}) \mid f(\mbf{X})) p(f(\mbf{X}) \mid z(\mbf{X})) \textcolor{blue}{p(z(\mbf{X}) \mid \mbf{Y})} df(\mbf{X}) dz(\mbf{X}) \\
        &= \int p(f(\mbf{x}_{*}) \mid f(\mbf{X})) p(f(\mbf{X}) \mid z(\mbf{X})) \textcolor{blue}{p(z(\mbf{X}) \mid \mbf{T} \mbf{Y})} df(\mbf{X}) dz(\mbf{X}) \\
        &= \int p(f(\mbf{x}_{*}) \mid z(\mbf{x}_{*})) p(z(\mbf{x}_{*}) \mid z(\mbf{X})) p(z(\mbf{X}) \mid \mbf{T} \mbf{Y}) dz(\mbf{x}_{*}) dz(\mbf{X}) \\
        &= \int p(f(\mbf{x}_{*}) \mid z(\mbf{x}_{*})) p(z(\mbf{x}_{*}) \mid \mbf{T} \mbf{Y}) dz(\mbf{x}_{*}),
    \end{align*}
\end{remark}
where the last equation can be interpreted as follows: first, project the data $\mbf{Y}$ and compute the predictive posterior $p(z(\mbf{x}_{*}) \mid \mbf{T} \mbf{Y})$ in the $L$-dimensional space, and then map $z(\mbf{x}_{*})$ back to the $P$ dimensional $f(\mbf{x}_{*})$.

\begin{remark}
    The first property in Prop. \ref{app:ilmm_properties} gives the ``likelihood'' model of the latent processes. As the prior distributions of $z(\mbf{x})$ are independent GPs, $p(z(\mbf{x}) \mid \mbf{T} \mbf{y}) \propto p(z(\mbf{x})) p(\mbf{T} \mbf{y} \mid z(\mbf{x}))$ can be computed by the posterior equation of GP. Notably, $\mbf{\Sigma}_{T}$ is the projected observation noise, which is important as it couples the latent processes upon observing data. In particular, if the latent processes are independent under the prior and $\mbf{\Sigma}_{T}$ is diagonal, then the latent processes remain independent when data is observed. This observation forms the basis of the computational gains achieved by the Orthogonal Instantaneous Linear Mixing Model.
\end{remark}

\begin{proposition} [Interpretation of Log Likelihood] 
\label{app:ilmm_interpretation_log_lik}
Prop. \ref{app:ilmm_properties} shows that the log-probability of the data $\mbf{Y}$ is equal to the log probability of the projected data $\mbf{T} \mbf{Y}$ plus, for every observation $\mbf{y}_{n}$, a ``correction'' term of the form $\log \mcal{N}(\mbf{y}_n \mid 0, \mbf{\Sigma}) - \log \mcal{N}(\mbf{T} \mbf{y}_{n} \mid 0, \mbf{\Sigma}_{T})$. That is
\begin{align*}
    \log p(\mbf{Y}) &= \log \int p(z(\mbf{X})) \prod_{n=1}^{N} \mcal{N}(\mbf{T} \mbf{y}_{n} \mid z(\mbf{x}_n), \mbf{\Sigma}_{T}) dz(\mbf{x}_n) + \sum_{n=1}^{N} \Big[\log \mcal{N}(\mbf{y}_{n} \mid 0, \mbf{\Sigma}) - \log \mcal{N}(\mbf{T} \mbf{y}_{n} \mid 0, \mbf{\Sigma}_{T}) \Big] \\
    &= \log \int p(z(\mbf{X})) \prod_{n=1}^{N} \mcal{N}(\mbf{T} \mbf{y}_{n} \mid z(\mbf{x}_n), \mbf{\Sigma}_{T}) dz(\mbf{x}_{n}) - \underbrace{\frac{1}{2} \sum_{n=1}^{N} ||\mbf{y}_{n} - \mbf{H} \mbf{T} \mbf{y}_{n}||^{2}_{\mbf{\Sigma}}}_{\text{data ``lost'' by projection}} - \underbrace{\frac{N}{2} \log \frac{|\mbf{\Sigma}|}{|\mbf{\Sigma}_{T}|}}_{\text{noise ``lost'' }} - \frac{N(P - L)}{2} \log 2 \pi,
\end{align*}
where $||\cdot||_{\mbf{\Sigma}} = ||\mbf{\Sigma}^{-\frac{1}{2}} \cdot||$. When the likelihood is optimised with respect to $\mbf{H}$, the correction terms will prevent the projection $\mbf{T}$ from discarding a component of the data $\mbf{Y}$ and the noise $\mbf{\Sigma}$ that is “too large”. For example, for the ILMM, if these correction terms were ignored, then after optimising we would find that $\mbf{T}\mbf{Y} = 0$ and $\mbf{\Sigma}_{T} = 0$, because the density of a zero-mean Gaussian is highest at the origin, and becomes higher as the variance becomes smaller; it is exactly $\mbf{TY} = 0$ and $\mbf{\Sigma}_{T} = 0$ that the two penalties prevent from happening.
\begin{proof}
    By the definition of multivariate Gaussian density:
    \begin{align*}
        \log \mcal{N}(\mbf{y}_n \mid 0, \mbf{\Sigma}) &= -\frac{P}{2} \log 2 \pi - \frac{1}{2} \log |\mbf{\Sigma}| - \frac{1}{2} \mbf{y}_{n}^{\top} \mbf{\Sigma}^{-1} \mbf{y}_n \\
        \log \mcal{N}(\mbf{T}\mbf{y}_n \mid 0, \mbf{\Sigma}_{T}) &= - \frac{L}{2} \log 2 \pi - \frac{1}{2} \log |\mbf{\Sigma}_T| - \frac{1}{2} \mbf{y}_n^{\top} \mbf{T}^{\top} \mbf{\Sigma}^{-1}_{T} \mbf{T} \mbf{y}_{n} \\
        &= - \frac{L}{2} \log 2 \pi - \frac{1}{2} \log |\mbf{\Sigma}_T| - \frac{1}{2} \mbf{y}_n^{\top} \mbf{T}^{\top} \left(\mbf{H}^{\top} \mbf{\Sigma}^{-1} \mbf{H} \right) \mbf{T} \mbf{y}_{n},
    \end{align*}
    Thus, 
    \begin{align*}
        \log \mcal{N}(\mbf{y}_n \mid 0, \mbf{\Sigma}) - \log \mcal{N}(\mbf{T}\mbf{y}_n \mid 0, \mbf{\Sigma}_{T}) &= -\frac{(P - L)}{2} \log 2 \pi - \frac{1}{2} \log \frac{|\mbf{\Sigma}|}{|\mbf{\Sigma}_{T}|} - \frac{1}{2} \mbf{y}_n^{\top} \left( \mbf{\Sigma}^{-1} - \mbf{T}^{\top} \mbf{H}^{\top} \mbf{\Sigma}^{-1} \mbf{H} \mbf{T} \right) \mbf{y}_n,
    \end{align*}
    For the last term:
    \begin{equation*}
        \mcal{T}_{\text{quadratic}} = \mbf{y}_n^{\top} \left( \mbf{\Sigma}^{-1} - \mbf{T}^{\top} \mbf{H}^{\top} \mbf{\Sigma}^{-1} \mbf{H} \mbf{T} \right) \mbf{y}_n = (\mbf{\Sigma}^{-\frac{1}{2}} \mbf{y}_n)^{\top} \left[I_P - \mbf{\Sigma}^{\frac{1}{2}} \mbf{T}^{\top} \mbf{H}^{\top} \mbf{\Sigma}^{-1} \mbf{H} \mbf{T} \mbf{\Sigma}^{\frac{1}{2}} \right] (\mbf{\Sigma}^{-\frac{1}{2}} \mbf{y}_n).
    \end{equation*}
    We realise that $\mbf{P} = \mbf{\Sigma}^{-\frac{1}{2}} \mbf{H} \mbf{T} \mbf{\Sigma}^{\frac{1}{2}}$ is an orthogonal projection matrix onto column space of $\mbf{\Sigma}^{-\frac{1}{2}} \mbf{H}$ by showing that:
    \begin{equation*}
        \mbf{P}^2 = \mbf{P} \quad \text{and} \quad \mbf{P}^{\top} = \mbf{P}.
    \end{equation*}
    \begin{align*}
        \mbf{P}^2 &= (\mbf{\Sigma}^{-\frac{1}{2}} \mbf{H} \mbf{T} \mbf{\Sigma}^{\frac{1}{2}}) (\mbf{\Sigma}^{-\frac{1}{2}} \mbf{H} \mbf{T} \mbf{\Sigma}^{\frac{1}{2}}) = \mbf{\Sigma}^{-\frac{1}{2}} \mbf{H} \mbf{T} \mbf{\Sigma}^{\frac{1}{2}} = \mbf{P} \quad \text{where we used the fact that} \quad \mbf{T}\mbf{H} = I_L \\
        \mbf{P}^{\top} &= \mbf{\Sigma}^{\frac{1}{2}} \textcolor{blue}{\mbf{T}^{\top}} \mbf{H}^{\top} \mbf{\Sigma}^{-\frac{1}{2}} \\
        &= \mbf{\Sigma}^{\frac{1}{2}} \textcolor{blue}{\mbf{\Sigma}^{-1} \mbf{H} (\mbf{H}^{\top} \mbf{\Sigma}^{-1} \mbf{H})^{-1}} \mbf{H}^{\top} \mbf{\Sigma}^{-\frac{1}{2}} \quad \text{where we use the definition of } \mbf{T} \\
        &= \mbf{\Sigma}^{-\frac{1}{2}} \mbf{H} \textcolor{orange}{(\mbf{H}^{\top} \mbf{\Sigma}^{-1} \mbf{H})^{-1} \mbf{H}^{\top} \mbf{\Sigma}^{-1}} \mbf{\Sigma}^{\frac{1}{2}} \\
        &= \mbf{\Sigma}^{-\frac{1}{2}} \mbf{H} \textcolor{orange}{\mbf{T}} \mbf{\Sigma}^{\frac{1}{2}}.
    \end{align*}
    Therefore,
\begin{align*}
    \mcal{T}_{quadratic} &= (\mbf{\Sigma}^{-\frac{1}{2}} \mbf{y}_n)^{\top} \left[ I_P - \mbf{P}^{\top} \mbf{P} \right] (\mbf{\Sigma}^{-\frac{1}{2}} \mbf{y}_n) \\
    &= (\mbf{\Sigma}^{-\frac{1}{2}} \mbf{y}_n)^{\top} \left[ I_P - \mbf{P} \right] (\mbf{\Sigma}^{-\frac{1}{2}} \mbf{y}_n) \\
    &\quad  \text{where we note that $(I_P - \mbf{P})$ is also orthogonal projection} \\
    &= (\mbf{\Sigma}^{-\frac{1}{2}} \mbf{y}_n)^{\top} \left[ (I_P - \mbf{P})^{\top} (I_P - \mbf{P}) \right] (\mbf{\Sigma}^{-\frac{1}{2}} \mbf{y}_n) \\
    &= ||(I_P - \mbf{P}) \mbf{\Sigma}^{-\frac{1}{2}} \mbf{y}_n||^2 \\
    &= ||\mbf{\Sigma}^{-\frac{1}{2}} (I_P - \mbf{\Sigma}^{\frac{1}{2}} \mbf{P} \mbf{\Sigma}^{-\frac{1}{2}}) \mbf{y}_{n}||^{2} \\
    &= ||(I_P - \mbf{H} \mbf{T}) \mbf{y}_n||^2_{\mbf{\Sigma}} \quad \text{where we use the definition of } \mbf{P} \\
    &= ||\mbf{y}_n - \mbf{H} \mbf{T} \mbf{y}_n ||^2_{\mbf{\Sigma}}.
\end{align*}
\end{proof}
\end{proposition}

\paragraph{Orthogonal Instantaneous Linear Mixing Model (OILMM)}

\begin{definition}
\label{app:oilmm_def}
    The OILMM is an ILMM where the basis $\mbf{H}$ is a $P \times L$ matrix of the form $\mbf{H} = \mbf{U} \mbf{S}^{\frac{1}{2}}$ with $\mbf{U} \in \mathbb{R}^{P \times L}$ a matrix with orthonormal columns and $\mbf{S} > 0$ is $L \times L$ diagonal matrix, and $\mbf{\Sigma} = \sigma_y^2 I_{P} + \mbf{H} \mbf{D} \mbf{H}^{\top}$ is a $P \times P$ matrix with $\mbf{D} \geq 0$ being a $L \times L$ diagonal matrix.
\end{definition}

\begin{remark}
    In OILMM, we require $L \leq P$, since the number of $P$-dimensional vectors that can be mutually orthogonal is at most $P$. Also, the $\mbf{H} \mbf{D} \mbf{H}^{\top}$ term in $\mbf{\Sigma}$ can be interpreted as heterogeneous noise deriving from the latent processes. Alternatively, this term can be discarded if we add $\delta[\mbf{x} - \mbf{x}^{\prime}] \mbf{D}_{ll}$ as an additional term to $k_{\theta_l}$ of latent GPs, which is the approach we choose in our implementation.
\end{remark}

\begin{proposition}
    Consider the OILMM model, the projection and projected noise are given by
    \begin{equation*}
        \mbf{T} = \mbf{S}^{-\frac{1}{2}} \mbf{U}^{\top} \quad \text{and} \quad \mbf{\Sigma}_{T} = \sigma_y^2 \mbf{S}^{-1} + \mbf{D}.
    \end{equation*}
\begin{proof}
    We use the technique that the term $\mbf{D}$ in $\mbf{\Sigma}$ can be absorbed into the kernel matrices of latent GPs. That is, $\mbf{D}$ can be assumed to be 0 for the sake of simplicity in the reasoning below. 

    By definition, 
    \begin{equation*}
        \mbf{H}^{\top} \mbf{\Sigma}^{-1} \mbf{H} = \mbf{S}^{\frac{1}{2}} \mbf{U}^{\top} \sigma_y^{-2} \mbf{U} \mbf{S}^{\frac{1}{2}} = \sigma_y^{-2} \mbf{S} 
    \end{equation*}
    \begin{equation*}
        \mbf{T} = (\mbf{H}^{\top} \mbf{\Sigma}^{-1} \mbf{H})^{-1} \mbf{H}^{\top} \mbf{\Sigma}^{-1} = (\sigma_y^2 \mbf{S}^{-1}) (\sigma_y^{-2} \mbf{S}^{\frac{1}{2}} \mbf{U}^{\top}) = \mbf{S}^{-\frac{1}{2}} \mbf{U}^{\top}
    \end{equation*}
    \begin{equation*}
        \mbf{\Sigma}_{T} = (\mbf{H}^{\top} \mbf{\Sigma}^{-1} \mbf{H})^{-1} = \sigma_y^2 \mbf{S}^{-1}.
    \end{equation*}
Finally, ``pull $\mbf{D}$ back out of kernel matrix of latent GPs'', which we note is equivalent to adding it to $\mbf{\Sigma}_{T}$.
\end{proof}
\end{proposition}

\begin{remark} [Diagonal Projected Noise] As alluded to in the previous sections, under the OILMM, the projected noise $\mbf{\Sigma}_{T}$ is diagonal: $\mbf{\Sigma}_{T} = \sigma_y^2 \mbf{S}^{-1} + \mbf{D}$. This property enables the model to decompose the high-dimensional multi-output problem into independent single-output problems, yielding significant computational advantages.
\end{remark}

\begin{proposition} [Log Likelihood] For computing the log marginal likelihood, the OILMM offers computational benefits:
\begin{align*}
    \log p(\mbf{Y}) &= \underbrace{\sum_{l=1}^{L} \log \mcal{N}((\mbf{T} \mbf{Y})_{l:} \mid 0, k_{\theta_{l}}(\mbf{X}, \mbf{X}) + (\sigma_y^2/\mbf{S}_{ll} + \mbf{D}_{ll})I_{N})}_{\text{sum of Log Marginal Likelihood of Independent latent GPs}} \\ 
    &\quad - \left[ \underbrace{{\frac{N}{2} \log |\mbf{S}| + \frac{N(P-L)}{2} \log 2 \pi \sigma_y^2 + \frac{1}{2} \sum_{n=1}^{N} ||(I_P - \mbf{H} \mbf{T})\mbf{y}_{n}||_{\sigma_y^2 I_P}^2}}_{\text{regularisation term}} \right].
\end{align*}
\begin{proof}
    We start with the general result for ILMM in Prop. \ref{app:ilmm_interpretation_log_lik}. Using the same technique which assumes $\mbf{D}=0$ by absorbing it into the kernel functions $k_{\theta_l}$:
    \begin{equation*}
        \log \frac{|\mbf{\Sigma}|}{|\mbf{\Sigma}_{T}|} = \log \frac{|\sigma_y^2 I_{P}|}{|\sigma_y^2 \mbf{S}^{-1}|} = (P - L) \log \sigma_y^2 + \log |\mbf{S}|.
    \end{equation*}
    Thus, the term $||(I_P - \mbf{H} \mbf{T})\mbf{y}_{n}||_{\sigma_y^2 I_P}^2$ can be computed by $\sigma_y^{-2} * [\mbf{y}_n - \mbf{H}(\mbf{T} \mbf{y}_n)].\text{square}().\text{sum}()$ in \texttt{PyTorch}.
\end{proof}
\end{proposition}

\paragraph{Missing data}

\begin{proposition}
    For OILMM, consider $P_{o}$ observed output $\mbf{y}_{o} \in \mathbb{R}^{P_{o}}$, which is a subset of all outputs $\mbf{y}$. Therefore, $\mbf{U}_{o} \in \mathbb{R}^{P_o \times L}$ denotes the selected rows from $\mbf{U}$, and $\mbf{H}_{o} = \mbf{U}_{o} \mbf{S}^{\frac{1}{2}}$, $\mbf{\Sigma}_{o} = \sigma_y^2 I_{P_o} + \mbf{H}_{o} \mbf{D} \mbf{H}_{o}^{\top}$.
    
    The projection and projected noise are given by
    \begin{equation*}
        \mbf{T}_{o} = \mbf{S}^{-\frac{1}{2}} (\mbf{U}_{o}^{\top} \mbf{U}_{o})^{-1} \mbf{U}_{o}^{\top} = \mbf{S}^{-\frac{1}{2}} \mbf{U}_{o}^{\dagger}
    \end{equation*}
    \begin{equation*}
        \mbf{\Sigma}_{T_o} \approx \sigma_y^2 \mbf{S}^{-\frac{1}{2}} \text{diag}\Big[(\mbf{U}_{o}^{\top} \mbf{U}_{o})^{-1}\Big] \mbf{S}^{-\frac{1}{2}} + \mbf{D},
    \end{equation*}
    where $\mbf{U}_{o}^{\dagger}$ is the pseudo-inverse of $\mbf{U}_{o}$.

    \begin{proof}
        The proof follows the definition that:
        \begin{align*}
            \mbf{T}_{o} &= (\mbf{H}_{o}^{\top} \mbf{\Sigma}_{o}^{-1} \mbf{H}_{o})^{-1} \mbf{H}_{o}^{\top} \mbf{\Sigma}_{o}^{-1}, \\
            \mbf{\Sigma}_{T_o} &= (\mbf{H}_{o}^{\top} \mbf{\Sigma}_{o}^{-1} \mbf{H}_{o})^{-1}.
        \end{align*}
    Again, we assume $\mbf{D} = 0$ by absorbing it into the kernel matrix, and note that 
    \begin{align*}
        \mbf{H}_{o}^{\top} \mbf{\Sigma}_{o}^{-1} \mbf{H}_{o} &= \sigma_y^{-2} \mbf{S}^{\frac{1}{2}} \mbf{U}_{o}^{\top} \mbf{U}_{o} \mbf{S}^{\frac{1}{2}},
    \end{align*}
    So, 
    \begin{align*}
        \mbf{\Sigma}_{T_o} = (\mbf{H}_{o}^{\top} \mbf{\Sigma}_{o}^{-1} \mbf{H}_{o})^{-1} = \sigma_y^2 \mbf{S}^{-\frac{1}{2}}(\mbf{U}_{o}^{\top} \mbf{U}_{o})^{-1} \mbf{S}^{-\frac{1}{2}}.
    \end{align*}
    % Don't forget to pull $\mbf{D}$ back out of kernel matrices,
    We recall that $\mbf{D}$ must be extracted from the kernel matrices, which corresponds to adding it to $\mbf{\Sigma}_{T_o}$.
    However, note that $\mbf{\Sigma}_{T_o}$ is dense, because, unlike $\mbf{U}$, the columns of $\mbf{U}_{o}$ are not orthogonal. However, they may be approximately orthogonal, which motivates the approximation 
    \begin{equation*}
        \mbf{\Sigma}_{T_o} = \sigma_y^2 \mbf{S}^{-\frac{1}{2}}(\mbf{U}_{o}^{\top} \mbf{U}_{o})^{-1} \mbf{S}^{-\frac{1}{2}} + \mbf{D} \approx \sigma_y^2 \mbf{S}^{-\frac{1}{2}}\text{diag}\Big[(\mbf{U}_{o}^{\top} \mbf{U}_{o})^{-1}\Big] \mbf{S}^{-\frac{1}{2}} + \mbf{D}.
    \end{equation*}    
    Moreover,
    \begin{align*}
        \mbf{T}_o = (\mbf{H}_{o}^{\top} \mbf{\Sigma}_{o}^{-1} \mbf{H}_{o})^{-1} \mbf{H}_{o}^{\top} \mbf{\Sigma}_{o}^{-1} = (\sigma_y^2 \mbf{S}^{-\frac{1}{2}}(\mbf{U}_{o}^{\top} \mbf{U}_{o})^{-1} \mbf{S}^{-\frac{1}{2}}) (\sigma_y^{-2}\mbf{S}^{\frac{1}{2}} \mbf{U}_{o}^{\top}) = \mbf{S}^{-\frac{1}{2}} \mbf{U}_{o}^{\dagger}.
    \end{align*}
    \end{proof}
\end{proposition}

\begin{proposition} [Log Likelihood under missing data] Denote $\mbf{Y}_o \in \mathbb{R}^{P_o \times N_o}$, where $P_o \leq P$ and $N_o \leq N$. 
\begin{align*}
    \log p(\mbf{Y}_o) &\approx \underbrace{\sum_{l=1}^{L} \log \mcal{N}\left(\left(\mbf{T}_{o}\mbf{Y}_{o}\right)_{l:} \mid 0, k_{\theta_l}\left(\mbf{X}, \mbf{X}\right) + \left(\sigma_y^2 * \left(\frac{\text{diag}[\left(\mbf{U}_o^{\top} \mbf{U}_o\right)^{-1}]}{\mbf{S}}\right)_{ll} + \mbf{D}_{ll}\right)I_{N_{o}} \right)}_{\text{sum of Log Marginal Likelihood of Independent Latent GPs}} \\
    &\quad - \underbrace{\left[\frac{N_o}{2} \log |\mbf{S}| - \frac{N_o}{2} \log \left|\text{diag}\Big[(\mbf{U}_{o}^{\top} \mbf{U}_{o})^{-1}\Big]\right|  + \frac{N_o(P_o - L)}{2} \log 2 \pi \sigma_y^2 + \frac{1}{2} \sum_{n=1}^{N_o} ||\mbf{y}_n - \mbf{H}_{o} \mbf{T}_{o} \mbf{y}_n||_{\sigma_y^2 I_{P_{o}}}^2\right]}_{\text{regularisation term}}.
\end{align*}
\begin{proof}
    The approximation arises from the misspecification of $\mbf{\Sigma}_{T_o}$, as $\mbf{\Sigma}_{T_o}$ is dense, which couples the latent processes. For efficient computation, we discard the off-diagonal entries of $(\mbf{U}_o^{\top} \mbf{U}_o)^{-1}$, thereby enforcing independence among the latent GPs. Proceeding from Prop. \ref{app:ilmm_interpretation_log_lik}, $\mbf{D}$ is assumed to be $0$ by absorbing it into the kernel matrix:
    \begin{align*}
        \log \frac{\left|\mbf{\Sigma}_o\right|}{\left|\mbf{\Sigma}_{T_o}\right|} &= \log \frac{\left|\sigma_y^2 I_{P_o}\right|}{\left|\sigma_y^2 \mbf{S}^{-\frac{1}{2}} \text{diag}\Big[(\mbf{U}_{o}^{\top} \mbf{U}_{o})^{-1} \Big] \mbf{S}^{-\frac{1}{2}}\right|} = \log \frac{\left|\sigma_y^2 I_{P_o}\right|}{\left|\sigma_y^2 I_{L} \mbf{S}^{-1} \text{diag}\Big[(\mbf{U}_{o}^{\top} \mbf{U}_{o})^{-1}\Big]\right|} \\
        &= (P_o - L) \log \sigma_y^2 + \log \left|\mbf{S}\right| - \log \left|\text{diag}\Big[(\mbf{U}_{o}^{\top} \mbf{U}_{o})^{-1}\Big]\right|. 
    \end{align*}
\end{proof}
\end{proposition}

\begin{remark}
    Standard scalable GP techniques (e.g. SVGP) can be used to approximate the log marginal likelihood terms, with the Gaussian likelihood noise $\mbf{\Sigma}_{T_{o}} \in \mathbb{R}^{L \times L}$ diagonal. By default, SVGPs are employed as latent processes.
    \begin{align*}
        \log p(\mbf{Y}_o) &\approx \sum_{l=1}^{L} \log p\big((\mbf{T}_{o} \mbf{Y}_{o})_{l:}\big) - \sum_{n=1}^{N_o} n^{th} \; \text{regularisation term} \\
        &\geq \sum_{l=1}^{L} \text{ELBO}((\mbf{T}_{o} \mbf{Y}_{o})_{l:}) - \sum_{n=1}^{N_o} n^{th} \; \text{regularisation term}.
    \end{align*}
\end{remark}

\paragraph{Prediction} The procedure of making predictions for all $P$ outputs on an unseen location $\mbf{x}_{*}$ can be broken down into two steps: latent GP prediction and reconstruction. The first step is to obtain $L$ SVGP predictive means and variances at $\mbf{x}_{*}$, which we denote as $\mbf{\mu}({\mbf{x}_{*}}) \in \mathbb{R}^{L}$ and $\mbf{\nu}(\mbf{x}_{*}) \in \mathbb{R}^{L}$. Then, we obtain the predictive mean and variance in observation space by
\begin{align*}
    \text{predictive mean} &= \mbf{H} \mbf{\mu} \in \mathbb{R}^{P} ,\\
    \text{predictive marginal variances} &= (\mbf{H} \circ \mbf{H}) \mbf{\nu} + \sigma_y^2 1_P + \text{diag}(\mbf{H} \mbf{D} \mbf{H}^{\top}) \in \mathbb{R}^{P},
\end{align*}
where $\circ$ denotes the Hadamard product.

\section{Methodological Comparison with Baseline Models}
\label{app:method_comparison_with_baselines}

This section provides a comparison between the T-LVMOGP and baseline models from a methodological perspective.
\paragraph{Compare to SV-LMC and GS-LVMOGP:} In SV-LMC, the output functions are generated from some shared latent GPs, inducing a low-rank coregionalisation structure. GS-LVMOGP generalises it by replacing the linear kernel with any kernel function. However, both models inherit a sum-of-separable cross-output covariance structure, which might limit the modelling power. T-LVMOGP constructs cross-output correlations more flexibly by leveraging a learnt embedding space paired with a base kernel. This approach avoids rigid structural assumptions while maintaining computational efficiency. Furthermore, in Appendix \ref{app:connections_to_prior_lv_mogp_models} we demonstrate that the kernel of T-LVMOGP generalises that of GS-LVMOGP, underscoring T-LVMOGP's superior capacity.

\paragraph{Compare to OILMM:} As in Appendix \ref{app:review_of_oilmm}, OILMM is a special type of LMC with a lower computational cost by exploiting its structure. OILMM relies on restrictive assumptions about the basis vectors and the observation noise matrix, which may limit its expressiveness. OILMM lacks native support for missing data, and adopting it for such cases requires modifications that undermine its computational benefits. OILMM is strictly tied to a Gaussian likelihood, limiting its applicability to non-Gaussian data. T-LVMOGP reformulates MOGPs as scalar GPs; it naturally supports missing data by marginalising them out in the ELBO. T-LVMOGP accommodates non-Gaussian likelihood, such as the ZINB likelihood for Spatial Transcripomics.

\paragraph{Compare to SGPRN:} Though SGPRN is computationally efficient when the output space has a grid structure by exploiting tensor algebra, its applicability is restricted, as many real-world datasets cannot be naturally represented as multidimensional arrays. Moreover, SGPRN has cubic complexity with respect to the number of inputs $N$, limiting its scalability to large datasets. T-LVMOGP does not rely on explicit structural assumptions for cross-output covariances. Instead, it constructs covariances in a more flexible manner via an embedding space and a base kernel. We develop a scalable procedure that supports mini-batch training on both input and output, allowing T-LVMOGP to scale to datasets with large $N$ and $P$.

\paragraph{Compare to G-MOGP:}

G-MOGP employs an attention mechanism to model cross-output correlations, which requires constructing $P \times P$ attention weight matrices. As $P$ grows, this introduces substantial memory and computational overhead. Moreover, G-MOGP uses an independent SVGP for each output, each of which introduces $MD_X + M + M^2$ variational parameters. When $P$ is large, the total number of variational parameters becomes substantial, which further increases the training cost. T-LVMOGP defines the variational distribution $q(\mbf{u})$ over the learned embedding space, shared across all outputs. This shared parameterisation can be viewed as an amortised variational construction, thereby improving parameter efficiency.

\section{Limitations and Future Work}
\label{app:limitations_and_future_work}

In the current formulation, we employ a mean-field approximation for the variational distribution of the latent variables, factorising $q(\mbf{H})$ across outputs to ensure computational efficiency. While this enhances scalability, it may limit posterior expressiveness by ignoring potential coupling among outputs. Future work could address this trade-off by adopting structured variational distributions \citep{NEURIPS2018_d3157f2f, NEURIPS2020_310cc7ca}, which capture complex dependencies without incurring prohibitive computational costs. Furthermore, although the parameter complexity of $q(\mbf{H})$ scales linearly with the number of outputs $P$, future work could investigate imposing additional constraints within the latent space, such as amortisation \cite{kingma2014autoencoding}. Such constraints would reduce the parameterisation required for estimation, thereby further improving scalability for high-dimensional output spaces.

\end{document}